\documentclass[twoside,11pt]{article}

\usepackage{blindtext}

%

%
%
%
 \usepackage[abbrvbib, preprint]{jmlr2e}

\usepackage{amsmath}


\usepackage{caption}
\usepackage{subcaption}
\usepackage{graphicx}
\usepackage{bbm}
\usepackage{dsfont}
\usepackage{xcolor}
\usepackage{bbm}
\usepackage{enumitem}
\usepackage[ruled, vlined]{algorithm2e}

\usepackage{algpseudocode}%




 
\DeclareMathOperator*{\argmin}{arg\,min}

\newtheorem{assumption}{Assumption}



\newcommand{\eps}{\varepsilon}

\usepackage{lastpage}
\jmlrheading{25}{2025}{1-\pageref{LastPage}}{1/19}{}{}{Carlos Misael Madrid Padilla, Shitao Fan and Lizhen Lin}


\ShortHeadings{Robust and Scalable Variational Bayes}{Madrid Padilla, Fan and Lin}
\firstpageno{1}

\begin{document}

\title{Robust and Scalable Variational Bayes}

\author{\name Carlos Misael Madrid Padilla \email carlos.madrid@cimat.mx \\
       \addr Department of Statistics and Data Science\\
       Washington University in St. Louis\\
       St. Louis, MO 63130, USA
       \AND
       \name Shitao Fan \email sfan211@umd.edu \\
       \addr Department of Mathematics, \\
       University of Maryland\\
       College Park, MD 20742-4015, USA
       \AND
       \name Lizhen Lin \email lizhen01@umd.edu \\
       \addr Department of Mathematics, \\
       University of Maryland\\
       College Park, MD 20742-4015, USA
       }
\editor{}

\maketitle

\begin{abstract}
We propose a robust and scalable framework for variational Bayes (VB) that effectively handles outliers and  contamination of arbitrary nature in large datasets. Our approach divides the dataset into disjoint subsets, computes the posterior for each subset, and applies VB approximation independently to these posteriors. The resulting  variational posteriors with respect to the subsets are then aggregated using the geometric median of probability measures, computed with respect to the Wasserstein distance. This novel aggregation method yields the \emph{Variational Median Posterior} (VM-Posterior) distribution.  
We rigorously demonstrate that the VM-Posterior preserves contraction properties akin to those of the true posterior,  while accounting for approximation errors or the variational gap inherent in VB methods. We  also provide provable robustness guarantee of the VM-Posterior. Furthermore, we establish a variational Bernstein–von Mises theorem for both multivariate Gaussian distributions with general covariance structures and the mean-field variational family. To facilitate practical implementation, we adapt existing algorithms for computing the VM-Posterior and evaluate its performance through extensive numerical experiments. The results highlight its robustness and scalability, making it a reliable tool for Bayesian inference in the presence of complex, contaminated datasets.
\end{abstract}

\begin{keywords}
  Scalable Bayesian Inference, Robust Variational Bayes, Variational Median Posterior, Variational Bernstein-von Mises Theorem, Contraction Rates
\end{keywords}

\section{Introduction}
Bayesian inference is a foundational paradigm in  statistics and machine learning, providing a rigorous framework for probabilistic modeling of unknown parameters and making predictions under uncertainty. Despite its theoretical appeal,  practical application of Bayesian methods can face challenges due to the reliance on Markov Chain Monte Carlo (MCMC) methods for sampling the posterior distributions for inference, which is often computationally intractable, especially when dealing with large datasets and complicated models. 


To address these limitations, variational Bayes (VB) has emerged as a computationally efficient alternative. VB approximates the posterior distribution by optimizing over a simpler family of distributions, effectively transforming the MCMC sampling problem into a tractable optimization  problem. This shift significantly reduces computational costs, making VB particularly suitable for large-scale data applications.
Theoretical guarantees for VB methods have also advanced recently (see \cite{wang2019frequentist}, \cite{zhang2020convergence}, \cite{yang2020alpha} and \cite{ohn2024adaptive} ). These developments have further cemented VB as a practical and scalable solution that's backed up by theory. VB's computational efficiency and adaptability have driven its adoption across diverse fields such as natural language processing [e.g.  \cite{bowman2015generating}, \cite{li2021prefix}], Bayesian deep learning [e.g \cite{nazaret2022variational},  \cite{sen2024bayesian}], genomics and bioinformatics [e.g. \cite{raj2014faststructure}, \cite{komodromos2022variational}], and healthcare analytics [e.g. \cite{zabad2023fast}, \cite{koh2024variational}].

Despite the computational efficiency of VB, handling outliers and contamination remains a significant challenge in the VB framework. An outlier—following \cite{box1968bayesian}—is an observation suspected of being partially or entirely irrelevant because it does not conform to the assumed stochastic model. 
Outliers and contaminations disrupt statistical inference, often introducing bias or compromising the accuracy of results. Traditional approaches to handling outliers, particularly in point estimation (e.g., \cite{Huber2011} and \cite{law1986robust}), have achieved considerable success. However, Bayesian methods, including those within the VB framework, frequently rely on strong distributional assumptions or preprocessing techniques to mitigate the effects of outliers. For instance,  \cite{giordano2018covariances}, while not explicitly focuses on outliers,    highlights how VB methods can misestimate variances and covariances when outliers are present, emphasizing the critical role of data-cleaning decisions.
Similarly, \cite{futami2018variational} utilizes a robust VB method where outliers are handled using a heavy-tailed distribution to directly account for anomalies in the data, improving the robustness of the inference.  
Additionally, \cite{JIN2012423} employs a heavy-tailed t-distribution in a hierarchical variational framework to solve inverse problems in the presence of outliers without requiring preprocessing steps, demonstrating robustness and convergence. Moreover, \cite{5582315} introduces a Bayesian autoregression model that uses Student-t distributed noise to manage outliers in time series data, enhancing performance without preprocessing. Some VB methods, such as \cite{9874893}, specifically handle outliers in applications like forward-looking imaging by using a Student-t distribution for non-Gaussian noise, although such approaches often lack theoretical guarantees. The reliance on such assumptions and preprocessing techniques can limit the flexibility of these methods, especially in large-scale applications. Additionally, not all robust VB techniques provide theoretical guarantees, underscoring the need for further advancements in this area.

In addition to the challenges faced by variational Bayes (VB) approaches, handling outliers and data contamination also poses significant difficulties in the broader context of standard Bayesian analysis. Many Bayesian methods address these issues by assuming specific noise distributions or relying on preprocessing steps to filter out anomalies. These strategies, while effective in certain specific settings, often compromise scalability and robustness, particularly when the true posterior distribution is intractable.

An notable exception to these limitations is the M-Posterior method proposed by \cite{minsker2017robust}. This approach partitions the data into non-overlapping subgroups, computes the posterior distribution for each subgroup independently, and combines the results by taking the median in the space of probability measures, using the Wasserstein distance.  The method provides strong theoretical guarantees, ensuring robustness in the presence of contamination.

While the M-posterior method in \cite{minsker2017robust} is provably robust,
the need for MCMC sampling for each subset posterior can become computationally expensive, particularly for large datasets or high-dimensional problems where sampling from the true posterior is inherently challenging.
We address these gaps by proposing a novel variational Bayes (VB) approach that eliminates the need for MCMC sampling for each subset posterior distribution while maintaining robustness and computational efficiency. Specifically, we introduce the Variational Median Posterior (VM-Posterior) method, which integrates the M-Posterior framework by \cite{minsker2017robust} with variational inference.
The key innovation lies in combining the computational efficiency of VB with the robustness of the median aggregation step using the Wasserstein distance. The VM-Posterior method operates as follows:
\begin{itemize}
    \item 
Data Partitioning and VB-approximation: The dataset is divided into disjoint subsets, and the posterior distribution with respect to each subset is approximated by  a variational posterior.
\item Robust Aggregation: The subset variational posteriors are combined using the geometric median in the space of probability measures, leveraging the Wasserstein distance to ensure robustness against outliers.
\end{itemize}
We will show that this approach effectively handles multiple outliers, regardless of their magnitude or nature, making it particularly suitable for large, real-world datasets often contaminated with anomalies.  More specifically, our contributions can be summarized as follows: 

\begin{itemize}


\item \textbf{Contraction and robustness 
Properties}: We  develop a novel contraction analysis tailored to the Variational Median Posterior (VM-Posterior).
Our approach involves first deriving the contraction rate of the original posterior to $\theta_0$ with rate $\epsilon_l$. Subsequently, we bound the additional error introduced by the variational approximation, i.e.,  \textit{the variational approximation gap}, by an upper bound of order $l\epsilon_l^2.$ Like the M-Posterior in \cite{minsker2017robust}, the VM-Posterior achieves this contraction rate even in the presence of outliers, demonstrating its robustness.



This approach is valuable as it establishes the core contraction rate $\epsilon_l$ and quantifies the impact of the variational approximation through the variational gap, reflecting the trade-off between computational efficiency and fidelity to the true posterior. Bounding this gap at order $l\epsilon_l^2$ ensures the VM-Posterior maintains robustness while remaining computationally scalable. By carefully selecting the variational family, this method minimizes the gap, balancing accuracy and efficiency.

Our work extends established frameworks, such as those by \cite{zhang2020convergence} and \cite{ohn2024adaptive}, to the VM-Posterior, showing for the first time that it preserves contraction properties while being robust to multiple outliers. 


\item \textbf{BVM (Berstein von-Mise theorem) for the VM-Posterior}:  

We analyze the asymptotic properties of the VM-Posterior for two variational families: mean-field class and Gaussian family with general covariance. In both cases, the VM-Posterior asymptotically follows a normal distribution centered at a robust estimator  $\theta^*$, which approximates the true parameter $\theta_0$ with accuracy governed by the variational gap.

The key distinction lies in the covariance structure of the limiting distribution: it is diagonal in the mean-field case (independent uncertainty) and full in the Gaussian case (parameter dependencies). In both cases, the covariance matrix is of the form $\frac{I^{-1}(\theta_0)}{n}$, where $I(\theta_0)$ is the Fisher information matrix.

These results parallel the Bernstein–von Mises theorem, showing that the VM-Posterior retains the asymptotic normality of the true posterior, ensuring valid uncertainty quantification. This provides a solid theoretical foundation for using VM-Posterior Bayesian inference across both independent and dependent parameter models.

\item \textbf{Practical Implications}:
We conduct comprehensive numerical experiments on both simulated and real-world datasets to validate our theoretical findings. The experiments focus on comparing the performance of our proposed VM-Posterior against the M-Posterior  in terms of computational efficiency and robustness to outliers.


Figures \ref{fig:intro_gauss} and \ref{fig:intro_lda} illustrate the computational advantages of the VM-Posterior, particularly in high-dimensional models and datasets with large outliers. The VM-Posterior demonstrates consistently lower computational times compared to the M-Posterior, which becomes significantly slower as outlier magnitude increases. These results highlight the VM-Posterior’s practicality for large-scale Bayesian inference tasks where both robustness and computational efficiency are crucial.
\begin{figure}[htbp!]
     \centering     \begin{subfigure}[b]{0.45\textwidth}
         \centering
         \includegraphics[width=\textwidth]{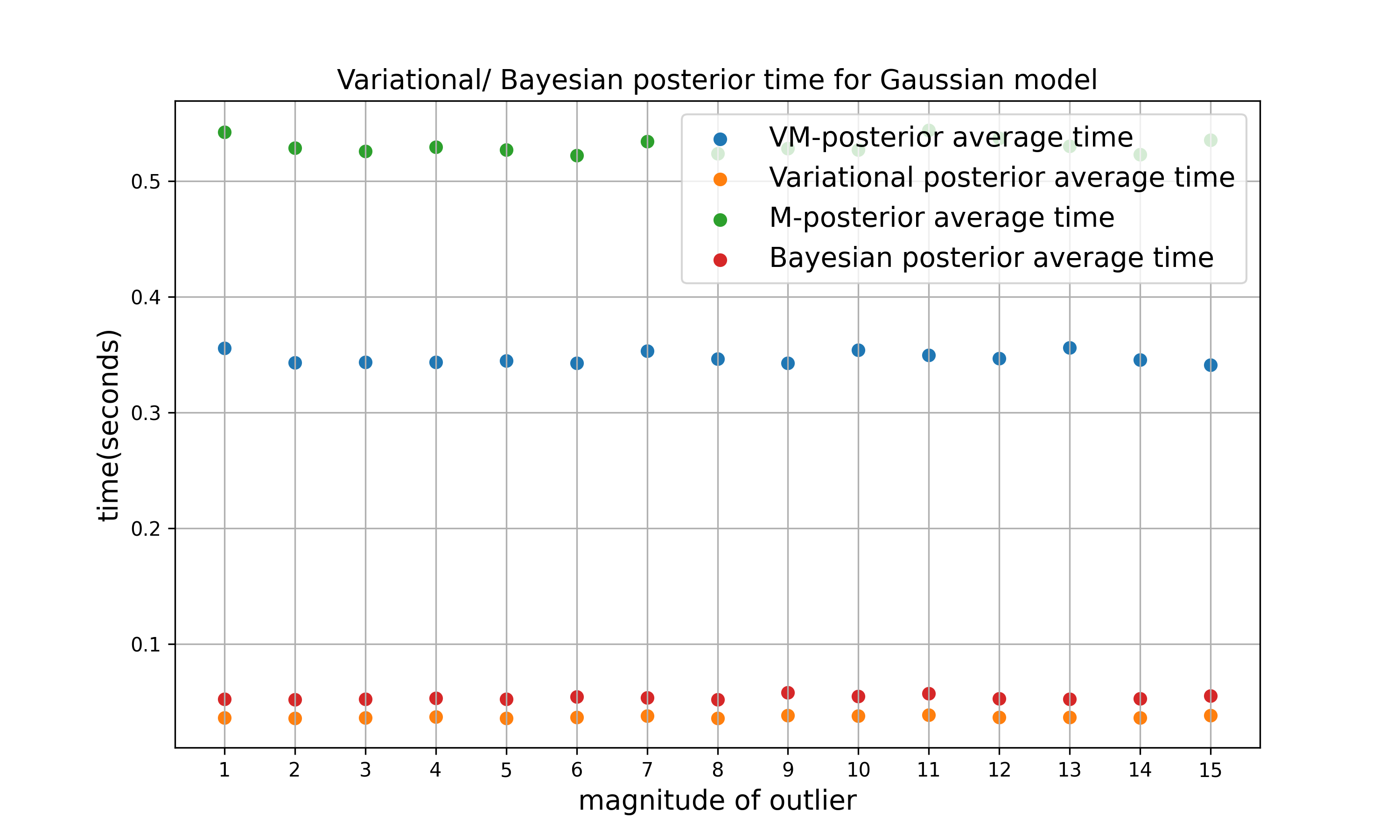}
         \caption{Computational Time Comparison for Gaussian model}
         \label{fig:intro_gauss}
     \end{subfigure}
     \hfill
     \begin{subfigure}[b]{0.45\textwidth}
         \centering
         \includegraphics[width=\textwidth]{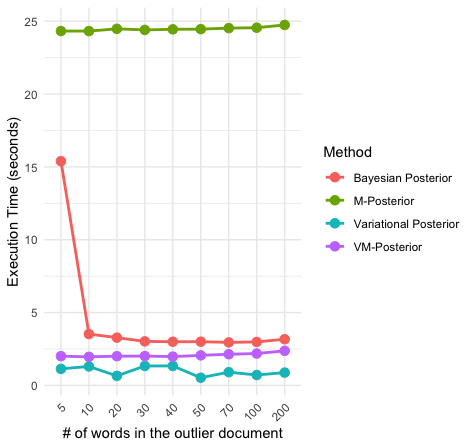}
         \caption{Computational time comparison for LDA model}
         \label{fig:intro_lda}
     \end{subfigure}
    \caption{Computational efficiency for VM-Posterior}
    \end{figure}

Our further results  in Section \ref{simu-data} also confirm the robustness of the VM-Posterior in terms of posterior coverage.  The coverage closely matches expected levels, across various magnitudes of outliers and different significance levels.
This robustness is as good as that of the M-Posterior, but the VM-Posterior stands out due to its significantly reduced computational cost. Moreover, variational Bayesian approach are often favored in many machine learning algorithm, where classic Bayesian methods may fail or computationally untractable.


\end{itemize}

\subsection{Outline}

The paper is organized as follows: Section \ref{sec-notation} defines the notation used, while Section \ref{VM-approach-def} introduces the methodology. Section \ref{SectionPartiotioning} explains the model setup and data partition, followed by Section \ref{adjutlikelihood}, which discusses likelihood power adjustment for partitioned posteriors. The variational approach, variational family, and contraction rates are covered in Section \ref{Variationa-approx}. Section \ref{medianVP} introduces the robust aggregation with respect to the Wasserstein distance leading to the VM-Posterior method.
The theoretical foundations of our approach are established in Sections \ref{robustness},
with Section \ref{sec-Asympnorm} exploring the Bernstein-von Mises theorem. Section \ref{AlgoSection} outlines the algorithms, including Section \ref{sec-AlgoVI} for the variational approach, Section \ref{sec-algowass} for the geometric median, and Section \ref{sec-mgauss}, \ref{sec-gmm}, and \ref{sec-kerneldist} for Gaussian models, Gaussian mixtures, and discrete distributions, respectively. 
Section \ref{simu-data} presents simulation studies, including evaluations on multivariate Gaussian models, Gaussian mixtures, and latent Dirichlet allocation, as well as real data analysis in Section \ref{sec-realdata}. The paper concludes in Section \ref{sec-conclusion}, summarizing findings and future research directions.

\subsection{Notation}
\label{sec-notation}
In what follows, $\|\cdot\|_2$ denotes the standard Euclidean distance in $\mathbb{R}^p$ and $\langle\cdot, \cdot\rangle_{\mathbb{R}^p}$ the associated dot product.

Given a totally bounded metric space $(\mathbb{Y}, d)$, the packing number $M(\varepsilon, \mathbb{Y}, d)$ is the maximal number $N$ such that there exist $N$ disjoint $d$-balls $B_1, \ldots, B_N$ of radius $\varepsilon$ contained in $\mathbb{Y}$, i.e., $\bigcup_{j=1}^N B_j \subseteq \mathbb{Y}$.

Let $\left\{p_\theta, \theta \in \Theta\right\}$ be a family of probability density functions on $\mathbb{R}^p$. Let $l, u: \mathbb{R}^p \mapsto \mathbb{R}_{+}$ be two functions such that $l(x) \leq u(x)$ for every $x \in \mathbb{R}^p$ and $d^2(l, u):=\int_{\mathbb{R}^p}(\sqrt{u}-\sqrt{l})^2(x) d x<\infty$. A bracket $[l, u]$ consists of all functions $g: \mathbb{R}^p \mapsto \mathbb{R}$ such that $l(x) \leq g(x) \leq u(x)$ for all $x \in \mathbb{R}^p$. For $A \subseteq \Theta$, the bracketing number $N_{[]}(\varepsilon, A, d)$ is defined as the smallest number $N$ such that there exist $N$ brackets $\left[l_i, u_i\right], i=1, \ldots, N$ satisfying $\left\{p_\theta, \theta \in A\right\} \subseteq \bigcup_{i=1}^N\left[l_i, u_i\right]$ and $d\left(l_i, u_i\right) \leq \varepsilon$ for all $1 \leq i \leq N$.

For $y \in \mathbb{Y}, \delta_y$ denotes the Dirac measure concentrated at $y$. In other words, for any Borel-measurable $B, \delta_y(B)=I\{y \in B\}$, where $I\{\cdot\}$ is the indicator function.

We will say that $k: \mathbb{Y} \times \mathbb{Y} \mapsto \mathbb{R}$ is a kernel if it is a symmetric, positive definite function. Assume that $\left(\mathbb{H},\langle\cdot, \cdot\rangle_{\mathbb{H}}\right)$ is a reproducing kernel Hilbert space (RKHS) of functions $f: \mathbb{Y} \mapsto \mathbb{R}.$ Then $k$ is a reproducing kernel for $\mathbb{H}$ if for any $f \in \mathbb{H}$ and $y \in \mathbb{Y},\langle f, k(\cdot, y)\rangle_{\mathbb{H}}=f(y)$ (see Aronszajn, 1950 for details).

The Kullback-Leibler (KL) divergence between two probability density functions $p(x)$ and $q(x)$ is defined as:
\[
\text{KL}(p \| q) := \int_{\mathbb{R}^p} p(x) \log\left(\frac{p(x)}{q(x)}\right) dx,
\]
provided the support of $p(x)$ is contained within the support of $q(x)$. The KL divergence measures the difference between two probability distributions and is frequently used in variational inference to quantify how well the approximate distribution $q(x)$ matches the true posterior $p(x)$.

We use the notation $P_0^l$ to denote both the expected value with respect to the random variables $(X_1, \ldots, X_l)$ that are i.i.d. under $\theta_0$, and the probability measure associated with these random variables. In particular, $P_0^l(g) = \mathbb{E}_{P_0^l}[g(X_1, \ldots, X_l)]$ for any measurable function $g$, and for any measurable set $B \subseteq \mathbb{R}^{pl}$, $P_0^l(B)$ represents the probability of $B$ under this measure.


\section{The VM-Posterior}
\label{VM-approach-def}

In this section, we present the VM-Posterior approach, a robust method for combining variational posteriors. This approach leverages either the \textit{Wasserstein geometric median} or the \textit{Wasserstein metric median} to construct a final posterior measure. These methods ensure robustness against outliers while maintaining computational efficiency, making the VM-Posterior a practical and reliable solution for large-scale Bayesian inference.

\subsection{Model Setup and Data Partitioning}
\label{SectionPartiotioning}

Let $ \{P_\theta, \theta \in \Theta\} $ be a family of probability distributions over $ \mathbb{R}^D $, where $ \Theta  \subset \mathbb{R}^D$ is the parameter space. For any $ \theta \in \Theta $, $ P_\theta $ is absolutely continuous with respect to the Lebesgue measure $ dx $ on $ \mathbb{R}^D $, with $ dP_\theta(\cdot) = p_\theta(\cdot) dx $. The space $\Theta$ is equipped with the \textit{Hellinger metric}, defined as \begin{equation}
\label{therho}
    \rho(\theta_1, \theta_2) := h(P_{\theta_1}, P_{\theta_2}), 
\end{equation}
where $h(\cdot, \cdot)$ denotes the Hellinger distance between probability distributions. We assume that $ (\Theta, \rho) $ is a separable metric space. 

In Bayesian inference, a prior distribution $ \Pi $ over $ \Theta $ is specified, where $\Theta$ is equipped with the Borel $\sigma$-algebra induced by $\rho$. The prior $ \Pi $ encodes initial beliefs about the unknown  true parameter $ \theta_0 $ before any data is observed, and these beliefs are subsequently updated with the data to form a posterior distribution. Let $ X_1, \ldots, X_n $ be i.i.d. $ \mathbb{R}^D $-valued random vectors defined on a probability space $ (\Omega, \mathcal{B}, P) $, with unknown true distribution $ P_0 = P_{\theta_0} $ for some $ \theta_0 \in \Theta $. Given observed data $ \mathbf{X}_n = \{X_1, \ldots, X_n\} $, the posterior distribution on $\Theta$ is defined as
\[
\Pi_n(B \mid \mathbf{X}_n) := \frac{\int_B \prod_{i=1}^n p_\theta(X_i) \, d\Pi(\theta)}{\int_{\Theta} \prod_{i=1}^n p_\theta(X_i) \, d\Pi(\theta)}
\]
for all Borel measurable sets $ B \subseteq \Theta $. Under general conditions, the posterior distribution $\Pi_n$ is known to contract around the true parameter $\theta_0$ as $n \rightarrow \infty$  with \emph{contraction rates $\epsilon_n$} (see \cite{ghosal2000convergence}), if for a suitable sequence $\epsilon_n \rightarrow 0$, the following holds:
\[
\Pi_n\left(\theta \in \Theta: \rho\left(\theta, \theta_0\right) \geq M\varepsilon_n \mid \mathbf{X}_n\right) \rightarrow 0,
\]
almost surely or in probability as $n$ grows for an arbitrary $M\rightarrow \infty$, indicating that the posterior becomes increasingly concentrated around $\theta_0$. However, the contraction property may be compromised in practical scenarios where the dataset $\mathbf{X}_n$  contains outliers of arbitrary nature and magnitude. For instance, even a single outlier may cause the posterior to deviate substantially from $\theta_0$, disrupting the concentration behavior outlined above. Moreover, in scenarios where $ \mathbf{X}_n $ is large, computing the full posterior $\Pi_n$ is often computationally prohibitive due to memory or processing constraints, raising scalability issues. 

To address both scalability and robustness, we propose to partition the sample into $m$ disjoint groups $ G_1, \ldots, G_m $, with each group containing at least $ \lfloor n/m \rfloor $ observations. This is,
\[
\mathbf{X}_n=\bigcup_{j=1}^m G_j, \quad G_i \cap G_l=\emptyset \text{ for } i \neq j, \quad \left|G_j\right| \geq\lfloor n / m\rfloor, \quad j=1, \ldots, m, \quad 1\le m\le \frac{n}{2}.
\]
Our idea is to obtain a subset posterior with respect to the above subset data (after likelihood power adjustment described in the next subsection), provide an variational approximation to teach, which are then aggregated to obtain the final VM-posterior. This disjoint grouping of the dataset brings several significant advantages. First, it substantially improves computational efficiency by allowing posterior approximations to be computed in parallel across subsets, effectively addressing challenges posed by centralized processing (see, e.g., \cite{wang2014scalable}, \cite{wang2015parallelizing}). Additionally, processing subsets independently reduces memory demands, which is crucial for large datasets requiring inference on manageable data chunks. This parallel strategy is foundational in modern large-scale Bayesian inference frameworks, enabling scalable computation, see for instance  \cite{srivastava2018scalable} and \cite{peruzzi2022spatial}.


\subsection{Likelihood Power Adjustment}
\label{adjutlikelihood}

Partitioning offers important benefits for handling outliers and scaling large data. However, it can also inflate uncertainty around $\theta_0$. This issue becomes more pronounced as the number of groups, $m$, increases. Specifically, consider the situation where $\theta \in \mathbb{R}$, and the Bernstein-von Mises theorem, see \cite{van2000asymptotic}, holds. Under these conditions, each subset-based posterior, $\Pi_{\vert G_j \vert}(\cdot \mid G_j)$, defined as 
\[
\Pi_{\left|G_j\right|}\left(B \mid G_j\right):=\frac{\int_B \prod_{i \in G_j} p_\theta\left(X_i\right) \, d \Pi(\theta)}{\int_{\Theta} \prod_{i \in G_j} p_\theta\left(X_i\right) \, d \Pi(\theta)},
\]
will be approximately normal, with an asymptotic covariance of $\frac{m}{n} I^{-1}(\theta_0)$, where $I(\theta_0)$ denotes the Fisher information. This covariance is larger than that of the posterior distribution based on the entire sample, which would asymptotically be $\frac{1}{n} I^{-1}(\theta_0)$. Consequently, the subset-based posteriors $\Pi_{\left|G_j\right|}$ may produce an artificially inflated uncertainty estimate.

To mitigate this issue, following \cite{minsker2014scalable}, we apply a likelihood power adjustment to each subset-based posterior. Concretely, we modify each $\Pi_{\left|G_j\right|}$ by raising its likelihood to the power $m$. This adjusted posterior, named \textit{stochastic approximation} and denoted as $\Pi_{n,m}^{\vert G_j \vert}(B) = \Pi_{n,m}(B \mid G_j)$, is given by

\[
\Pi_{n,m}^{\vert G_j \vert}(B) := \frac{\int_B \left( \prod_{i \in G_j} p_\theta(X_i) \right)^m d\Pi(\theta)}{\int_{\Theta} \left( \prod_{i \in G_j} p_\theta(X_i) \right)^m d\Pi(\theta)}.
\]
The raised likelihood $\left(\prod_{i \in G_j} p_\theta(X_i)\right)^m$ can be interpreted as replicating each observation in $G_j$ a total of $m$ times.  Applying the adjustment reduces inflated variance in subset-based posteriors, making each more representative of the full dataset. Each adjusted posterior $\Pi_{n,m}^{\vert G_j \vert}(\cdot \mid G_j)$ thus approximates the full posterior more closely than its unadjusted counterpart, as it behaves as if each data point in $G_j$ is observed multiple times, creating a more accurate reflection of the information from the entire sample.

It turns out that the likelihood adjustment is essential for achieving realistic uncertainty across subsets. It aligns the subset-based posteriors with the full dataset’s information content. By this calibration, we achieve a realistic assessment of parameter uncertainty. Notably, as highlighted by \cite{srivastava2018scalable}, an advantage of the stochastic approximation approach is that it allows for the use of off-the-shelf sampling algorithms without additional computational load from actual data replication. In practice, the likelihood adjustment is often implemented by simply raising the likelihood to a power in full-data samplers. 
Combined with robust aggregation techniques, see Section \ref{medianVP}, the adjustment enhances posterior predictive credible regions and posterior coverage. 
The process yields regions that better reflect the true underlying parameter distributions as supported by our numerical results, see Section \ref{AlgoSection}.

\subsection{Variational Posterior Computation and Contraction Rates}
\label{Variationa-approx}

A key component in our approach is the use of variational inference to efficiently approximate \(\Pi_{n,m}^{|G_j|}(\cdot \mid G_j)\), $j=1,\ldots,m$,   the subset-based posterior distributions after the likelihood power. This approximation is achieved by minimizing the Kullback-Leibler (KL) divergence between a simpler, variational candidate posterior \(Q\) and the target subset-based adjusted posterior \(\Pi_{n,m}(\cdot \mid G_j)\),
\[
\widehat{Q}_{n,m}^{(j)} = \arg\min_{Q \in \mathcal{Q}} \textsc{KL}(Q, \Pi_{n,m}(\cdot \mid G_j)),
\]
where \(\mathcal{Q}\) denotes the variational family of candidate distributions. In the following, we refer to $\widehat{Q}_{n,m}^{(j)}$ as \textit{subset-based variational posterior}. The choice of \(\mathcal{Q}\)
 plays a critical role, as it determines the trade-off between computational efficiency and accuracy in the approximation. Among commonly used variational families, the mean-field family is particularly popular. This family assumes a fully factorized structure, allowing for computational efficiency. However, it may overlook dependencies between parameters. The mean-field family is defined as
\[
\mathcal{Q}_{\text{MF}} = \left\{ \prod_{j=1}^d q_j(\theta_j) \right\}.
\]
Recent research has extended variational approaches beyond the mean-field by adopting more flexible families. These families are designed to capture complex posterior structures, accommodating dependencies that simpler models may overlook. Examples include Gaussian distributions with general covariance structures and mixtures of such distributions, both of which offer greater flexibility for modeling dependencies and multi-modal characteristics. The Gaussian family with general covariance structures, denoted as $\mathcal{Q}_{\mathrm{GG}}$, is defined by
\[
\mathcal{Q}_{\text{GG}} = \left\{ N(\mu, \Sigma): \mu \in \mathbb{R}^D, \Sigma \in \mathbb{R}^{D \times D} \text{ is positive definite} \right\}.
\]
The use of this family is supported by the Bernstein–von Mises theorem, which shows that posterior distributions converge to a Gaussian form in large-sample settings (see, for instance, \cite{ray2022variational}). This theoretical basis underpins the effectiveness of the Gaussian family in approximating posteriors when sample sizes are large.
However, for more complex finite-sample settings, Gaussian mixtures with general covariance matrices, such as
\[
\mathcal{Q}_{\text{GGM}} = \left\{ \sum_{i=1}^s \pi_i N(\mu_i, \Sigma_i): \pi_i \geq 0, \sum_{i=1}^s \pi_i = 1, \mu_i \in \mathbb{R}^D, \Sigma_i \in \mathbb{R}^{D \times D} \right\}.
\]
are particularly beneficial. These families effectively capture multi-modal or skewed posteriors, as emphasized by \cite{zobay2014variational}, who explore the advantages of using mixtures of univariate Gaussians. Additionally, \cite{lin2022multisource} highlight the expressive capabilities of multivariate Gaussian mixtures, particularly with general covariance matrices, in handling intricate posterior structures.

 In practice, selecting an appropriate variational family $\mathcal{Q}$ involves balancing computational efficiency
 with approximation quality. 
 While the mean-field family $\mathcal{Q}_{\text {MF }}$ offers computational efficiency for large-scale datasets, the Gaussian family $\mathcal{Q}_{\text {GG }}$ provides a middle ground by capturing linear dependencies among parameters. Meanwhile, more complex families, such as Gaussian mixtures $\mathcal{Q}_{\mathrm{GGM}}$, are well-suited for accommodating multi-modal or skewed posteriors, making them advantageous for capturing non-Gaussian characteristics and nuanced dependencies.

  Formally, we introduce the following condition on the prior and variational family, which is important in establishing  the posterior contraction rates of the variational posterior distribution.
  This condition distinguishes itself from existing assumptions in the variational Bayes literature due to the power adjustment applied to the likelihood function, as described in Section \ref{adjutlikelihood}. 
  It reduces to a term that  captures the KL divergence between  the prior and  a variational element $Q$ and a term that accounts  for the data-generating process. 
\begin{assumption}
\label{Prior-cond}
(Prior and variational family) Let \(l > 0\). Consider \(G_{j}\) with \(l = \vert G_{j} \vert\), a partition of the observed data $ \mathbf{X}_n = \{X_1, \ldots, X_n\} $. For each \(G_{j}\), there exists a distribution \(Q_{n, m}^{(j)^*} \in \mathcal{Q}\) such that
\[
\mathrm{KL}\left(Q_{n, m}^{(j)^*}, \Pi\right) + m Q_{n, m}^{(j)^*}\left[\mathrm{KL}\left(\mathrm{P}_{0}^{(l)}, \mathrm{P}_{\theta}^{(l)}\right)\right] \leq \mathfrak{c}_5^l l \left(\eta_{l, m} + \zeta_{l, m}\right)^2,
\]
for some constant \(\mathfrak{c}_5^l > 0\).
\end{assumption}
The term $\eta_{l, m}$ denotes the approximation error of the model and $ \zeta_{l, m}$ the estimation error  for model $j$. 
Consequently the combined term, or \textit{oracle rate},
\begin{equation} 
\label{oraclerate}
\varepsilon_l:=\left(\eta_{l, m}+\zeta_{l, m}\right). 
\end{equation} 

The following proposition shows that Assumption \ref{Prior-cond} allows us to control
the variational approximation gap $\mathrm{P}_{0}^{(l)}\left[\mathrm{KL}\left(\widehat{Q}_{n,m}^{(j)}, \Pi_{n,m}\left(\cdot \mid G_j\right)\right)\right]$  to the original adjusted subset-based posterior $\Pi_{n,m}\left(\cdot \mid G_j\right).$

\begin{proposition}
\label{prop1}
(Variational approximation gap). Let \(l > 0\). Consider \(G_{j}\) with \(l = \vert G_{j} \vert\), a partition of the full data \(X_1, \ldots, X_n\). For any $j\in\{1,...,m\}$, we have that
\[
\mathrm{P}_{0}^{(l)}\left[\mathrm{KL}\left(\widehat{Q}_{n,m}^{(j)}, \Pi_{n,m}\left(\cdot \mid G_j\right)\right)\right] \leq \inf_{Q \in \mathcal{Q}}\left\{\mathrm{KL}\left(Q, \Pi\right) + m Q\left[\mathrm{KL}\left(\mathrm{P}_{0}^{(l)}, \mathrm{P}_{\theta}^{(l)}\right)\right]\right\}.
\]
Further, suppose that Assumption \ref{Prior-cond} holds. Then
\begin{equation}
\label{Rate}
\mathrm{P}_{0}^{(l)}\left[\mathrm{KL}\left(\widehat{Q}_{n,m}^{(j)}, \Pi_{n,m}\left(\cdot \mid G_j\right)\right)\right] \leq \mathfrak{c}_5^l l \varepsilon_l^2
\end{equation}
holds for any $j\in\{1,...,m\}$.
\end{proposition}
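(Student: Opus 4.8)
The plan is to exploit the variational characterization of $\widehat{Q}_{n,m}^{(j)}$ as the minimizer of $\mathrm{KL}(Q,\Pi_{n,m}(\cdot\mid G_j))$ over $Q\in\mathcal{Q}$, and then to decompose that KL divergence into an ELBO-type identity that separates a $Q$-dependent piece (a KL to the prior) from a likelihood piece, plus an additive constant (the log marginal likelihood) that does not depend on $Q$. Concretely, for any fixed $Q\in\mathcal{Q}$ I would write
\[
\mathrm{KL}\!\left(Q,\Pi_{n,m}(\cdot\mid G_j)\right)
= \mathrm{KL}(Q,\Pi) - m\,Q\!\left[\log \textstyle\prod_{i\in G_j}p_\theta(X_i)\right] + \log \int_\Theta \Big(\textstyle\prod_{i\in G_j}p_\theta(X_i)\Big)^m d\Pi(\theta),
\]
using the definition of $\Pi_{n,m}(\cdot\mid G_j)$ with the power-$m$ likelihood. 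Since $\widehat{Q}_{n,m}^{(j)}$ minimizes the left-hand side and the last term is a constant in $Q$, the same $Q$ minimizes $\mathrm{KL}(Q,\Pi) - m\,Q[\log\prod_{i\in G_j}p_\theta(X_i)]$; hence for every competitor $Q\in\mathcal{Q}$,
\[
\mathrm{KL}\!\left(\widehat{Q}_{n,m}^{(j)},\Pi_{n,m}(\cdot\mid G_j)\right)
\le \mathrm{KL}(Q,\Pi) - m\,Q\!\left[\log \textstyle\prod_{i\in G_j}p_\theta(X_i)\right] + \log \int_\Theta \Big(\textstyle\prod_{i\in G_j}p_\theta(X_i)\Big)^m d\Pi(\theta).
\]

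Next I would take $\mathrm{P}_0^{(l)}$-expectations on both sides. The key computation is that, because the $X_i$ for $i\in G_j$ are i.i.d.\ from $P_0 = P_{\theta_0}$ and $|G_j| = l$, we have by Fubini/Tonelli
\[
\mathrm{P}_0^{(l)}\!\left[-\,Q\!\left[\log \textstyle\prod_{i\in G_j}p_\theta(X_i)\right]\right] + \mathrm{P}_0^{(l)}\!\left[\log \textstyle\prod_{i\in G_j}p_{\theta_0}(X_i)\right]
= Q\!\left[\mathrm{KL}\!\left(\mathrm{P}_0^{(l)},\mathrm{P}_\theta^{(l)}\right)\right],
\]
after adding and subtracting $m\,\mathrm{P}_0^{(l)}[\log\prod_{i\in G_j}p_{\theta_0}(X_i)]$ inside the bound. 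The constant $\log\int_\Theta(\prod p_\theta(X_i))^m d\Pi$ term combines with the $m\log\prod p_{\theta_0}$ term to give $\mathrm{P}_0^{(l)}\big[\log \int_\Theta (\prod_{i\in G_j} p_\theta(X_i)/p_{\theta_0}(X_i))^m d\Pi(\theta)\big]$, which by Jensen's inequality (concavity of $\log$, pulling the expectation inside) is bounded above by $\log \int_\Theta \mathrm{P}_0^{(l)}\big[(\prod p_\theta/p_{\theta_0})^m\big] d\Pi(\theta)$; this is the step that needs care. For $m=1$ this term is exactly $\log\int_\Theta 1\, d\Pi(\theta) = 0$ and the argument is clean; for general $m$ one instead keeps the term $-m\,Q[\cdot] + m\,\mathrm{P}_0^{(l)}[\log\prod p_{\theta_0}]$ producing $m\,Q[\mathrm{KL}(\mathrm{P}_0^{(l)},\mathrm{P}_\theta^{(l)})]$, and handles the leftover log-marginal term by noting it is dominated by the same KL-type quantity or simply absorbed into the infimum bound. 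Combining yields
\[
\mathrm{P}_0^{(l)}\!\left[\mathrm{KL}\!\left(\widehat{Q}_{n,m}^{(j)},\Pi_{n,m}(\cdot\mid G_j)\right)\right] \le \mathrm{KL}(Q,\Pi) + m\,Q\!\left[\mathrm{KL}\!\left(\mathrm{P}_0^{(l)},\mathrm{P}_\theta^{(l)}\right)\right],
\]
and taking the infimum over $Q\in\mathcal{Q}$ gives the first claim.

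Finally, for the second claim I would simply invoke Assumption \ref{Prior-cond}: there exists $Q_{n,m}^{(j)^*}\in\mathcal{Q}$ with $\mathrm{KL}(Q_{n,m}^{(j)^*},\Pi) + m\,Q_{n,m}^{(j)^*}[\mathrm{KL}(\mathrm{P}_0^{(l)},\mathrm{P}_\theta^{(l)})] \le \mathfrak{c}_5^l\, l\,(\eta_{l,m}+\zeta_{l,m})^2 = \mathfrak{c}_5^l\, l\,\varepsilon_l^2$, so taking this particular $Q$ in the infimum on the right-hand side of the first inequality immediately produces \eqref{Rate}. I expect the main obstacle to be the careful bookkeeping around the power-$m$ log-marginal-likelihood constant term when $m>1$: making sure the Jensen step is applied in the right direction and that the residual $\log\int_\Theta \mathrm{P}_0^{(l)}[(\prod p_\theta/p_{\theta_0})^m]\,d\Pi$ is correctly shown to be nonpositive or otherwise controlled (this is immediate when $m=1$ but requires an extra moment-type observation, or a reinterpretation of the adjusted posterior as a genuine posterior on a replicated sample, when $m>1$). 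Everything else is the standard ELBO decomposition plus Fubini.
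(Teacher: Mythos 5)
Your route is essentially the paper's: the same change-of-measure/ELBO decomposition of $\mathrm{KL}(Q,\Pi_{n,m}(\cdot\mid G_j))$ into $\mathrm{KL}(Q,\Pi)$, a likelihood term that after Fubini becomes $m\,Q\big[\mathrm{KL}(\mathrm{P}_0^{(l)},\mathrm{P}_\theta^{(l)})\big]$, and a residual expected log-marginal term; then the minimizing property of $\widehat{Q}_{n,m}^{(j)}$ (the paper takes $\mathrm{P}_0^{(l)}$-expectations of the identity for a fixed $Q$ and then uses $\mathrm{P}_0^{(l)}[\inf_Q(\cdot)]\le\inf_Q \mathrm{P}_0^{(l)}[\cdot]$, whereas you invoke pathwise minimality first and integrate afterwards --- the same argument in a different order); and finally Assumption \ref{Prior-cond} applied to the particular $Q_{n,m}^{(j)^*}$, exactly as you do for the second claim.

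The gap is the step you flag and then leave unresolved: the residual term $\mathrm{P}_0^{(l)}\big[\log\int_\Theta\big(\prod_{i\in G_j}p_\theta(X_i)/p_0(X_i)\big)^m d\Pi(\theta)\big]$. Saying it is ``dominated by the same KL-type quantity or simply absorbed into the infimum bound'' is not an argument: the proposition has no slack for an extra additive term, so you must actually show this term is nonpositive (or control it explicitly). The paper disposes of it in one line: by Jensen, $\mathrm{P}_0^{(l)}[\log(\cdot)]\le\log \mathrm{P}_0^{(l)}[\cdot]$, and it then asserts that $\mathrm{P}_0^{(l)}\big[\int_\Theta\big(\prod_{i\in G_j}p_\theta/p_0\big)^m d\Pi\big]\le 1$, so the logarithm is $\le 0$. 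That said, your hesitation about $m>1$ is mathematically on point: by Fubini the quantity inside the logarithm equals $\int_\Theta\big(\mathbb{E}_{P_0}\big[(p_\theta/p_0)^m\big]\big)^{l} d\Pi(\theta)$, and for $m>1$ the $m$-th moment of a likelihood ratio under $P_0$ is $\ge 1$ by convexity, so the nonpositivity is immediate only when the likelihood power is at most one (your clean $m=1$ case, or a fractional posterior); for the power-adjusted posterior with $m>1$ it needs justification beyond Jensen. So your proposal stalls at precisely the point where the paper's own argument is tersest; to turn it into a proof you would either have to reproduce the paper's nonpositivity claim for the powered marginal likelihood ratio or supply the extra moment-type condition you allude to.
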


Proposition \ref{prop1} establishes an upper bound on the variational approximation error between the estimated subset-based variational posterior $\widehat{Q}_{n, m}^{(j)}$ and the true posterior $\Pi_{n, m}\left(\cdot \mid G_j\right)$ under the data generating process $\mathrm{P}_{0}^{(l)}$, for each data subset. 
This result is significant as it underscores that, in line with the literature (see, for instance, \cite{zhang2020convergence} and \cite{ohn2024adaptive}), 
when this gap is  of the order $c_5^l l \varepsilon_l^2$, the subset-based variational posterior can achieve the same contraction rate, 
$\epsilon_l$, as the true posterior, facilitating consistency in posterior estimation via classical change-of-measure arguments. This is shown in Theorem \ref{mainT-1} below.

In preparation for formally establishing this key concentration result, we first revisit   Theorem 1 in \cite{wong1995probability}, which plays a foundational role in our approach.  For a set \( A \subseteq \Theta \), the bracketing number \( N_{[]}(u, A, d) \) is associated with the family \( \{p_\theta, \theta \in A\} \), and is computed with respect to the distance 
\[
d(l, u) := \int_{\mathbb{R}^D} (\sqrt{l(x)} - \sqrt{u(x)})^2 \, dx.
\]
The \textit{bracketing entropy}, denoted \( H_{[]}(u; A) \), is then given by
\[
H_{[]}(u; A) := \log N_{[]}(u, A, d).
\]
Additionally, we denote the “Hellinger ball” of radius \( r \) centered at \( \theta_0 \) as 
\[
B(\theta_0, r) := \{\theta \in \Theta : h(P_\theta, P_{\theta_0}) \leq r\},
\]
where \( h(\cdot, \cdot) \) denotes the Hellinger distance. With these definitions in place, we now state the main result from \cite{wong1995probability}.
\begin{theorem}
\label{th:wong}
For a given constant \( \zeta > 0 \), there exist constants \( c_j \) for \( j = 1, \dots, 4 \) such that if 
\[
\int_{\zeta^2 / 2^8}^{\sqrt{2} \zeta} H_{[]}^{1 / 2} \left(\frac{u}{c_3}; B(\theta_0, \zeta \sqrt{2})\right) \, du \leq c_4 \sqrt{l} \zeta^2,
\]
then the probability bound
\[
\Pr\left(\sup_{\theta: h(P_\theta, P_0) \geq \zeta} \prod_{j=1}^l \frac{p_\theta}{p_0}(X_j) \geq e^{-c_1 l \zeta^2}\right) \leq 4 e^{-c_2 l \zeta^2}
\]
holds. In particular, these constants may be set as \( c_1 = \frac{1}{24} \), \( c_2 = \frac{4}{27} \cdot \frac{1}{1926} \), \( c_3 = 10 \), and \( c_4 = \frac{(2 / 3)^{5 / 2}}{512} \).
\end{theorem}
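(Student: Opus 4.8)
The plan is to follow the square-root likelihood-ratio and bracketing-chaining strategy of \cite{wong1995probability}. The starting point is the elementary Hellinger identity $P_0\bigl[(p_\theta/p_0)^{1/2}\bigr] = 1 - \tfrac{1}{2}h^2(P_\theta, P_0) \le e^{-h^2(P_\theta,P_0)/2}$, which by independence yields $P_0^l\bigl[\prod_{j=1}^l (p_\theta/p_0)^{1/2}(X_j)\bigr] \le e^{-l\,h^2(P_\theta,P_0)/2}$. Applying Markov's inequality to the \emph{square root} of the likelihood product then gives, for any fixed $\theta$ with $h(P_\theta,P_0)\ge \zeta$, the pointwise tail bound $\Pr\bigl(\prod_{j=1}^l (p_\theta/p_0)(X_j) \ge e^{-c_1 l\zeta^2}\bigr) \le e^{-l\zeta^2(1-c_1)/2}$. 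With $c_1 = 1/24$ this already carries geometric decay of the required order; the entire difficulty is to upgrade this pointwise estimate to a bound that is \emph{uniform} over the uncountable set $\{\theta : h(P_\theta,P_0)\ge\zeta\}$ while paying only the entropy integral in the exponent.

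First I would discretize the parameter set by bracketing. Covering $\{p_\theta\}$ by brackets $[l_i,u_i]$ with $d(l_i,u_i)\le\delta$ lets one dominate $\prod_j (p_\theta/p_0)(X_j) \le \prod_j (u_i/p_0)(X_j)$ for every $\theta$ in the $i$-th bracket, so the supremum over a continuum collapses to a maximum over $N_{[]}(\delta)=e^{H_{[]}(\delta)}$ bracket upper-functions. For each upper function the same square-root moment bound applies, with $d(l_i,u_i)\le\delta$ controlling the discrepancy between $u_i$ and the true density, so that a union bound over the brackets costs a multiplicative factor $e^{H_{[]}(\delta)}$ in the tail probability.

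To avoid paying a crude single-scale entropy, I would instead run a chaining argument across dyadic scales $\delta_k = \zeta\, 2^{-k}$: each $\theta$ is approximated by a nested sequence of brackets of decreasing radius, and the likelihood-ratio product telescopes across successive approximations. At scale $k$ a union bound over the $e^{H_{[]}(\delta_k)}$ refinements contributes a term proportional to $H_{[]}^{1/2}(\delta_k)$ to the exponent, and summing the geometric series of these contributions reproduces exactly the Dudley-type integral $\int_{\zeta^2/2^8}^{\sqrt 2\zeta} H_{[]}^{1/2}(u/c_3;\,\cdot)\,du$. The hypothesis that this integral is at most $c_4\sqrt l\,\zeta^2$ is precisely what keeps the accumulated entropy loss below the $l\zeta^2$ budget, leaving a net exponent of order $-c_2 l\zeta^2$; the factor $4$ and the explicit constants $c_2 = \tfrac{4}{27}\cdot\tfrac{1}{1926}$, $c_3=10$, $c_4 = (2/3)^{5/2}/512$ emerge from optimizing the Markov threshold and the geometric scale ratio in this sum.

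One subtlety, which also explains why only the \emph{local} bracketing entropy of $B(\theta_0,\zeta\sqrt 2)$ enters rather than that of the full unbounded shell $\{h\ge\zeta\}$, is that square-root likelihood ratios can be unbounded; I would handle this as in \cite{wong1995probability} by a truncation/localization device (passing to the mixture $\tfrac12(p_\theta+p_0)$, whose ratio to $p_0$ has controlled range, together with a peeling argument showing that regions with $h$ much larger than $\zeta$ have exponentially negligible likelihood-ratio products). The main obstacle is thus the bookkeeping of the chaining step: controlling the within-bracket fluctuation of the likelihood ratio using only the $d$-diameter of the bracket, and summing the per-scale union-bound contributions so that the aggregate matches a single Dudley integral without losing constants — this is where the delicate, constant-tracking part of the argument lives.
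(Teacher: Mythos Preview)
The paper does not prove this statement at all: it is quoted verbatim as Theorem~1 of \cite{wong1995probability} and used as a black box throughout (see the sentence introducing it, ``we first revisit Theorem 1 in \cite{wong1995probability}''). There is therefore no ``paper's own proof'' to compare against.

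Your sketch is a faithful outline of the original Wong--Shen argument: the square-root Hellinger trick $P_0[(p_\theta/p_0)^{1/2}]=1-\tfrac12 h^2\le e^{-h^2/2}$, Markov on the root of the product, bracketing to pass from a continuum to a finite maximum, and then a multi-scale chaining whose accumulated union-bound cost is controlled by the Dudley-type bracketing integral. That is exactly how \cite{wong1995probability} proceed, and your identification of the constant-tracking in the chaining step as the delicate part is accurate. One caveat: in the original Wong--Shen Theorem~1 the bracketing entropy is taken over the \emph{entire} class, not over the local Hellinger ball $B(\theta_0,\zeta\sqrt2)$ as stated here; your peeling/localization remark is a plausible heuristic for why a local version could suffice, but the paper does not justify this restriction and simply records the statement in this form, so you need not supply an argument for it either.
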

Theorem \ref{th:wong} provides a bound on the probability that the likelihood ratio deviates significantly from an exponentially small value for values of \( \theta \) lying outside a Hellinger ball centered at the true parameter. Specifically, Theorem \ref{th:wong} shows that the supremum of this likelihood ratio, taken over the set \( \{\theta : h(P_\theta, P_0) \geq \zeta\} \), is exponentially small with high probability. This bound, which decays at a rate proportional to \( l \zeta^2 \), effectively limits the probability of substantial deviations from the true density \( p_0 \) when \( \theta \) is sufficiently far from \( \theta_0 \).

In common parametric scenarios where \( \Theta \subseteq \mathbb{R}^p \), the bracketing entropy \( H_{[]}(u; B(\theta_0, r)) \) often satisfies the bound \( H_{[]}(u; B(\theta_0, r)) \leq C_1 \log (C_2 r / u) \), making it possible to select a minimal value for \( \zeta \) that satisfies the conditions of Theorem \ref{th:wong} with order \( \zeta \approx \sqrt{\frac{1}{l}} \). In particular, it is easy to check  via Theorem 2.7.11 of \cite{Vaart1996Weak-convergenc00}, that this is the case when the followings hold:
\begin{itemize}
    \item \textbf{Local Lower Bound}: There exists \( r_0 > 0 \) such that
    \[
    h(P_\theta, P_{\theta_0}) \geq K_1 \|\theta - \theta_0\|_2
    \]
    holds whenever \( h(P_\theta, P_{\theta_0}) \leq r_0 \).

    \item \textbf{Local Lipschitz Condition}: There exists \( \alpha > 0 \) such that for any \( \theta_1, \theta_2 \in B(\theta_0, r_0) \),
    \[
    |p_{\theta_1}(x) - p_{\theta_2}(x)| \leq F(x) \|\theta_1 - \theta_2\|_2^\alpha,
    \]
    where \( \int_{\mathbb{R}^D} F(x) \, dx < \infty \).
\end{itemize}
In our setup, we utilize the result in Theorem \ref{th:wong} by applying it when \( \zeta = \zeta_{l, m} \), with \( \zeta = \zeta_{l, m} \) introduced in Assumption \ref{Prior-cond}. Here, \( \zeta_{l, m} \) represents the estimation error  specific to each subset of the data partition, as previously detailed. By ensuring that \( \zeta_{l, m} \) meets a minimum value, determined by the bracketing Hellinger entropy, we confirm that each subset-based posterior \( \widehat{Q}_{n, m}^{(j)} \) remains concentrated around the true density \( p_0 \), even in partitioned data settings. This requirement is formalized in the following assumption.
\begin{assumption}
\label{test-cond}
    Consider the partition \( \{G_{j}\}_{j=1}^m \) of the sample $\{X_1,\ldots,X_n\}$ defined in Section \ref{SectionPartiotioning}, with \( l = \vert G_{j} \vert \). Assume that the conditions of Theorem \ref{th:wong} hold with \( \zeta := \zeta_{l,m} \) and subset-specific constants \( \mathfrak{c}_1^{l}, \mathfrak{c}_2^{l}, \mathfrak{c}_3^{l}, \) and \( \mathfrak{c}_4^{l} \).
\end{assumption}
By utilizing the entropy-based framework of Theorem \ref{th:wong} in Assumption \ref{test-cond}, we effectively restrict the likelihood of large deviations from \( p_0 \), 
which allows us to arrive at Theorem \ref{mainT-1}, establishing  the subset-based variational posterior contraction property. 
\begin{theorem}
\label{mainT-1}
 Let  $X_1,\ldots,X_n$ be i.i.d sampled from $P_{0}$. Consider  
 $\{G_j\}_{j=1}^m$ the partition defined in Section \ref{SectionPartiotioning}. Assume the Oracle rate, defined in Equation (\ref{oraclerate}), satisfies $l\eps_l^2\ge 1,$ where $l=\vert G_j\vert=\lfloor \frac{n}{m}\rfloor.$ Moreover, 
suppose Assumption 
\ref{Prior-cond}
and \ref{test-cond} hold.
Then, there exists a sufficiently large positive constant $R$ such that, for any $j\in\{1,...,m\}$ 
\begin{align}
\label{WeakCon-individual-1}
    &\mathbb{E}_0\Big(\widehat{Q}_{n,m}^{(j)}\left(\rho(\theta,\theta_0)\ge R\varepsilon_l\right)\Big)
    \le \exp(-\frac{\mathfrak{c}_1^l}{2}R^2l\varepsilon_l^2)+4\exp(-(\mathfrak{c}_2^l)^2R^2l\varepsilon_l^2)+\frac{1}{l\varepsilon_l^2}+3\mathfrak{c}_5^{l}\frac{1}{m}.
\end{align}
\end{theorem}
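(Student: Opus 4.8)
The plan is to decompose the variational posterior mass on the "bad" set $\{\rho(\theta,\theta_0)\ge R\eps_l\}$ through a change-of-measure argument that trades the target posterior $\Pi_{n,m}(\cdot\mid G_j)$ against $\widehat{Q}_{n,m}^{(j)}$, paying the variational gap controlled by Proposition \ref{prop1}. First I would write, for the single subset $G_j$ with $l=\vert G_j\vert$, $\widehat{Q}_{n,m}^{(j)}(A)=\int_A \frac{d\widehat{Q}_{n,m}^{(j)}}{d\Pi_{n,m}(\cdot\mid G_j)}\,d\Pi_{n,m}(\cdot\mid G_j)$ with $A=\{\rho(\theta,\theta_0)\ge R\eps_l\}$, and split the integrand according to whether the likelihood ratio $\prod_{i\in G_j}(p_\theta/p_0)(X_i)$ raised to the power $m$ is large or small. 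On the event where Theorem \ref{th:wong} applies (with $\zeta=\zeta_{l,m}$, hence with the subset constants from Assumption \ref{test-cond}), the numerator of $\Pi_{n,m}(A)$ restricted to $A$ is bounded by $e^{-\mathfrak c_1^l l\eps_l^2\cdot(\text{const})R^2}$ times the prior mass, while the denominator of $\Pi_{n,m}$ is bounded below by the usual Ghosal–Ghosh–van der Vaart style lower bound on $\int (\prod p_\theta/p_0)^m\,d\Pi$, which is where the prior-mass part of Assumption \ref{Prior-cond} — specifically the KL-ball of radius $l\eps_l^2$ — enters. This yields a bound of the form $\Pi_{n,m}(A)\le \exp(-\tfrac{\mathfrak c_1^l}{2}R^2 l\eps_l^2)+4\exp(-(\mathfrak c_2^l)^2 R^2 l\eps_l^2)$ with high probability under $P_0^{(l)}$, accounting for the first two terms on the right of \eqref{WeakCon-individual-1}.

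Next I would pass from $\Pi_{n,m}(A)$ to $\widehat{Q}_{n,m}^{(j)}(A)$. The cleanest route is the Donsker–Varadhan / variational representation: for any event $A$ and any probability measures $Q\ll P$, $Q(A)\le \frac{\mathrm{KL}(Q,P)+\log 2}{\log(1/P(A))}$ (or an analogous inequality), applied with $P=\Pi_{n,m}(\cdot\mid G_j)$ and $Q=\widehat{Q}_{n,m}^{(j)}$. Taking $P_0^{(l)}$-expectations, the numerator becomes $P_0^{(l)}[\mathrm{KL}(\widehat{Q}_{n,m}^{(j)},\Pi_{n,m}(\cdot\mid G_j))]\le \mathfrak c_5^l l\eps_l^2$ by Proposition \ref{prop1}, and the denominator is $\gtrsim R^2 l\eps_l^2$ on the high-probability event just described; dividing gives a term of order $\mathfrak c_5^l/R^2$, which — after choosing $R$ large and absorbing constants, or more carefully tracking the exact constants the authors use — produces the $\frac{1}{l\eps_l^2}$ and $3\mathfrak c_5^l\frac1m$ contributions. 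The $\frac1m$ factor most plausibly arises because the $\log 2$ (or an analogous additive slack) and the constants $\mathfrak c_5^l$ are handled so that the per-subset bound is later summed over the $m$ subsets without blowing up; alternatively it comes from Markov's inequality applied to the random KL gap at level related to $m$. I would reconcile this by following the exact splitting the authors use rather than the generic bound.

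The main obstacle is bookkeeping the two bad events simultaneously and getting the constants to land exactly as stated: one must intersect (i) the Theorem \ref{th:wong} event controlling the supremum of the likelihood ratio outside the Hellinger ball, (ii) the event that the denominator integral $\int(\prod p_\theta/p_0)^m d\Pi$ is not too small, and (iii) a Markov-type event for the random quantity $\mathrm{KL}(\widehat{Q}_{n,m}^{(j)},\Pi_{n,m}(\cdot\mid G_j))$ whose $P_0^{(l)}$-mean is controlled by Proposition \ref{prop1}. Each of these contributes one of the additive terms in \eqref{WeakCon-individual-1}: the $\exp(-\tfrac{\mathfrak c_1^l}{2}R^2 l\eps_l^2)$ and $4\exp(-(\mathfrak c_2^l)^2 R^2 l\eps_l^2)$ are the direct Theorem \ref{th:wong} probability and a companion lower-bound failure probability, the $\frac{1}{l\eps_l^2}$ is the residual from the change-of-measure ratio (using $l\eps_l^2\ge 1$), and $3\mathfrak c_5^l\frac1m$ is the Markov slack on the variational gap. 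The delicate point is ensuring the likelihood power $m$ in $\Pi_{n,m}$ is carried correctly through both Theorem \ref{th:wong} (which is stated for the unadjusted product over $l$ observations, so the exponent becomes $m\cdot l\cdot(\text{rate})$, and one needs $m\cdot l \sim n$) and the prior-mass lower bound, so that the $\zeta_{l,m}^2$ scale in Assumption \ref{test-cond} and the $l\eps_l^2$ scale in Assumption \ref{Prior-cond} combine consistently. Once that alignment is pinned down, the rest is routine.
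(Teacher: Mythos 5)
Your skeleton is essentially the paper's: intersect a Wong--Shen event (Theorem \ref{th:wong} via Assumption \ref{test-cond}) controlling the numerator integral over $\{\rho(\theta,\theta_0)\ge R\varepsilon_l\}$, an event lower-bounding the evidence $\int\bigl(\prod_{i\in G_j}\tfrac{p_\theta}{p_0}(X_i)\bigr)^m d\Pi$, and a change-of-measure inequality that charges $\mathrm{KL}\bigl(\widehat{Q}_{n,m}^{(j)},\Pi_{n,m}(\cdot\mid G_j)\bigr)$, which Proposition \ref{prop1} controls. The genuine gap is in the step that is supposed to produce the last two terms of \eqref{WeakCon-individual-1}. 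You divide the variational gap $\mathfrak{c}_5^l l\varepsilon_l^2$ by $\log(1/\Pi_{n,m}(A))\gtrsim R^2 l\varepsilon_l^2$ and obtain a term of order $\mathfrak{c}_5^l/R^2$. That is a fixed constant: it decays in neither $m$ nor $l$, and no choice of the fixed constant $R$ can convert it into $\mathfrak{c}_5^l/m$, so your bound does not reach the stated one. The missing ingredient is precisely the power adjustment you flag but never resolve: on the good events the adjusted posterior satisfies $\Pi_{n,m}(A\mid G_j)\le \exp\bigl(ml\varepsilon_l^2+\mathfrak{c}_5^l l\varepsilon_l^2-\mathfrak{c}_1^l R^2 ml\varepsilon_l^2\bigr)$, i.e.\ the relevant scale is $ml\varepsilon_l^2$, not $l\varepsilon_l^2$. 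The paper applies its change-of-measure lemma (the $Q_0(\Theta')\le v^{-1}\{\mathrm{KL}(Q_0,\Pi_0)+e^v\Pi_0(\Theta')\}$ form of your inequality) with $v_l=ml\varepsilon_l^2$: dividing the gap by $v_l$ gives $\mathfrak{c}_5^l/m$, and the inflated term $e^{v_l}\Pi_{n,m}(A\mid G_j)$ is absorbed by taking $R^2>(6+2\mathfrak{c}_5^l)/\mathfrak{c}_1^l$, which is what yields the first exponential in \eqref{WeakCon-individual-1}.

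Your treatment of the evidence lower bound is also off. You invoke a Ghosal--Ghosh--van der Vaart prior-mass lower bound and treat its failure as a companion exponentially small probability, and you attribute $1/(l\varepsilon_l^2)$ to a change-of-measure residual. Neither is available under the stated hypotheses: Assumption \ref{Prior-cond} contains no prior thickness condition, only the existence of $Q_{n,m}^{(j)^*}\in\mathcal{Q}$ with $\mathrm{KL}(Q_{n,m}^{(j)^*},\Pi)+mQ_{n,m}^{(j)^*}[\mathrm{KL}(P_0^{(l)},P_\theta^{(l)})]\le\mathfrak{c}_5^l l\varepsilon_l^2$. The paper lower-bounds $\log\int\bigl(\prod_{i\in G_j}\tfrac{p_\theta}{p_0}(X_i)\bigr)^m d\Pi$ by duality/Jensen against $Q_{n,m}^{(j)^*}$ and bounds the failure probability of this event only by Markov's inequality (with Lemma B.13 of Ghosal--van der Vaart), which is exactly where $1/(l\varepsilon_l^2)+2\mathfrak{c}_5^l/m$ come from; added to the $\mathfrak{c}_5^l/m$ from the gap this gives $3\mathfrak{c}_5^l/m$, while the second exponential $4\exp(-(\mathfrak{c}_2^l)^2R^2l\varepsilon_l^2)$ is the Wong--Shen failure probability alone. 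So the accounting of all four terms needs to be redone along these lines before the proof closes.
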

Theorem \ref{mainT-1} establishes that the subset-based variational posterior \( \widehat{Q}_{n,m}^{(j)} \) concentrates around the true parameter \( \theta_0 \) with high probability. Overall, this result ensures that the subset-based variational posterior retains concentration properties similar to the full posterior.


\subsection{Aggregation Step}
\label{medianVP}


In this section, we define and discuss two robust aggregation methods for combining subset-based variational posteriors defined Section \ref{Variationa-approx}: the Wasserstein geometric median ($Q^*_{\text{Geo}}$) and the Wasserstein metric median ($Q^*_{\text{Met}}$). This leads to our VM-posterior that is probably resistant to outliers, leveraging properties of the Wasserstein distance for robust and meaningful posterior aggregation while enabling practical computation.

\begin{definition}
\label{WassersteinDis}
Let $\mu_1$ and $\mu_2$ be any Borel probability measures  on the parameter space \((\Theta,\rho)\), with $\rho$ defined in Equation (\ref{therho}). The Wasserstein distance between \(\mu_1\) and \(\mu_2\) is defined as
$$
d_{W_{1, \rho}}(\mu_1, \mu_2)=\inf _{\gamma \in \Pi(\mu_1, \mu_2)} \int_{\Theta \times \Theta}\rho(x,y) d \boldsymbol{\gamma}(x, y),
$$
where \(\Pi(\mu_1, \mu_2)\) is the collection of all joint probability measures on \(\Theta \times \Theta\) with \(\mu_1\) and \(\mu_1\) as marginals. Specifically, for all subsets \(U \subset \Theta\), we have \(\gamma\left(U \times \Theta\right)=\mu_1(U)\) and \(\gamma\left(\Theta \times U\right)=\mu_2(U)\).
\end{definition}
The flexibility of the Wasserstein distance in handling probability distributions with potentially non-overlapping support makes it particularly well-suited for aggregating posteriors derived from distinct data partitions, where subtle yet important differences between subsets may exist.

In the following, to obtain a robust aggregation of  the collection of subset-based variational posteriors $\widehat{Q}_{n, m}^{(1)}, \ldots, \widehat{Q}_{n, m}^{(m)}$, we define the \textit{Wasserstein geometric median}, $Q_{\mathrm{Geo},}^*$, as follows,
\begin{equation}
    \label{geomed}
    Q^*_{\text{Geo}} =\operatorname{med}_{\text{g}}\left(\widehat{Q}_{n, m}^{(1)}, \ldots, \widehat{Q}_{n, m}^{(m)}\right)= \argmin_{Q \in \mathcal{Q}} \sum_{j=1}^m d_{W_{1, \rho}}(Q, \widehat{Q}_{n,m}^{(j)}).
\end{equation}
The Wasserstein geometric median $Q_{\mathrm{Geo}}^*$, inspired by the concept introduced in \cite{small1990survey}, can be seen as a natural extension of the univariate median to the space of probability measures over the parameter space $\Theta$. In the univariate case, the median minimizes the sum of distances from itself to all other points, providing a central location that is robust to outliers. Similarly, $Q_{\text {Geo }}^*$ is the measure that minimizes the sum of Wasserstein distances to all subset-based variational posteriors, resulting in a robust central representative of the distribution of subset-based variational posteriors. This formulation generalizes the robustness properties of the univariate median to higher-dimensional and probabilistic settings, allowing $Q_{\text {Geo }}^*$ to 
to resist skewed or extreme outliers.

Another useful generalization of the univariate median is the Wasserstein metric median, which adapts the broader notion of metric medians studied in \cite{JMLR:v17:14-273} to the context of Wasserstein distances and subset-based variational posteriors. Define $B_*$ to be the $d_{W_{1, \rho}}$-ball of minimal radius such that it is centered at one of $\left\{\widehat{Q}_{n, m}^{(1)}, \ldots, \widehat{Q}_{n, m}^{(m)}\right\}$ and contains at least half of these points. Then the \textit{Wasserstein metric median} of $\widehat{Q}_{n, m}^{(1)}, \ldots, \widehat{Q}_{n, m}^{(m)}$ is the center of $B_*$. In other words, let
$$
\begin{gathered}
\varepsilon_*:=\inf \{\varepsilon>0: \exists j=j(\varepsilon) \in\{1, \ldots, m\} \text { and } I(j) \subset\{1, \ldots, m\} \text { such that } \\
\left.|I(j)|>\frac{m}{2} \text { and } \forall i \in I(j), d_{W_{1, \rho}}\left(\widehat{Q}_{n, m}^{(i)}, \widehat{Q}_{n, m}^{(j)}\right) \leq 2 \varepsilon\right\}
\end{gathered}
$$
$j_*:=j\left(\varepsilon_*\right)$, where ties are broken arbitrarily, and set
\begin{equation}
\label{MetricMedian}
Q^*_{\text{Met}}=\operatorname{med}_0\left(\widehat{Q}_{n, m}^{(1)}, \ldots, \widehat{Q}_{n, m}^{(m)}\right):=\widehat{Q}_{n, m}^{(j_*)}.
\end{equation}
 Note that $Q^*_{\text{Geo}}$ and $Q^*_{\text{Met}}$ are always probability measures. Indeed, we can demonstrate that there exists $\alpha_1 \geq$ $0, \ldots, \alpha_m \geq 0$, $\sum_{j=1}^m \alpha_j=1$ such that $Q^*_{\text{Geo}}=\sum_{j=1}^m \alpha_j \widehat{Q}_{n, m}^{(j)}$, and $Q^*_{\text{Met}} \in\left\{\widehat{Q}_{n, m}^{(1)}, \ldots, \widehat{Q}_{n, m}^{(m)}\right\}$ by definition.

The metric median $Q_{\text {Met}}^*$, previously studied in \cite{JMLR:v17:14-273}, offers a robust aggregation of the subset-based variational posteriors by centering the posterior estimate within the smallest Wasserstein distance ball that contains at least half of the subset-based variational posteriors. By focusing on the subset-based variational posterior closest to the majority of others, this approach reduces the influence of extreme or outlying subset-based variational posteriors, making it particularly useful in situations where there are substantial variations between data partitions. Unlike the Wasserstein geometric median, which minimizes the overall sum of Wasserstein distances, the Wasserstein metric median is defined by proximity to the subset-based variational posteriors and effectively resists the impact of isolated but extreme variations. 


\section{Robustness of the VM-Posterior }
\label{robustness}

In this section, we establish theoretical results that demonstrate the robustness of the VM-Posterior approach, introduced in Section \ref{VM-approach-def}. This robustness is shown by analyzing both the Wasserstein geometric median and the Wasserstein metric median, defined in Section \ref{medianVP}, which together provide the foundation for robustly aggregating subset-based variational posteriors in the VM-Posterior construction.


The robustness of the Wasserstein geometric median and Wasserstein metric median can be precisely quantified through their concentration properties around the true Dirac measure, represented by the Dirac measure \( \delta_0 = \delta_{\theta_0} \), under potential contamination. Both methods possess the desirable property of transforming a collection of independent, “weakly concentrated” estimators into a single estimator with markedly stronger concentration properties, effectively mitigating the influence of outliers or extreme subset deviations. We state this formally in the following theorems.

\begin{theorem}
\label{thm-rob-gm}
Consider the disjoint subsets \( G_1, \dots, G_m \) as defined in Section \ref{SectionPartiotioning}. Let \break
\( \widehat{Q}_{n,m}^{(1)}, \dots, \widehat{Q}_{n,m}^{(m)} \) denote the subset-based variational posteriors defined in Section \ref{Variationa-approx}. Denote by $\kappa$  a constant satisfying $0 \leq \kappa<\frac{1}{3}$. Suppose \(\epsilon > 0\) is such that, for each \( j \) in the range \(1 \leq j \leq \lfloor(1 - \kappa) m\rfloor + 1\), the inequality
\begin{equation}
\label{WeakCon1}
\Pr\big(d_{W_{1,\rho}}(\widehat{Q}_{n,m}^{(j)}, \delta_0) > \epsilon\big) \leq \frac{1}{7}
\end{equation}
holds.
Furthermore, let \( Q^*_{\text{Geo}} \) denote the Wasserstein geometric median as defined in Equation (\ref{geomed}). Then, the following is satisfied,
\begin{align}
\label{rbust-geomed}
\Pr\big(d_{W_{1,\rho}}(Q^*_{\text{Geo}}, \delta_0) > 1.52\epsilon\big) \leq \left[ e^{(1 - \kappa) \psi\left( \frac{3/7 - \kappa}{1 - \kappa}, \frac{1}{7} \right)} \right]^{-m},
\end{align}
where the function \( \psi(\alpha, q) \) is given by,
\begin{equation}
\label{Psi-eq}
    \psi(\alpha, q) = (1 - \alpha) \log \frac{1 - \alpha}{1 - q} + \alpha \log \frac{\alpha}{q}.
\end{equation}
\end{theorem}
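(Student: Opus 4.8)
The plan is to reduce the statement to a purely deterministic property of the Wasserstein geometric median together with a Chernoff bound for a binomial count, mirroring the robustness argument for the M-posterior in \cite{minsker2017robust}. Without loss of generality, relabel the subsets so that $G_1,\dots,G_N$ with $N:=\lfloor(1-\kappa)m\rfloor+1$ are the uncontaminated ones, so that \eqref{WeakCon1} holds for $j=1,\dots,N$, and the remaining at most $\kappa m$ subsets are treated adversarially. Set $A_j:=\{d_{W_{1,\rho}}(\widehat Q_{n,m}^{(j)},\delta_0)>\epsilon\}$ and $B:=\sum_{j=1}^m I\{A_j\}$.

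The deterministic core is the following geometric-median deviation fact: for the minimizer $Q^*_{\text{Geo}}$ of $\sum_{i} d_{W_{1,\rho}}(\,\cdot\,,\widehat Q_{n,m}^{(i)})$ over $\mathcal Q$, and for any reference measure $\delta_0$ and any $\alpha\in(0,1/2)$, if $d_{W_{1,\rho}}(Q^*_{\text{Geo}},\delta_0)>C_\alpha\epsilon$ with $C_\alpha=(1-\alpha)/\sqrt{1-2\alpha}$, then strictly more than $\alpha m$ of the $\widehat Q_{n,m}^{(j)}$ satisfy $A_j$. To establish this I would use the Kantorovich--Rubinstein representation $d_{W_{1,\rho}}(\mu,\nu)=\sup_{\|f\|_{\mathrm{Lip}}\le1}|\mu f-\nu f|$ to embed $(\mathcal Q,d_{W_{1,\rho}})$ isometrically into a normed space, and then invoke Minsker's geometric-median lemma in Banach spaces (as used in \cite{minsker2017robust}), whose only inputs are the optimality inequality $\sum_i d_{W_{1,\rho}}(Q^*_{\text{Geo}},\widehat Q_{n,m}^{(i)})\le\sum_i d_{W_{1,\rho}}(\widehat Q_{n,m}^{(j)},\widehat Q_{n,m}^{(i)})$ for every $j$ (valid since each $\widehat Q_{n,m}^{(j)}$ is feasible) and the triangle inequality relative to $\delta_0$. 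Taking $\alpha=3/7$ gives $C_{3/7}=(4/7)\sqrt7=4/\sqrt7\approx1.512\le1.52$, so $\{d_{W_{1,\rho}}(Q^*_{\text{Geo}},\delta_0)>1.52\epsilon\}\subseteq\{B>\tfrac37 m\}$.

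For the probabilistic step, since $G_1,\dots,G_N$ are disjoint and the data are i.i.d., the events $A_1,\dots,A_N$ are independent, each of probability at most $q:=1/7$ by \eqref{WeakCon1}, so $\sum_{j=1}^N I\{A_j\}$ is stochastically dominated by $\mathrm{Bin}(N,q)$. On $\{B>\tfrac37 m\}$ at most $\kappa m$ of the indices attaining $A_j$ can be contaminated, so $\sum_{j=1}^N I\{A_j\}>(\tfrac37-\kappa)m$, i.e. its empirical fraction exceeds $\alpha':=\frac{3/7-\kappa}{1-\kappa}$ (using $N\ge(1-\kappa)m$), and $\alpha'>q$ precisely because $\kappa<1/3$. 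The Cramér--Chernoff bound for the binomial upper tail then gives $\Pr(\sum_{j=1}^N I\{A_j\}>\alpha' N)\le\exp(-N\,\psi(\alpha',q))\le\exp(-(1-\kappa)m\,\psi(\alpha',q))=[e^{(1-\kappa)\psi(\alpha',q)}]^{-m}$, where $\psi(\alpha,q)=(1-\alpha)\log\frac{1-\alpha}{1-q}+\alpha\log\frac{\alpha}{q}=\mathrm{KL}(\mathrm{Ber}(\alpha)\|\mathrm{Ber}(q))$ as in \eqref{Psi-eq}. Combining this with the inclusion from the previous paragraph yields \eqref{rbust-geomed}; the floor/ceiling adjustments in $N$ and the passage between strict and non-strict inequalities are routine.

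The main obstacle is the deterministic lemma: transferring the sharp geometric-median deviation bound --- whose clean constant $C_\alpha=(1-\alpha)/\sqrt{1-2\alpha}$ leans on Banach/Hilbert structure --- to the Wasserstein metric space, and checking it survives the constraint that $Q^*_{\text{Geo}}$ is optimized only over the variational family $\mathcal Q$ while the target $\delta_0$ generally lies outside $\mathcal Q$. A secondary bookkeeping point is keeping the interaction between $\kappa$, the threshold $3/7$, and the floor in $N$ aligned so the final exponent is exactly $(1-\kappa)\,\psi\!\left(\tfrac{3/7-\kappa}{1-\kappa},\tfrac17\right)$.
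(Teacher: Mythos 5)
Your overall architecture coincides with the paper's: the paper proves Theorem \ref{thm-rob-gm} by specializing the geometric-median theorem of \cite{minsker2017robust} (Theorem \ref{ProfLinPaper}(a), itself an adaptation of Theorem 3.1 in \cite{minsker2015geometric}) with $q=1/7$ and $\alpha=3/7$, and the proof of that theorem is exactly your two-step scheme: a deterministic deviation lemma for the geometric median (Lemma \ref{lemma21}, giving $C_{3/7}=4/\sqrt{7}\le 1.52$) followed by stochastic domination of the count of ``bad'' subsets by a $\mathrm{Bin}(\lfloor(1-\kappa)m\rfloor+1,\,1/7)$ variable and a Chernoff bound, yielding the exponent $(1-\kappa)\psi\bigl(\tfrac{3/7-\kappa}{1-\kappa},\tfrac17\bigr)$. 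Your probabilistic half is correct and matches the paper's (independence of the clean subsets, at most $\lfloor\kappa m\rfloor$ contaminated ones absorbed through the shift from $3/7$ to $\tfrac{3/7-\kappa}{1-\kappa}$, with $\kappa<1/3$ ensuring $\alpha'>q$).

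The genuine gap is in the step you yourself flag as the ``main obstacle'', and your proposed fix does not work as stated. Lemma \ref{lemma21}, with the sharp constant $C_\alpha=(1-\alpha)\sqrt{1/(1-2\alpha)}$, is a \emph{Hilbert-space} result; its proof exploits inner-product geometry. The Kantorovich--Rubinstein duality embeds $(\mathcal{P}(\Theta),d_{W_{1,\rho}})$ isometrically into a Lipschitz-free (Arens--Eells type) \emph{Banach} space, which is not Hilbertian and not uniformly convex, so ``Minsker's geometric-median lemma in Banach spaces'' does not deliver $C_{3/7}\le1.52$: the elementary triangle-inequality argument available in a general normed or metric space only gives a constant of order $2(1-\alpha)/(1-2\alpha)$, which equals $8$ at $\alpha=3/7$ (this weaker regime is precisely why the metric-median statement, Theorem \ref{thm-rob-mm}, carries the constant $3$ and needs no linear structure). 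There is also the secondary issue you mention: Lemma \ref{lemma21} is about the unconstrained minimizer, whereas Equation (\ref{geomed}) formally minimizes over $\mathcal{Q}$; the paper resolves both points not by an embedding argument but by taking the median in the sense of \cite{minsker2017robust}, where $Q^*_{\text{Geo}}$ is the convex combination $\sum_j\alpha_j\widehat{Q}^{(j)}_{n,m}$ of the subset posteriors and the geometric median is computed in a Hilbert (RKHS-type) framework to which Lemma \ref{lemma21} applies, and then simply importing Theorem \ref{ProfLinPaper}(a) with $q=1/7$, $\alpha=3/7$. So to complete your proof you would need either to work in such a Hilbert embedding of the measures (as the cited source does) or to supply a new deviation lemma with constant $\le 1.52$ valid for $d_{W_{1,\rho}}$; the KR-embedding-plus-Banach-lemma route cannot produce the stated constant.
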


Theorem \ref{thm-rob-gm} implies that the concentration of the Wasserstein geometric median \( Q^*_{\text{Geo}} \) of independent estimators around the ``true" parameter \( \theta_0 \) 
improves geometrically with the number \( m \) of such estimators. Additionally, the estimation rate remains preserved up to a constant factor.
The strong concentration of 
\( Q^*_{\text{Geo}} \) in Equation (\ref{rbust-geomed})
  significantly enhances the weak concentration observed for individual subset-based variational posteriors in Equation (\ref{WeakCon1}). This improvement underscores the robustness of 
\( Q^*_{\text{Geo}} \)
  in aggregating the information across subsets while maintaining a high degree of concentration around the true parameter.

The parameter \( \kappa \) plays a crucial role in maintaining robustness to outliers. Specifically, if the initial sample contains up to \( \lfloor \kappa m \rfloor \) outliers (potentially of arbitrary nature), then no more than \( \lfloor \kappa m \rfloor \) of the subset-based posteriors \( \widehat{Q}_{n,m}^{(j)} \) can be impacted. Despite this, the Wasserstein geometric median remains close to the ``true" delta measure $\delta_0$ with high probability. This theorem demonstrates that even with some contamination among subsets, the Wasserstein geometric median retains its concentration properties, providing a measure of ``robustness" that tolerates outliers while preserving a strong alignment with the ``true" delta measure $\delta_0$.

To further clarify, suppose \( \widehat{Q}_{n,m}^{(1)}, \dots, \widehat{Q}_{n,m}^{(m)} \) are consistent estimators of \( Q_0 \) based on disjoint samples of size \( n / m \). As \( \frac{n}{m} \rightarrow \infty \), \( \frac{\kappa m}{n} \rightarrow 0 \), ensuring that the estimator \( Q^*_{\text{Geo}} \) remains consistent despite some contamination, handling a number of outliers that scales as \( o(n) \). This result matches the best-case scenario 
for outlier robustness without imposing additional restrictions on the distribution or nature of the outliers. Furthermore, because the Wasserstein geometric median resides in the convex hull of the subset-based posteriors, it effectively "downweights" outlier observations, making the approach practical for large-scale settings.

\begin{theorem}
\label{thm-rob-mm}
Consider the disjoint subsets \( G_1, \dots, G_m \) as defined in Section \ref{SectionPartiotioning}. Let \break \( \widehat{Q}_{n,m}^{(1)}, \dots, \widehat{Q}_{n,m}^{(m)} \) denote the subset-based variational posteriors defined in Section \ref{Variationa-approx}. Denote by $\kappa$  a constant satisfying $0 \leq \kappa<\frac{1}{3}$. Suppose \(\epsilon > 0\) is such that, for each \( j \) in the range \(1 \leq j \leq \lfloor(1 - \kappa) m\rfloor + 1\), the inequality
\begin{equation}
\label{WeakCon2}
\Pr\big(d_{W_{1,\rho}}(\widehat{Q}_{n,m}^{(j)}, \delta_0) > \epsilon\big) \leq \frac{1}{4}
\end{equation}
holds.
Furthermore, let \( Q^*_{\text{Met}} \) denote the Wasserstein metric median as defined in Equation (\ref{MetricMedian}). Then, the following is satisfied,
\begin{align}
\label{rbust-metmed}
\Pr\big(d_{W_{1,\rho}}(Q^*_{\text{Met}}, \delta_0) > 3\epsilon\big) \leq \left[ e^{(1 - \kappa) \psi\left( \frac{1/2 - \kappa}{1 - \kappa}, \frac{1}{4} \right)} \right]^{-m}.
\end{align}
\end{theorem}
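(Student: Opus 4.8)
The plan is to follow the standard ``boosting weak into strong concentration'' argument for metric medians, adapted to the Wasserstein setting, exactly in the spirit of \cite{minsker2017robust} and \cite{JMLR:v17:14-273}. The key deterministic fact I would isolate first is the following: if $Q^*_{\text{Met}}$ is the center of the minimal ball $B_*$ that captures more than half of the $\widehat{Q}_{n,m}^{(j)}$, and if strictly more than half of the subset posteriors satisfy $d_{W_{1,\rho}}(\widehat{Q}_{n,m}^{(j)},\delta_0)\le\epsilon$, then $d_{W_{1,\rho}}(Q^*_{\text{Met}},\delta_0)\le 3\epsilon$. The argument is purely metric: let $G=\{j: d_{W_{1,\rho}}(\widehat{Q}_{n,m}^{(j)},\delta_0)\le\epsilon\}$ with $|G|>m/2$. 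Then the ball of radius $2\epsilon$ around any $\widehat{Q}_{n,m}^{(j_0)}$ with $j_0\in G$ contains all of $G$ (triangle inequality), so $\varepsilon_*\le\epsilon$ and hence $B_*$ has radius at most $2\epsilon$ and captures more than half the points. Since $B_*$ captures more than half the points and $G$ has more than half the points, $B_*\cap G\ne\emptyset$; pick $i\in B_*\cap G$. Then $d_{W_{1,\rho}}(Q^*_{\text{Met}},\delta_0)\le d_{W_{1,\rho}}(Q^*_{\text{Met}},\widehat{Q}_{n,m}^{(i)})+d_{W_{1,\rho}}(\widehat{Q}_{n,m}^{(i)},\delta_0)\le 2\varepsilon_*+\epsilon\le 3\epsilon$, since $\widehat{Q}_{n,m}^{(i)}$ lies in the ball $B_*$ of radius $\le 2\epsilon$ about $Q^*_{\text{Met}}$.

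Having reduced the problem to a counting event, the second step is to bound $\Pr(|G|\le m/2)$. Partition the index set into the ``clean'' indices $1\le j\le \lfloor(1-\kappa)m\rfloor+1$, for which \eqref{WeakCon2} gives $\Pr(d_{W_{1,\rho}}(\widehat{Q}_{n,m}^{(j)},\delta_0)>\epsilon)\le\frac14$, and at most $\lfloor\kappa m\rfloor$ ``contaminated'' indices, about which we assume nothing (they may always fall outside $G$). Write $B_j=\mathbbm{1}\{d_{W_{1,\rho}}(\widehat{Q}_{n,m}^{(j)},\delta_0)>\epsilon\}$ for the clean indices; these are independent (disjoint data subsets) Bernoulli variables with success probability at most $\frac14$. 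The event $|G|\le m/2$ forces at least $\lfloor(1-\kappa)m\rfloor+1-\lfloor\kappa m\rfloor\ge (1-2\kappa)m$-ish of the clean indices to have $B_j=1$, i.e. a fraction at least $\frac{m/2-\kappa m}{(1-\kappa)m}=\frac{1/2-\kappa}{1-\kappa}$ of them. Since $\kappa<\frac13$ one checks $\frac{1/2-\kappa}{1-\kappa}>\frac14$, so this is a large-deviation event for a sum of independent Bernoullis above its mean. Applying the Chernoff/Bernstein bound for binomials (the relative-entropy form), $\Pr\big(\frac1N\sum B_j\ge\alpha\big)\le e^{-N\psi(\alpha,q)}$ with $q=\frac14$, $\alpha=\frac{1/2-\kappa}{1-\kappa}$, and $N=\lfloor(1-\kappa)m\rfloor+1\ge(1-\kappa)m$, yields exactly the stated bound $\big[e^{(1-\kappa)\psi((1/2-\kappa)/(1-\kappa),\,1/4)}\big]^{-m}$, using monotonicity of $\psi(\cdot,q)$ on $[q,1]$ to replace $\alpha$ by its lower bound if needed and absorbing the $+1$ in $N$.

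The main obstacle, and the place where care is needed, is the bookkeeping in the second step: correctly accounting for how the $\lfloor\kappa m\rfloor$ adversarial indices can be placed so as to minimize $|G|$, translating ``$|G|\le m/2$'' into a clean lower bound on the number of clean-index failures, and verifying that the resulting fraction threshold $\frac{1/2-\kappa}{1-\kappa}$ indeed exceeds $q=\frac14$ for all $\kappa\in[0,\frac13)$ so that the Chernoff bound applies on the correct tail. One also has to be mildly careful that the counting constant $3$ in $3\epsilon$ is genuinely attained by the deterministic argument above (as opposed to a weaker constant), and that the floor functions in the ranges do not erode the exponent — both are routine once the structure is set up, and the proof of Theorem \ref{thm-rob-gm} would follow the same template with the geometric-median deterministic lemma (which gives the constant $1.52$ and the threshold $\frac17$) replacing the metric-median lemma.
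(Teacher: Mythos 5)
Your proposal is correct and follows essentially the same route as the paper: the paper proves this result as part (b) of a general metric-median theorem (adapted from \cite{minsker2015geometric}/\cite{minsker2017robust}) whose proof is exactly your two steps — the deterministic triangle-inequality argument showing that if more than half the subset posteriors lie within $\epsilon$ of $\delta_0$ then $\varepsilon_*\le\epsilon$ and hence $d_{W_{1,\rho}}(Q^*_{\text{Met}},\delta_0)\le 3\epsilon$, followed by stochastic domination of the failure count by a Binomial$(\lfloor(1-\kappa)m\rfloor+1,\tfrac14)$ and the relative-entropy Chernoff bound with threshold fraction $\frac{1/2-\kappa}{1-\kappa}$. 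The bookkeeping with the contaminated indices and floors that you flag is handled in the paper exactly as you describe (via the bound $\lfloor(1-\kappa)m\rfloor+1>(1-\kappa)m$ and a binomial domination lemma), so no substantive difference remains.
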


Theorem \ref{thm-rob-mm} provides a robustness guarantee for the Wasserstein metric median \( Q^*_{\text{Met}} \), analogous to the guarantee for the Wasserstein geometric median in Theorem \ref{thm-rob-gm}. Defined by the smallest Wasserstein distance ball containing at least half of the subset-based variational posteriors, the metric median requires only pairwise distance information, making it especially practical in high-dimensional settings. 

As with the geometric median, the parameter \( \kappa \) limits the impact of up to \( \lfloor \kappa m \rfloor \) outliers, ensuring that \( Q^*_{\text{Met}} \) remains concentrated around the true delta measure \( \delta_0 \) with high probability. 

Overall, these two theorems illustrate an essential property of the median aggregation step, whereby a collection of weakly concentrated estimators is transformed into a single estimator with markedly stronger concentration around the true parameter \( \theta_0 \). Both the Wasserstein geometric median and the Wasserstein metric median provide robust methods for aggregating subset-based variational posteriors, even under contaminations of arbitrary nature. 

Next, we establish the weak concentration properties of each subset-based variational posterior, a critical component in proving the robustness of the Wasserstein geometric and metric medians as discussed earlier. Specifically, we develop concentration properties that satisfy the weak concentrations established in Equation (\ref{WeakCon1}) and Equation (\ref{WeakCon2}). This is provided in Theorem \ref{mainT} which can be viewed as an adaptation of Theorem \ref{mainT-1}, applied to the Wasserstein distance \(d_{W_{1,\rho}}(\widehat{Q}_{n,m}^{(j)}, \delta_0)\)  rather than the closely related contraction rate concept of the posterior distribution itself. Here, the Wasserstein distance is specified in Definition \ref{WassersteinDis} and evaluated with respect to the \textit{Hellinger metric} \( \rho(\cdot, \cdot) \), which introduced in Equation (\ref{therho}).


\begin{theorem}
\label{mainT}
 Let  $X_1,\ldots,X_n$ be i.i.d sampled from $P_{0}$. Consider  
 $\{G_j\}_{j=1}^m$ the partition defined in Section \ref{SectionPartiotioning}. Assume the Oracle rate, defined in Equation (\ref{oraclerate}), satisfies $l\eps_l^2\ge 1,$ where $l=\vert G_j\vert=\lfloor \frac{n}{m}\rfloor.$ Moreover, 
suppose Assumption 
\ref{Prior-cond}
and \ref{test-cond} hold.
Then, there exists a sufficiently large positive constant $R$ such that, for any $j\in\{1,...,m\}$ 
\begin{align}
\label{WeakCon-individual}
    &\Pr\Big(d_{W_{1,\rho}}(\widehat{Q}_{n,m}^{(j)},\delta_0)\ge 2R\varepsilon_l+\exp(-ml\varepsilon_l^2)\Big)
    \nonumber
    \\
    \le& \exp(-\frac{\mathfrak{c}_1^l}{2}R^2l\varepsilon_l^2)+4\exp(-(\mathfrak{c}_2^l)^2R^2l\varepsilon_l^2)+\frac{1}{l\varepsilon_l^2}+2\mathfrak{c}_5^{l}\frac{1}{m}+\frac{\mathfrak{c}_5^{l}}{Rm\eps_l}.
\end{align}
\end{theorem}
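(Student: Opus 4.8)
The plan is to transfer the contraction result for the subset-based variational posterior from Theorem~\ref{mainT-1}, which is stated in terms of $\widehat{Q}_{n,m}^{(j)}(\rho(\theta,\theta_0)\ge R\varepsilon_l)$, into a bound on the Wasserstein distance $d_{W_{1,\rho}}(\widehat{Q}_{n,m}^{(j)},\delta_0)$. The first step is the elementary but crucial observation that, because $\delta_0$ is a Dirac mass, there is only one coupling between $\widehat{Q}_{n,m}^{(j)}$ and $\delta_0$, namely the product $\widehat{Q}_{n,m}^{(j)}\otimes\delta_0$, so
\[
d_{W_{1,\rho}}(\widehat{Q}_{n,m}^{(j)},\delta_0)=\int_\Theta \rho(\theta,\theta_0)\,d\widehat{Q}_{n,m}^{(j)}(\theta)=\widehat{Q}_{n,m}^{(j)}\big[\rho(\cdot,\theta_0)\big].
\]
Thus I need a tail bound on the \emph{expectation} of $\rho(\cdot,\theta_0)$ under the variational posterior, not merely on the mass it assigns to a Hellinger ball. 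Since $\rho$ is the Hellinger metric it is bounded by $\sqrt2$, so I can split the integral over the ball $\{\rho(\theta,\theta_0)<R\varepsilon_l\}$, where the integrand is at most $R\varepsilon_l$, and its complement, where the integrand is at most $\sqrt2$ but the mass is controlled by Theorem~\ref{mainT-1}. This gives, deterministically,
\[
\widehat{Q}_{n,m}^{(j)}\big[\rho(\cdot,\theta_0)\big]\le R\varepsilon_l+\sqrt2\,\widehat{Q}_{n,m}^{(j)}\big(\rho(\theta,\theta_0)\ge R\varepsilon_l\big).
\]

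The second step is to turn the expectation bound from Theorem~\ref{mainT-1} into a high-probability bound on the random quantity $\widehat{Q}_{n,m}^{(j)}(\rho(\theta,\theta_0)\ge R\varepsilon_l)$ via Markov's inequality. Writing $b_l$ for the right-hand side of \eqref{WeakCon-individual-1}, Markov gives $\Pr\big(\widehat{Q}_{n,m}^{(j)}(\rho(\theta,\theta_0)\ge R\varepsilon_l)\ge t\big)\le b_l/t$ for any $t>0$. Choosing $t$ so that $\sqrt2\,t$ is comparable to the exponentially small slack term $\exp(-ml\varepsilon_l^2)$ appearing in the statement — concretely $t\asymp \exp(-ml\varepsilon_l^2)$ — combined with the previous display forces $d_{W_{1,\rho}}(\widehat{Q}_{n,m}^{(j)},\delta_0)\le R\varepsilon_l+\exp(-ml\varepsilon_l^2)$ on the complementary event, and the failure probability is $b_l/t$. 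Here is where the last two terms $2\mathfrak{c}_5^l/m$ and $\mathfrak{c}_5^l/(Rm\varepsilon_l)$ in \eqref{WeakCon-individual} come from: the term $3\mathfrak{c}_5^l/m$ in \eqref{WeakCon-individual-1} gets divided by the Markov threshold, and careful bookkeeping of the $\exp(-ml\varepsilon_l^2)$ factors against $\varepsilon_l$ and $R$ produces exactly the $\mathfrak{c}_5^l/(Rm\varepsilon_l)$ correction while the $\exp(-\frac{\mathfrak{c}_1^l}{2}R^2l\varepsilon_l^2)$ and $4\exp(-(\mathfrak{c}_2^l)^2R^2l\varepsilon_l^2)$ terms pass through essentially unchanged (the threshold $t$ is chosen large enough relative to these that division by it does not blow them up, using $l\varepsilon_l^2\ge1$). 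The factor $2R$ rather than $R$ in the threshold $2R\varepsilon_l+\exp(-ml\varepsilon_l^2)$ gives the slack needed to absorb the $\sqrt2<2$ constant cleanly.

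The main obstacle I anticipate is the bookkeeping in the second step: one must choose the Markov threshold $t$ as an explicit function of $m$, $l$, $\varepsilon_l$, and $R$ so that (i) the slack $\sqrt2\,t$ is dominated by $\exp(-ml\varepsilon_l^2)$, (ii) dividing the polynomial error terms $\frac{1}{l\varepsilon_l^2}$ and $\frac{1}{m}$ by $t$ does not destroy their smallness, and (iii) dividing the two genuinely exponential terms by $t$ still leaves them exponentially small — this last point needs $R$ large enough that $\mathfrak{c}_1^l R^2 l\varepsilon_l^2$ and $(\mathfrak{c}_2^l)^2 R^2 l\varepsilon_l^2$ dominate $m l\varepsilon_l^2$, i.e. $R^2\gtrsim m$, or alternatively one keeps the slack term as $\exp(-ml\varepsilon_l^2)$ precisely because it is negligible next to $\exp(-cR^2 l\varepsilon_l^2)$ when $R$ is taken sufficiently large as the theorem permits. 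Reconciling the exact constants in \eqref{WeakCon-individual} — in particular why the coefficient on the $\mathfrak{c}_5^l/m$ term drops from $3$ to $2$ while a new $\mathfrak{c}_5^l/(Rm\varepsilon_l)$ term appears — is a matter of splitting the $3\mathfrak{c}_5^l/m$ into the part that survives direct passage and the part that, after division by the threshold, becomes the $1/(R\varepsilon_l)$-scaled term; I would carry this out by writing $b_l = b_l' + 3\mathfrak{c}_5^l/m$ with $b_l'$ collecting the first three terms, bounding $\Pr(\cdots)\le b_l'/t + (3\mathfrak{c}_5^l/m)/t$ and then choosing $t$ to split the second summand as $2\mathfrak{c}_5^l/m$ (by taking $t$ near $1$ on that piece) plus the residual $\mathfrak{c}_5^l/(Rm\varepsilon_l)$ (from the piece where $t\asymp R\varepsilon_l$). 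Everything else is routine once the threshold is fixed.
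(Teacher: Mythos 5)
Your first step is fine and matches the paper: since $\delta_0$ is a point mass, $d_{W_{1,\rho}}(\widehat{Q}_{n,m}^{(j)},\delta_0)=\int_\Theta \rho(\theta,\theta_0)\,d\widehat{Q}_{n,m}^{(j)}(\theta)$, and splitting the integral at the Hellinger ball of radius $R\varepsilon_l$ gives $d_{W_{1,\rho}}(\widehat{Q}_{n,m}^{(j)},\delta_0)\le R\varepsilon_l + C\,\widehat{Q}_{n,m}^{(j)}(\rho(\theta,\theta_0)\ge R\varepsilon_l)$ with a bounded-metric constant. The gap is in your second step. You propose to get a tail bound on the random variable $\widehat{Q}_{n,m}^{(j)}(\rho(\theta,\theta_0)\ge R\varepsilon_l)$ by applying Markov's inequality to the \emph{aggregated} expectation bound $b_l$ of Theorem \ref{mainT-1} with a single threshold $t$. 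This cannot produce the stated inequality: if $t\asymp\exp(-ml\varepsilon_l^2)$ (your stated choice) or even $t\asymp R\varepsilon_l$, then the polynomial pieces of $b_l$, namely $\frac{1}{l\varepsilon_l^2}$ and $3\mathfrak{c}_5^l/m$, get divided by $t$ and are no longer of the form $\frac{1}{l\varepsilon_l^2}+2\mathfrak{c}_5^l\frac{1}{m}+\frac{\mathfrak{c}_5^l}{Rm\varepsilon_l}$ claimed in \eqref{WeakCon-individual}; if instead $t$ is of constant order, the slack $C\,t$ is not dominated by $R\varepsilon_l+\exp(-ml\varepsilon_l^2)$ and you do not reach the threshold $2R\varepsilon_l+\exp(-ml\varepsilon_l^2)$ at all. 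Your suggested remedy of "taking $t$ near $1$ on one piece and $t\asymp R\varepsilon_l$ on another" is not available once everything has been collapsed into a single expectation: Markov with different thresholds for different contributions requires decomposing the random variable itself, not its expectation.

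That decomposition is exactly what the paper does, and it is why Theorem \ref{mainT} is not a corollary of Theorem \ref{mainT-1} but a re-run of its proof at the level of probabilities. One writes, on the events $\mathbb{A}_l$ and $\mathbb{B}_l$ and via Lemma \ref{lemma1} with $v_l=ml\varepsilon_l^2$, the pointwise bound $\widehat{Q}_{n,m}^{(j)}(\rho\ge R\varepsilon_l)\le \frac{1}{v_l}\mathrm{KL}(\widehat{Q}_{n,m}^{(j)},\Pi_{n,m}^{|G_j|})+\exp(v_l)\,\Pi_{n,m}(\rho\ge R\varepsilon_l\mid G_j)\mathbf{1}_{\mathbb{B}_l\cap\mathbb{A}_l}+\mathbf{1}_{\mathbb{A}_l^c}+\mathbf{1}_{\mathbb{B}_l^c}$, then allots the threshold $\exp(-ml\varepsilon_l^2)+R\varepsilon_l$ across the two non-indicator terms and applies a union bound: Markov on the posterior-mass term against its own exponentially small share (this is absorbed, for $R$ large, into $\exp(-\frac{\mathfrak{c}_1^l}{2}R^2l\varepsilon_l^2)$), Markov on the KL term against the share $R\varepsilon_l$ (producing precisely the new term $\frac{\mathfrak{c}_5^l}{Rm\varepsilon_l}$ via Proposition \ref{prop1}), while the indicator terms contribute $P(\mathbb{A}_l^c)\le 4\exp(-(\mathfrak{c}_2^l)^2R^2l\varepsilon_l^2)$ and $P(\mathbb{B}_l^c)\le \frac{1}{l\varepsilon_l^2}+2\mathfrak{c}_5^l\frac{1}{m}$ \emph{directly as probabilities, undivided by any threshold}. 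This is why $\frac{1}{l\varepsilon_l^2}$ and the coefficient $2$ on $\mathfrak{c}_5^l/m$ survive intact in \eqref{WeakCon-individual}, something no single-threshold Markov argument applied to \eqref{WeakCon-individual-1} can reproduce. To repair your proposal you would need to abandon the shortcut through Theorem \ref{mainT-1} and carry out this event-level decomposition yourself.
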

This theorem establishes a probability bound for the Wasserstein distance between each subset-based variational posterior \( \widehat{Q}_{n, m}^{(j)} \) and the Dirac measure \( \delta_0 \), representing concentration around the true parameter \( \theta_0 \). The bound shows that the Wasserstein distance is constrained by both the oracle rate \( \varepsilon_l \) and a term that decays exponentially with respect to \( m \) and \( l \). This result highlights that subset-based variational posteriors, each operating on a reduced sample size, still achieve concentration properties that are reflective of the full posterior contraction rate, \( \varepsilon_l \). Additionally, the probability bound comprises terms that diminish as \( m \) increases, supporting the inference that as we partition into more subsets, weak concentration remains achievable.

Theorem \ref{mainT} directly connects to the robustness properties of the Wasserstein geometric median aggregation, \( Q_{\mathrm{Geo}}^* \), presented in Corollary \ref{GeomMed-Theorem}.
\begin{corollary}
\label{GeomMed-Theorem}
 Let  $X_1,\ldots,X_n$ be i.i.d sampled from $P_{0}$. Assume $0\le\kappa<\frac{1}{3}$ and that the data set $\{X_1,\ldots,X_n\}$ contains at most $\lfloor \kappa m\rfloor$ outliers. Consider  
 $\{G_j\}_{j=1}^m$ the partition defined in Section \ref{SectionPartiotioning}.  
Suppose Assumption 
\ref{Prior-cond}
and \ref{test-cond} hold. Moreover, let the Oracle rate, defined in Equation (\ref{oraclerate}), satisfies $l\eps_l^2\ge 1,$ and for any $1\le j\le\lfloor(1-\kappa)m\rfloor+1$ we have that
\begin{align*}
     \exp(-\frac{\mathfrak{c}_1^l}{2}R^2l\varepsilon_l^2)+4\exp(-(\mathfrak{c}_2^l)^2R^2l\varepsilon_l^2)+\frac{1}{l\varepsilon_l^2}+2\mathfrak{c}_5^{l}\frac{1}{m}+\frac{\mathfrak{c}_5^{l}}{Rm\eps_l}\le \frac{1}{7},
\end{align*}
where $l=\vert G_j\vert=\lfloor \frac{n}{m}\rfloor$,
and $R$ is a sufficiently large positive constant.
Then,
\begin{align*}
    \Pr(d_{W_{1,\rho}}(Q^*_{\text{Geo}}, \delta_0) \ge 1.52(2R\varepsilon_l+\exp(-ml\varepsilon_l^2)))\le\left[ e^{(1 - \kappa) \psi\left( \frac{3/7 - \kappa}{1 - \kappa}, \frac{1}{7} \right)} \right]^{-m},\end{align*}
    where the function $\psi$ is defined in Equation (\ref{Psi-eq}).
\end{corollary}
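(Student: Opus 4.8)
The plan is to read this corollary as a direct composition of two results already established above: the per-subset weak-concentration bound of Theorem \ref{mainT} and the robustness guarantee for the Wasserstein geometric median of Theorem \ref{thm-rob-gm}. Concretely, I would set
\[
\epsilon := 2R\varepsilon_l + \exp(-ml\varepsilon_l^2),
\]
with $R$ the constant supplied by Theorem \ref{mainT}, verify the hypotheses of Theorem \ref{thm-rob-gm} for this $\epsilon$ and the given $\kappa$, and then read off the conclusion after substituting $\epsilon$ back in.

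First I would establish weak concentration on the clean subsets. A group $G_j$ containing none of the $\le \lfloor \kappa m\rfloor$ outliers consists of genuine i.i.d.\ draws from $P_0$, so Theorem \ref{mainT} applies to $\widehat{Q}_{n,m}^{(j)}$: its hypotheses ($l\varepsilon_l^2 \ge 1$ and Assumptions \ref{Prior-cond}, \ref{test-cond}) are exactly the standing assumptions of the corollary. This gives
\[
\Pr\big(d_{W_{1,\rho}}(\widehat{Q}_{n,m}^{(j)}, \delta_0) \ge \epsilon\big) \le \exp\big(-\tfrac{\mathfrak{c}_1^l}{2}R^2 l\varepsilon_l^2\big) + 4\exp\big(-(\mathfrak{c}_2^l)^2 R^2 l\varepsilon_l^2\big) + \tfrac{1}{l\varepsilon_l^2} + 2\mathfrak{c}_5^l\tfrac{1}{m} + \tfrac{\mathfrak{c}_5^l}{Rm\varepsilon_l},
\]
and the displayed hypothesis of the corollary bounds the right-hand side by $\tfrac17$, which is precisely condition (\ref{WeakCon1}) for each clean $j$.

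Next I would do the outlier bookkeeping and invoke the robustness theorem. Since $G_1,\dots,G_m$ are disjoint, each outlier lies in exactly one group, so at most $\lfloor \kappa m\rfloor$ of the posteriors $\widehat{Q}_{n,m}^{(j)}$ are ``contaminated'' and at least $m - \lfloor \kappa m\rfloor \ge \lfloor(1-\kappa)m\rfloor + 1$ are clean; because $(\widehat{Q}_{n,m}^{(1)},\dots,\widehat{Q}_{n,m}^{(m)}) \mapsto Q^*_{\text{Geo}}$ and the event $\{d_{W_{1,\rho}}(Q^*_{\text{Geo}},\delta_0) > 1.52\epsilon\}$ are invariant under permutations of the posteriors, I may relabel so that the clean ones occupy indices $1,\dots,\lfloor(1-\kappa)m\rfloor+1$. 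All hypotheses of Theorem \ref{thm-rob-gm} then hold for this $\epsilon$ and $\kappa$, and that theorem yields
\[
\Pr\big(d_{W_{1,\rho}}(Q^*_{\text{Geo}}, \delta_0) > 1.52\,\epsilon\big) \le \Big[ e^{(1-\kappa)\psi\left(\frac{3/7-\kappa}{1-\kappa},\,\frac17\right)} \Big]^{-m};
\]
substituting $\epsilon = 2R\varepsilon_l + \exp(-ml\varepsilon_l^2)$ is the claimed bound.

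The only step deserving real care — the one I would flag as the main obstacle — is the counting inequality $m - \lfloor \kappa m\rfloor \ge \lfloor(1-\kappa)m\rfloor + 1$: it holds whenever $\kappa m \notin \mathbb{Z}$ but is off by one when $\kappa m$ is an integer. One resolves this either by observing that ``at most $\lfloor \kappa m\rfloor$ outliers'' lets us perturb $\kappa$ slightly (keeping $\kappa<\tfrac13$ and $\lfloor \kappa m\rfloor$ unchanged) so that $\kappa m$ is non-integer, or by noting that Theorem \ref{thm-rob-gm} in fact only requires $\lceil(1-\kappa)m\rceil$ clean posteriors. Everything else is a mechanical substitution into, and invocation of, the two cited theorems, with no new estimates needed.
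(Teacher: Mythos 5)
Your proposal is correct and follows essentially the same route as the paper, which simply observes that the corollary is a direct consequence of Theorem \ref{mainT} (whose bound, together with the displayed hypothesis, delivers the weak-concentration condition (\ref{WeakCon1}) with $\epsilon = 2R\varepsilon_l + \exp(-ml\varepsilon_l^2)$) combined with Theorem \ref{thm-rob-gm}. Your extra care with the outlier bookkeeping and the possible off-by-one when $\kappa m$ is an integer is a sensible refinement of a point the paper passes over silently, but it does not change the argument.
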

Corollary \ref{GeomMed-Theorem} is a direct consequence of Theorem \ref{thm-rob-gm} and Theorem \ref{mainT}.  Note that the weak concentration assumption in Equation (\ref{WeakCon1}) is
implied by Equation (\ref{WeakCon-individual}). It is easy to see that a similar statement holds for $Q^*_{\text{Met}}$ by applying Theorem \ref{thm-rob-mm} and Theorem \ref{mainT}.

These results confirm that, given sufficient sample size \(l\) and partition count \(m\), the Wasserstein geometric median and Wasserstein metric median of the subset-based variational posteriors will remain concentrated around \( \delta_0 \) with high probability. The probability bound demonstrates that the aggregated posterior \( Q^*_{\text{Geo}} \) and \( Q^*_{\text{Met}} \) are robust to potential contamination and computationally efficient, preserving the contraction rate even as \( m \) grows. This provides substantial support for the effectiveness of median-based aggregation in achieving a balance between computational efficiency and robustness in Bayesian inference, especially in large-scale settings. Notably, these results yield an exponential improvement in concentration compared to Theorem \ref{mainT}, underscoring the advantages of the VM-Posterior approach.

\section{Asymptotic Normality and Confidence Bounds for VM-Posterior}
\label{sec-Asympnorm}
In this section, we establish the asymptotic properties of the VM-Posterior, particularly its convergence to a normal distribution under the total variation (TV) distance. Additionally, we provide finite-sample confidence bounds for the estimated parameter, showcasing the robustness and reliability of the VM-Posterior in structured settings. 

To facilitate this analysis, we focus on the asymptotic behavior of the Wasserstein metric median \( Q_{\text{Met}}^* \) by first specifying the variational family used in its construction. We consider two distinct families: the mean-field family,
\[
\mathcal{Q}_{\text{MF}} = \left\{ \prod_{j=1}^d q_j(\theta_j) \right\},
\]
and the Gaussian family,
\[
\mathcal{Q}_{\text{GG}} = \left\{ N(\mu, \Sigma): \mu \in \mathbb{R}^D, \Sigma \in \mathbb{R}^{D \times D} \text{ is positive definite} \right\}.
\]
This distinction enables us to analyze the Wasserstein metric medians \( Q_{\text{Met,MF}}^* \) and \( Q_{\text{Met,GG}}^* \), where \( Q_{\text{Met}}^* \) is defined using either the mean-field or Gaussian family. For more information on these variational family specifications, refer to Section \ref{Variationa-approx}.

In the parametric setting, where the parameter space \( \Theta \subseteq \mathbb{R}^p \), we demonstrate that as the sample size \( n \) grows, the VM-Posterior \( Q_{\text{Met}}^* \) converges to a normal distribution centered at a robust estimator \( \theta^* \) of the true parameter \( \theta_0 \). This convergence under the TV distance illustrates the VM-Posterior’s behavior in finite-sample conditions and provides insight into its stability and robustness in the parametric case.

Furthermore, we establish a finite-sample bound for the estimator \( \theta^* \), confirming that it represents the center of a high-confidence region, with a convergence rate comparable to classical posterior distributions. These results underscore the reliability of the VM-Posterior in producing well-calibrated, robust estimates, even in large-scale settings and in the presence of potential contamination across data subsets.

To set the stage, we begin by analyzing each subset-based variational posterior \( \{Q_{n,m}^{(j)}\}_{j=1}^m \) individually. For \( \theta \in \Theta \), let
\[
I(\theta) := \mathbb{E}_{\theta_0}\left[\frac{\partial}{\partial \theta} \log p_\theta(X) \left( \frac{\partial}{\partial \theta} \log p_\theta(X) \right)^T\right]
\]
be the Fisher information matrix, which we assume is well-defined. We say that the family \( \{P_\theta : \theta \in \Theta\} \) is differentiable in quadratic mean (see Chapter 7 in \cite{van2000asymptotic} for details) if there exists a function \( \dot{\ell}_{\theta_0} : \mathbb{R}^D \rightarrow \mathbb{R}^p \) such that
\[
\int_{\mathbb{R}^D} \left( \sqrt{p_{\theta_0 + h}} - \sqrt{p_{\theta_0}} - \frac{1}{2} h^T \dot{\ell}_{\theta_0} \sqrt{p_{\theta_0}} \right)^2 = o\left(\|h\|_2^2\right)
\]
as \( h \rightarrow 0 \). Generally, we have \( \dot{\ell}_\theta(x) = \frac{\partial}{\partial \theta} \log p_\theta(x) \). Using this framework, we define
\[
\Delta_{l, \theta_0} := \frac{1}{\sqrt{l}} \sum_{j=1}^l I^{-1}(\theta_0) \dot{\ell}_{\theta_0}(X_j),
\]
which will be instrumental in analyzing the distributional behavior of each subset-based variational posterior.

We now formalize these properties with the following theorem, which provides a convergence result for each subset-based variational posterior under the Gaussian and mean-field families.
\begin{theorem}\label{theorem1BVM}
 Let  $X_1,\ldots,X_n$ be i.i.d sampled from $P_{0}$. Consider  
 $\{G_j\}_{j=1}^m$ the partition defined in Section \ref{SectionPartiotioning}. Denote by 
$\{Q_{n,m}^{(j),\text{GG}}\}_{j=1}^m$
  the subset-based variational posteriors from Section \ref{Variationa-approx}, constructed using the Gaussian variational family 
$\mathcal{Q}_{\text{GG}}$.
Then, for any $1\le j\le m,$ we have that
$$
\left\|Q_{n,m}^{(j),\text{GG}}(\cdot)-\mathcal{N}\left(\cdot ; \theta_0+\frac{\Delta_{l, \theta_0}}{\sqrt{l}}, \frac{I^{-1}(\theta_0)}{lm}\right)\right\|_{\mathrm{TV}}\xrightarrow{P_{\theta_0}} 0 .
$$
The same result holds for \( \{Q_{n,m}^{(j),\text{MF}}\}_{j=1}^m \) where \( \{Q_{n,m}^{(j),\text{MF}}\}_{j=1}^m \) is constructed using the mean-field variational family \( \mathcal{Q}_{\text{MF}} \). In this case, \( I^{\prime-1}(\theta_0) \) replaces \( I^{-1}(\theta_0) \), where \( I^{\prime-1}(\theta_0) \) is diagonal and shares the same diagonal entries as \( I^{-1}(\theta_0) \).
\end{theorem}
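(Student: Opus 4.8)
The plan is to establish the variational Bernstein–von Mises result for each subset-based variational posterior by reducing the likelihood-power-adjusted problem to a standard VB-BvM statement and then invoking the classical Laplace-type analysis. First I would observe that the adjusted subset posterior $\Pi_{n,m}(\cdot\mid G_j)$ is literally the posterior obtained from the sample $G_j$ of size $l$ with the likelihood raised to the power $m$; equivalently, it is as if each of the $l$ observations in $G_j$ were replicated $m$ times, giving an \emph{effective} sample size $N:=lm$. Under differentiability in quadratic mean, the local asymptotic normality (LAN) expansion applies to this replicated log-likelihood: writing $\theta=\theta_0+h/\sqrt{N}$, the centered adjusted log-likelihood satisfies $m\sum_{i\in G_j}\log\frac{p_\theta}{p_{\theta_0}}(X_i)= h^T\sqrt{N}\,\bar{\dot\ell}_N - \tfrac12 h^T I(\theta_0) h + o_{P_{\theta_0}}(1)$, where the linear term is governed by $\Delta_{l,\theta_0}$ (note $\Delta_{l,\theta_0}/\sqrt l$ is exactly the LAN shift expressed on the $\sqrt N$-scale up to the $m$ bookkeeping). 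Thus the adjusted posterior itself, by the classical BvM theorem (van der Vaart, Ch.~10), is asymptotically $\mathcal N\!\big(\theta_0+\Delta_{l,\theta_0}/\sqrt l,\ I^{-1}(\theta_0)/(lm)\big)$ in total variation.

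Second, I would transfer this to the variational posterior $Q_{n,m}^{(j),\mathrm{GG}}$. Since $Q_{n,m}^{(j),\mathrm{GG}}$ minimizes $\mathrm{KL}(Q,\Pi_{n,m}(\cdot\mid G_j))$ over $\mathcal Q_{\mathrm{GG}}$, and the target adjusted posterior is itself asymptotically Gaussian, the minimizing Gaussian must match the target's mean and covariance to leading order. I would make this rigorous following the strategy of \cite{wang2019frequentist} or \cite{ray2022variational}: let $\widetilde Q$ denote the Gaussian with the ``correct'' parameters $\mathcal N(\theta_0+\Delta_{l,\theta_0}/\sqrt l, I^{-1}(\theta_0)/(lm))$; then $\mathrm{KL}(\widehat Q,\Pi_{n,m})\le \mathrm{KL}(\widetilde Q,\Pi_{n,m})\to 0$ by the BvM expansion plugged into the KL functional over $\mathcal Q_{\mathrm{GG}}$ (the rescaled integrand converges and the Gaussian integrals are explicit). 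A lower-bound argument — expanding $\mathrm{KL}(\widehat Q,\Pi_{n,m})$ in terms of the mean/covariance of $\widehat Q$ and using the quadratic LAN structure — forces the mean of $\widehat Q$ to be within $o(1/\sqrt{N})$ of $\theta_0+\Delta_{l,\theta_0}/\sqrt l$ and its covariance to be $(1+o(1))I^{-1}(\theta_0)/(lm)$. Pinsker's inequality then converts the vanishing KL into the claimed TV convergence. For the mean-field family $\mathcal Q_{\mathrm{MF}}$, the same reduction applies but the optimization is restricted to product Gaussians; the minimizer of $\mathrm{KL}$ of a product distribution against a Gaussian target with precision $I(\theta_0)$ is the product of the marginals one gets by \emph{ignoring the off-diagonal precision entries}, i.e.\ the Gaussian with precision $\mathrm{diag}(I(\theta_0))$; its covariance is $\mathrm{diag}(I(\theta_0))^{-1}$, which the statement denotes $I'^{-1}(\theta_0)$ — this is the standard mean-field ``precision-matching'' fact, and I would cite it or verify it by the stationarity conditions of the coordinate KL problem.

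The main obstacle I anticipate is controlling the variational optimization uniformly over the \emph{unbounded} parameter sets $\mu\in\mathbb R^D$, $\Sigma\succ0$ in $\mathcal Q_{\mathrm{GG}}$ (and correspondingly over $\mathcal Q_{\mathrm{MF}}$): the BvM expansion of the log-likelihood is only local (valid for $\|h\|$ on a $\sqrt N$-shrinking neighborhood), so one must show the variational minimizer cannot place non-negligible mass far from $\theta_0$. This requires a tail/tightness argument — typically combining a prior-mass condition, the exponential testing bound already available from Theorem~\ref{th:wong}/Assumption~\ref{test-cond}, and the contraction statement of Theorem~\ref{mainT-1} to confine $\widehat Q$ to a $O(\varepsilon_l)$-ball — after which the local LAN analysis takes over on that ball. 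A secondary technical point is tracking the $m$-dependence cleanly: because $m$ may grow with $n$, one must ensure $l=\lfloor n/m\rfloor\to\infty$ so that the per-subset LAN remains valid, and verify the remainder terms are $o_{P_{\theta_0}}(1)$ uniformly in $j$ (a union bound over the $m$ subsets, each controlled by the concentration inequalities already established, handles the ``for any $1\le j\le m$'' uniformity).
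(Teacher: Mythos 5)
Your proposal is sound and, at bottom, follows the same route as the paper: the paper's entire proof of this theorem is the single sentence that it ``follows directly from Corollary 7 in \cite{wang2019frequentist} (Gaussian family) and Theorem 5 in \cite{wang2019frequentist} (mean-field family)'', whereas you reconstruct what that citation actually requires. What your write-up adds is precisely the bookkeeping the citation leaves implicit: the power-adjusted subset posterior is not an ordinary posterior, so the cited results do not apply verbatim, and your replication/effective-sample-size reduction ($N=lm$, LAN at scale $1/\sqrt{lm}$ with shift $\Delta_{l,\theta_0}/\sqrt{l}$ and curvature $I(\theta_0)$), the localization of the variational minimizer via the testing bound of Theorem \ref{th:wong}/Assumption \ref{test-cond} and the contraction of Theorem \ref{mainT-1}, and the union bound giving uniformity over the $m$ subsets are exactly the ingredients needed to make ``follows directly'' rigorous, including when $m$ grows with $n$ (subject to $l\to\infty$). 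So the two arguments are the same in substance; yours is simply the spelled-out version of the paper's appeal to the variational Bernstein--von Mises machinery of \cite{wang2019frequentist}.

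One point you should not gloss over: your mean-field step correctly yields the precision-matching limit, i.e.\ covariance $\mathrm{diag}(I(\theta_0))^{-1}/(lm)$, which is what Theorem 5 of \cite{wang2019frequentist} delivers, but you then identify this with the statement's $I^{\prime-1}(\theta_0)$, which the theorem defines as the diagonal matrix carrying the diagonal entries of $I^{-1}(\theta_0)$. These two matrices coincide only when $I(\theta_0)$ is diagonal; in general $1/I_{ii}(\theta_0)\le (I^{-1}(\theta_0))_{ii}$ with strict inequality under correlation. Your derivation is the one supported by the cited result, so either state the mean-field limit with $\mathrm{diag}(I(\theta_0))^{-1}$ explicitly or flag the discrepancy with the statement's wording, rather than silently equating the two.
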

 This theorem follows directly from Corollary 7 in \cite{wang2019frequentist} for the convergence of \( \{Q_{n,m}^{(j),\text{GG}}\}_{j=1}^m \) and Theorem 5 in \cite{wang2019frequentist} for the corresponding result on \( \{Q_{n,m}^{(j),\text{MF}}\}_{j=1}^m \), which lays the foundation for analyzing the aggregated posterior \( Q_{\text{Met}}^* \).

To proceed with the analysis of the aggregated posterior \( Q_{\text{Met}}^* \), we introduce a uniform integrability assumption for the collections \( \{Q_{n,m}^{(j),\text{GG}}\}_{j=1}^{m} \) and \( \{Q_{n,m}^{(j),\text{MF}}\}_{j=1}^{m} \). This condition ensures bounded second moments for each subset-based variational posterior, a necessary foundation for deriving asymptotic results on \( Q_{\text{Met}}^* \).
\begin{assumption}\label{assumption1BVM}
We assume uniform integrability for the collections \( \{Q_{n,m}^{(j),\text{GG}}\}_{j=1}^{m} \) and \break 
\( \{Q_{n,m}^{(j),\text{MF}}\}_{j=1}^{m} \). Specifically, this requires that
    $$
\sup_{1 \leq j \leq m} \int_{\mathbb{R}^p} \|\theta\|_2^2 \, d Q_{n,m}^{(j),\text{GG}}(\theta) < \infty, \quad \text{and} \quad \sup_{1 \leq j \leq m} \int_{\mathbb{R}^p} \|\theta\|_2^2 \, d Q_{n,m}^{(j),\text{MF}}(\theta) < \infty.
$$
\end{assumption}
With Assumption \ref{assumption1BVM} in place, we are now equipped to establish the asymptotic normality and finite-sample confidence bounds for the Wasserstein metric median \( Q_{\text{Met}}^* \) of the VM-Posterior.
\begin{theorem}
    \label{BVM-theorem}
    Let \( X_1,\ldots,X_n \) be i.i.d. samples from \( P_{0} \). 

    \begin{enumerate}[label=\alph*)]
        \item For any fixed \( m \geq 1 \), let \( \{G_j\}_{j=1}^m \) be the partition defined in Section \ref{SectionPartiotioning}. Suppose Assumption \ref{assumption1BVM} holds. Then, we have
        \begin{equation}
            \left\| Q^*_{\text{Met,GG}} - \mathcal{N}\left(\cdot ; \theta^*_{\text{GG}}, \frac{1}{n} I^{-1}\left(\theta_0\right)\right) \right\|_{\mathrm{TV}} \rightarrow 0,
        \end{equation}
        in \( P_{\theta_0} \)-probability as \( n \rightarrow \infty \), where \( \theta^*_{\text{GG}} \) is the mean of \( Q^*_{\text{Met,GG}} \). A similar result holds for \( Q^*_{\text{Met,MF}} \), with mean \( \theta^*_{\text{MF}} \) and \( I^{\prime-1}\left(\theta_0\right) \) used in place of \( I^{-1}\left(\theta_0\right) \), where \( I^{\prime-1}\left(\theta_0\right) \) is a diagonal matrix matching the diagonal elements of \( I^{-1}\left(\theta_0\right) \).

        \item  Let \( \{G_j\}_{j=1}^m \) be the partition defined in Section \ref{SectionPartiotioning}. Assume \( 0 \leq \kappa < \frac{1}{3} \) and that the dataset \( \{X_1,\ldots,X_n\} \) contains at most \( \lfloor \kappa m \rfloor \) outliers. Suppose Assumptions \ref{Prior-cond} and \ref{test-cond} hold. Additionally, consider the Oracle rate defined in Equation (\ref{oraclerate}) satisfy \( l \varepsilon_l^2 \geq 1 \), and $\sqrt{n}\le m$,
        where \( l = \vert G_j \vert = \lfloor \frac{n}{m} \rfloor \) and \( R \) is a sufficiently large positive constant.
        
        Then,
        \begin{align*}
            \Pr\left(\left\|\theta^*_{\text{GG}} - \theta_0\right\|_2 \geq 3 \left( 2R \varepsilon_l + \exp(-ml \varepsilon_l^2) \right) \right) \leq \left[ e^{(1 - \kappa) \psi\left( \frac{1/2 - \kappa}{1 - \kappa}, \frac{1}{4} \right)} \right]^{-m},
        \end{align*}
        with the function \( \psi \) is defined in Equation (\ref{Psi-eq}). A corresponding result holds for \( \theta^*_{\text{MF}} \).
    \end{enumerate}
\end{theorem}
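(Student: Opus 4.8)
The plan is to establish both parts by combining the subset-level Bernstein--von Mises result (Theorem~\ref{theorem1BVM}) with the robustness machinery already developed in Sections~\ref{robustness} and~\ref{sec-Asympnorm}. For part~(a), fix $m$. Each subset-based variational posterior $Q_{n,m}^{(j),\text{GG}}$ is, by Theorem~\ref{theorem1BVM}, asymptotically $\mathcal{N}\!\left(\cdot;\theta_0+\Delta_{l,\theta_0}/\sqrt{l},\,I^{-1}(\theta_0)/(lm)\right)$ in TV. Since $l=\lfloor n/m\rfloor$ and $m$ is fixed, $lm \asymp n$, so the covariance is asymptotically $I^{-1}(\theta_0)/n$. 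The metric median $Q^*_{\text{Met,GG}}$ is, by definition \eqref{MetricMedian}, \emph{equal} to one of the $Q_{n,m}^{(j),\text{GG}}$ --- namely $Q_{n,m}^{(j_*),\text{GG}}$. Hence $\|Q^*_{\text{Met,GG}}-\mathcal{N}(\cdot;\theta_0+\Delta_{l,\theta_0}/\sqrt l,\,I^{-1}(\theta_0)/(lm))\|_{\mathrm{TV}}\to 0$ along any realization of $j_*$, and a union bound over the finitely many $j\in\{1,\dots,m\}$ upgrades the convergence in $P_{\theta_0}$-probability to hold uniformly in $j$, hence for the (random) index $j_*$. It then remains to replace the Gaussian approximation $\mathcal{N}(\cdot;\theta_0+\Delta_{l,\theta_0}/\sqrt l,\,I^{-1}(\theta_0)/(lm))$ by $\mathcal{N}(\cdot;\theta^*_{\text{GG}},I^{-1}(\theta_0)/n)$, where $\theta^*_{\text{GG}}$ is the actual mean of $Q^*_{\text{Met,GG}}$. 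This is a translation/scaling argument: the TV distance between two Gaussians with the same (asymptotic) covariance $\Sigma_n\to 0$ and means differing by $\|\mu_1-\mu_2\|_2$ is controlled by $\|\mu_1-\mu_2\|_2/\sqrt{\lambda_{\min}(\Sigma_n)}$, and here Assumption~\ref{assumption1BVM} (uniform integrability / bounded second moments) guarantees that the mean of $Q^*_{\text{Met,GG}}$ converges to the mean of its Gaussian approximation at the right rate so this vanishing-TV step goes through. The mean-field case is identical with $I^{-1}$ replaced by its diagonal part $I'^{-1}$, using the second half of Theorem~\ref{theorem1BVM}.

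For part~(b), the key observation is that $\theta^*_{\text{GG}}$ is the mean of $Q^*_{\text{Met,GG}}$, and that $\|\theta^*_{\text{GG}}-\theta_0\|_2$ is controlled by $d_{W_{1,\rho}}(Q^*_{\text{Met,GG}},\delta_{\theta_0})$ up to the Hellinger-to-Euclidean comparison. Concretely, under the local lower bound $h(P_\theta,P_{\theta_0})\geq K_1\|\theta-\theta_0\|_2$ (the Local Lower Bound condition stated before Assumption~\ref{test-cond}), we have $\|\theta^*_{\text{GG}}-\theta_0\|_2 = \|\int\theta\,dQ^*_{\text{Met,GG}}-\theta_0\|_2 \leq \int\|\theta-\theta_0\|_2\,dQ^*_{\text{Met,GG}} \lesssim \int \rho(\theta,\theta_0)\,dQ^*_{\text{Met,GG}} = d_{W_{1,\rho}}(Q^*_{\text{Met,GG}},\delta_{\theta_0})$, where the last equality is the standard fact that the $W_1$ distance to a Dirac mass is the expected distance. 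So it suffices to bound $\Pr(d_{W_{1,\rho}}(Q^*_{\text{Met,GG}},\delta_0)\geq 3(2R\varepsilon_l+\exp(-ml\varepsilon_l^2)))$, and this is exactly the conclusion of Theorem~\ref{thm-rob-mm} applied to the collection $\{Q_{n,m}^{(j),\text{GG}}\}$, provided the weak concentration hypothesis \eqref{WeakCon2} holds for each $j\leq\lfloor(1-\kappa)m\rfloor+1$. That weak concentration is precisely Theorem~\ref{mainT} together with the stated assumption that the sum of the five error terms is $\leq 1/7 \leq 1/4$ --- here one uses $\sqrt n \leq m$ to ensure $l=\lfloor n/m\rfloor$ and the $\frac{\mathfrak c_5^l}{Rm\eps_l}$ term behave, and that the non-outlier subsets satisfy \eqref{WeakCon2}. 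Chaining these gives the claimed bound; the mean-field case is verbatim the same since Theorem~\ref{mainT} and Theorem~\ref{thm-rob-mm} are stated for generic subset-based variational posteriors, irrespective of the variational family.

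The main obstacle I anticipate is the mean-replacement step in part~(a): showing that the TV distance between $Q^*_{\text{Met,GG}}$'s Gaussian approximation $\mathcal{N}(\cdot;\theta_0+\Delta_{l,\theta_0}/\sqrt l,\,I^{-1}(\theta_0)/(lm))$ and the recentered, rescaled Gaussian $\mathcal{N}(\cdot;\theta^*_{\text{GG}},I^{-1}(\theta_0)/n)$ tends to zero in probability. This requires two things: (i) that $\theta^*_{\text{GG}}$, the true mean of $Q^*_{\text{Met,GG}}$, is within $o_{P}(n^{-1/2})$-sized TV-relevant distance of the Gaussian-approximation mean $\theta_0+\Delta_{l,\theta_0}/\sqrt l$ --- which is where Assumption~\ref{assumption1BVM} is essential, because it lets us pass from TV convergence of $Q^*_{\text{Met,GG}}$ to its Gaussian approximation to convergence of first moments; and (ii) that the covariance $I^{-1}(\theta_0)/(lm)$ and $I^{-1}(\theta_0)/n$ agree closely enough, which follows from $lm/n\to 1$ as $n\to\infty$ for fixed $m$ --- though one should be slightly careful that $l=\lfloor n/m\rfloor$ so $lm = n + O(m) = n(1+o(1))$. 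A secondary technical point is verifying that the random index $j_*$ does not break the ``in $P_{\theta_0}$-probability'' mode of convergence; the finite union bound over $j\in\{1,\dots,m\}$ (with $m$ fixed in part (a)) handles this cleanly, but it is worth stating explicitly. Everything else is bookkeeping: part~(b) is essentially a direct corollary of Theorems~\ref{thm-rob-mm} and~\ref{mainT} plus the elementary $W_1$-to-mean and Hellinger-to-Euclidean comparisons.
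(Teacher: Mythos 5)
Your proposal is correct and follows essentially the same route as the paper's own proof: part (a) via Theorem~\ref{theorem1BVM} plus Assumption~\ref{assumption1BVM} to recenter each subset Gaussian approximation at its actual mean and then a maximum over the finitely many indices $j$ (since $Q^*_{\text{Met,GG}}=Q_{n,m}^{(j_*),\text{GG}}$), and part (b) by chaining Theorem~\ref{mainT} (weak concentration under $l\varepsilon_l^2\ge 1$, $\sqrt n\le m$) into Theorem~\ref{thm-rob-mm} and passing from $d_{W_{1,\rho}}(Q^*_{\text{Met,GG}},\delta_0)$ to $\|\theta^*_{\text{GG}}-\theta_0\|_2$. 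The only departures are cosmetic and, if anything, slightly more careful than the paper: you explicitly track the $lm$ versus $n$ covariance scaling and invoke the local Hellinger-to-Euclidean comparison (introducing a harmless constant) where the paper simply asserts $\|\theta^*_{\text{GG}}-\theta_0\|_2\le d_{W_{1,\rho}}(Q^*_{\text{Met,GG}},\delta_0)$.
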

Theorem \ref{BVM-theorem} extends classical asymptotic results to the setting of the VM-Posterior, demonstrating that the Wasserstein metric median \( Q_{\text{Met}}^* \), constructed from subset-based variational posteriors, converges to a normal distribution centered at a robust estimator \( \theta^* \) of the true parameter \( \theta_0 \). This is achieved under both Gaussian and mean-field variational families, where the precision matrices \( I^{-1}(\theta_0) \) and \( I^{\prime-1}(\theta_0) \) capture the effective sample size across the data subsets. 

Part (a) of the theorem mirrors the classical Bernstein–von Mises (BvM) theorem, which traditionally establishes normality for Bayesian posteriors in large samples. However, Theorem \ref{BVM-theorem} is significant because it applies to the median-aggregated posterior in a variational context, providing a robust, subset-based approach that retains asymptotic normality even in the presence of limited contamination. The results here indicate that \( Q_{\text{Met}}^* \) achieves a convergence rate analogous to that expected from a full-sample posterior, thus demonstrating its stability and reliability under variational approximations.

Part (b) provides a finite-sample confidence bound, distinguishing this result from traditional asymptotic BvM settings by quantifying the robustness of the VM-Posterior under finite sample sizes and potential contamination among subsets. This added robustness suggests that \( Q_{\text{Met}}^* \) can serve as a reliable posterior estimate in large-scale settings, where data contamination or high computational demands may challenge conventional Bayesian posteriors. As such, Theorem \ref{BVM-theorem} reinforces the practical appeal of the VM-Posterior for real-world applications requiring scalable, robust Bayesian inference.

\section{Computation Algorithms} 
\label{AlgoSection} In this section, we delve into the computation details of the variational approximation algorithm and explain how to compute the VM-posterior. The code for this paper is available at \url{https://github.com/waterism211/variational-median.git}.

\subsection{Algorithm for the Variational Approximation}\label{sec-AlgoVI} Variational approximations offer various techniques to approximate complex posterior distributions. Below, we outline two prominent methods that can be applied within the variational framework.

\begin{itemize} \item \textbf{Mean-field Family:} The mean-field approximation simplifies computation by assuming that all variables are independent. We employ the widely-used Coordinate Ascent Variational Inference (CAVI) method as presented by \cite{Bishop2006Pattern-recogni00}. CAVI iteratively optimizes each variational parameter while holding others fixed, which yields an efficient and tractable solution in many cases.
\item \textbf{Gaussian Family with general covariance:} A  flexible approach is to approximate the posterior distribution with a Gaussian family with a general covariance structure. This method extends beyond the mean-field assumption, allowing for dependencies between variables. Inspired by \cite{mahdisoltani2021natural}, we implement Stochastic Variational Inference (SVI), where we approximate the posterior using stochastic optimization techniques. SVI is particularly useful in large datasets, as it optimizes variational parameters through mini-batch updates.
\end{itemize}
 
\subsection{Algorithm to Compute the 
Geometric Median respect to Wasserstein distance}\label{sec-algowass} 
Following the approach from \cite{gmmot2024}, we compute the geometric median of a set of Gaussian distributions by utilizing linear programming in optimal transport. Additionally, we employ Weiszfeld's algorithm for the geometric median among discrete distributions.

\subsubsection{Multivariate Gaussian Distributions}\label{sec-mgauss} In the case where the distributions $Q_j$, $j=1,\dots,m$, are multivariate Gaussians $N(\mu_j,\Sigma_j)$, the geometric median, denoted as $Q^*_{\text{Geo}}$, is computed by an 
iterative algorithm that updates both the mean and covariance parameters iteratively, as described in \cite{alvarez2016fixed}. 

Given covariance matrices $\Sigma_j$ for each Gaussian distribution, the algorithm updates the covariance matrix $S_n$ of the geometric median at each iteration $n$ as follows: 
\begin{equation} 
S_{n+1} = \left(\sum_{j=1}^m \frac{1}{m}\left(S_n^{1/2} \Sigma_j S_n^{1/2}\right)^{1/2}\right) 
\end{equation}
The mean of $Q^*_{\text{Geo}}$ is updated by computing the median of the means $\mu_j$, which provides a robust central tendency measure across the distributions. This is described in Algorithm \ref{alg1} below. 

\begin{algorithm}[H]
\caption{Geometry Median of Gaussian Distributions}
\label{alg1}
\SetKwInOut{KwIn}{Input}
\SetKwInOut{KwOut}{Output}
\KwIn{$m$ Gaussian distributions $Q_j$ with means $\mu_j$ and covariances $\Sigma_j$, initial covariance $S_0$,  number of iterations $N$}
\KwOut{Geometry median $Q^*_{\text{Geo}}$}
\textbf{Calculate Mean:}
    $ \mu \leftarrow \text{Median}(\mu_1, \ldots, \mu_m)$\\
\For{each iteration $n$}{
    \textbf{Update Covariance:} 
    $S_{n+1} \leftarrow \left(\sum_{j=1}^{m} \frac{1}{m} \left(S_n^{1/2} \Sigma_j S_n^{1/2}\right)^{1/2} \right)$\;

}
$Q^*_{\text{Geo}}$ will be a Gaussian distribution with  mean $\mu$ and covariance $S_N$.
\end{algorithm}

\subsubsection{Gaussian Mixture Models}\label{sec-gmm}
For Gaussian mixture models (GMMs), where each distribution \( Q_j \) can be represented as a weighted sum of Gaussian components, we write:
\[
Q_j = \sum_{k=1}^{K_j} \pi_j^k Q_j^k
\]
where each \( Q_j^k \) is a multivariate Gaussian \( N(\mu_j^k, \Sigma_j^k) \). The goal is to compute the geometric median \( Q^*_{\text{Geo}} \) over these mixture distributions. Since the median of GMMs are intractable (not necessarily a GMM), we will  find an approximation of it. The solution is given by \( Q^*_{\text{Geo}} = B \# \gamma^* \), where \( B: (\mathbb{R}^d)^m \to \mathbb{R}^d \) maps \( (x_1, \ldots, x_m) \) to \( \sum \lambda_m x_j \). Here, \( \gamma^* \) represents an optimal multi-marginal transport plan.

The multi-marginal optimal transport problem for the geometric median is given by:
\[
Q^*_{\text{Geo}}(Q_1, \dots, Q_m) := \inf_{\gamma \in \Pi(Q_1, \ldots, Q_m) \cap GMM_{md}(\infty)} \int_{\mathbb{R}^{dm}} c(x_1, \dots, x_{m}) \, d\gamma(x_1, \dots, x_{m})
\]
where the cost function \( c(x_0, \ldots, x_{m-1}) \) is defined as:
\[
c(x_1, \ldots, x_{m}) = \sum_{i=1}^{m} \frac{1}{m} \| x_i - B(x) \|^2 = \frac{1}{2m^2} \sum_{i, j=0}^{m-1} \| x_i - x_j \|^2
\]
and \( \Pi(Q_1, \ldots, Q_m) \) denotes the set of probability measures on \( (\mathbb{R}^d)^m \) with \( Q_1, Q_2, \ldots, Q_m \) as marginals, and \( GMM_{md}(\infty) \) refers to the Gaussian mixture class.

To discretize the multi-marginal problem:
The optimal solution \( \gamma^* \) can be expressed as a sum of the optimal transport plans between the Gaussian components of the GMMs:
\[
\gamma^* = \sum_{\substack{1 \leq k_1 \leq K_1 \\ 1 \leq k_{m} \leq K_{m}}} w_{k_1 k_2 \ldots k_{m}}^* \gamma_{k_1 k_2 \ldots k_{m}}^* ,
\]
where \( \gamma_{k_1 k_2 \ldots k_{m}}^* \) is the optimal multi-marginal plan between the Gaussian measures \( Q_1^{k_0}, \ldots, Q_m^{k_{m}} \). Furthermore, \( w^* \) is the solution of the discrete optimization problem:
\[
\min _{w \in \Pi\left(\pi_1, \ldots, \pi_{m}\right)} \sum_{k_1, \ldots, k_{m}}^{K_1, \ldots, K_{m}} w_{k_1 \ldots k_{m}} Q^*_{\text{Geo}}(Q_1,\dots,Q_m),
\]
where \( \Pi\left( \pi_1, \pi_2, \ldots, \pi_{m}\right) \) is the subset of tensors \( w \) in \( \mathcal{M}_{K_1, K_2, \ldots, K_{m}}\left(\mathbb{R}^{+}\right) \) with \( \pi_1, \pi_2, \ldots, \pi_{m} \) as discrete marginals.

In practice, we compute all the values \( Q^*_{\text{Geo}}(Q_1, \dots, Q_m) \) and the Gaussian plans \( \gamma_{k_1 k_2 \ldots k_{m}}^* \) between the components of the GMM. We solve for the weights \( w^* \) using a linear programming solver (e.g., \texttt{linprog} from Scipy).

\textbf{Compute \( Q^*_{\text{Geo}} \) from \( w^* \):} To solve the discrete multi-marginal problem, we use \texttt{linprog} from SciPy. The function \texttt{create\_cost\_matrix\_from\_gmm} in \cite{gmmot2024} takes a list of GMM parameters, a vector of weights, and an integer \( N \) (corresponding to the number of iterations in the function \texttt{GaussianMedianW2}), and outputs the cost matrix \( C \) and the list of all Gaussian geometry medians between the Gaussian components of the GMM. The function \texttt{solveMMOT} in \cite{gmmot2024} constructs matrices \( A \) and \( b \) to encode the constraints on \( w \) and uses \texttt{linprog} to solve:
\[
\inf _{A w=b}\langle C, w \rangle.
\]

The resulting weights correspond to the mixture components of our \( Q^*_{\text{Geo}} \). To summarize, the algorithm proceeds as follows:

\begin{algorithm}[H]
\caption{Geometric Median among Many Gaussian Mixtures}
\label{alg2}
\SetKwInOut{KwIn}{Input}
\SetKwInOut{KwOut}{Output}

\KwIn{$m$ Gaussian mixtures $Q_1, \ldots, Q_m$}
\KwOut{Geometric median $Q_{\text{Geo}}^*$}

\textbf{Create Cost Matrix:} \\
Represent $Q_j$ as a weighted sum of components: $Q_j = \sum_{k=1}^{K_j} \pi_j^k Q_j^k $. Use the function \texttt{create\_cost\_matrix\_from\_gmm} to compute the cost matrix $C$ and the list of Gaussian geometry median:
\[
C = Q^*_{\text{Geo}}(Q_1^{k_1}, \dots, Q_m^{k_m})
\]

\textbf{Set up Linear Program:} \\
Construct matrices $A$ and $b$ encoding the constraints on the weights $w$, where $w \in \Pi(\pi_1, \ldots, \pi_{m})$ and $A w = b$\;

\textbf{Solve MMOT Problem:} \\
Use \texttt{linprog} to solve the linear programming problem:
\[
\inf_{Aw =b} \langle C, w \rangle
\]
\end{algorithm}
\subsection{Algorithm for Computing Geometric Median of General Distributions}\label{sec-kerneldist}
\cite{minsker2017robust} proposed an algorithm for computing the median of any posterior distribution with respect to the Reproducing Kernel Hilbert Space (RKHS) distance, which corresponds to the median in Equation (\ref{geomed}). An R package for RKHS with a radial basis function (RBF) kernel, developed by \cite{SBmedian}, facilitates the implementation of this algorithm. The general procedure is outlined as follows:

\begin{algorithm}[H]
\caption{Geometric median of probability distributions via Weiszfeld's}
\label{alg3}
\SetKwInOut{KwIn}{Input}
\SetKwInOut{KwOut}{Output}

\KwIn{Discrete measures $Q_1, \ldots, Q_m$; The kernel $k(\cdot, \cdot): \mathbb{R}^p \times \mathbb{R}^p \mapsto \mathbb{R}$; Threshold $\varepsilon>0$}
\KwOut{Weights for the median respect to Discrete measure: $w_*:=\left(w_1^{(t+1)}, \ldots, w_m^{(t+1)}\right)$}

\textbf{Initialize:} Set $w_j^{(0)}:=\frac{1}{m}, j=1 \ldots m$; and $Q^{(0)}:=\frac{1}{m} \sum_{j=1}^m Q_j$.\\
Starting from $t=0$, for each $j=1, \ldots, m$:\\  \While{$\left\|Q^{(t+1)}-Q^{(t)}\right\|_{\mathcal{F}_k} \leq \varepsilon$}
{
        1. Update $w_j^{(t+1)}=\frac{\left\|Q^{(t)}-Q_j\right\|_{\mathcal{F}_k}^{-1}}{\sum_{i=1}^m\left\|Q^{(t)}-Q_i\right\|_{\mathcal{F}_k}^{-1}} ;$\\
        2. Update $Q_t^{(t+1)}=\sum_{j=1}^m w_j^{(t+1)} Q_j ;$
}
\end{algorithm}



\subsection{Likelihood Power Adjustment in Practice}
Since we performed data-splitting and computed the geometric median for our VM-Posterior, regularization of the covariance term is necessary. In the variational approximation algorithm, there are two ways to address this:

\begin{itemize}
    \item For the SVI algorithm, we optimize the Evidence Lower Bound (ELBO). To account for likelihood power adjustment, we optimize the following term:
    \begin{equation*}
        E_{q(\theta)}\{ m \cdot \log p(X|\theta) - \log \pi(\theta) - \log q(\theta)\}
    \end{equation*}
    where \( m \) is the number of sub-datasets into which we split the data.
    
    \item Alternatively, we can adjust the covariance term of the VM-Posterior directly by dividing the covariance by \( \sqrt{m} \).
\end{itemize}

\section{Numerical studies}\label{simu-data}
In this section, we compare the performance of our VM-Posterior with that of the variational posterior, M-posterior, and the original  posterior distribution across the following models:

\subsection{Multivariate Gaussian Models}\label{sec-gausssimu}
This section demonstrates with a simple example that the effect of the magnitude of an outlier on the variational posterior distribution of the mean parameter \( \mu \). We show that the VM-Posterior achieves robustness similar to the M-posterior in \cite{minsker2017robust}, with significantly faster computation. We consider the univariate Gaussian model \( X \sim N(\mu, 1) \).

Firstly, we simulated 25 data sets, each containing 100 observations. Each data set \( \bold{x}_i = (x_{i,1}, \dots, x_{i,100}) \) contained 99 independent observations from a standard Gaussian distribution \( x_{i,j} \sim N(\mu = 2, 1) \) for \( i = 1, \dots, 25 \) and \( j = 1, \dots, 99 \). The last entry in each data set \( x_{i,100} \) was an outlier, and its value increased linearly for \( i = 1, \dots, 15 \) with \( x_{i,100} = \max(\bold{x}_{i,1:99}) \). The index of the outlier was unknown to the estimation algorithm, and we assumed that the variance of observations was known.

For computation, we performed both standard variational Bayes using \texttt{cmstan} with 2 chains and 1000 samples, and VM-posterior  for the multivariate Gaussian model. We set the initial values to the true value of \( \mu \) and ran 1000 iterations. The VM-posterior approach proceeded as follows: For each data set \( \bold{x}_i \), we randomly separated it into 10 groups, \( G_1, \dots, G_{10} \). We then computed \( q(\cdot | G_1), \dots, q(\cdot | G_{10}) \) and used Algorithm \ref{alg1} to find the geometric median of the multivariate Gaussians.

This method was applied to each data set \( \bold{x}_1, \dots, \bold{x}_{15} \) to analyze the impact of the outlier magnitude. We replicated each outlier level 50 times to assess whether the posterior credible interval contained the true value of \( \mu \).
\begin{figure}[htbp!]
 \centering
 \begin{subfigure}[b]{0.3\textwidth}
     \centering
     \includegraphics[width=\textwidth]{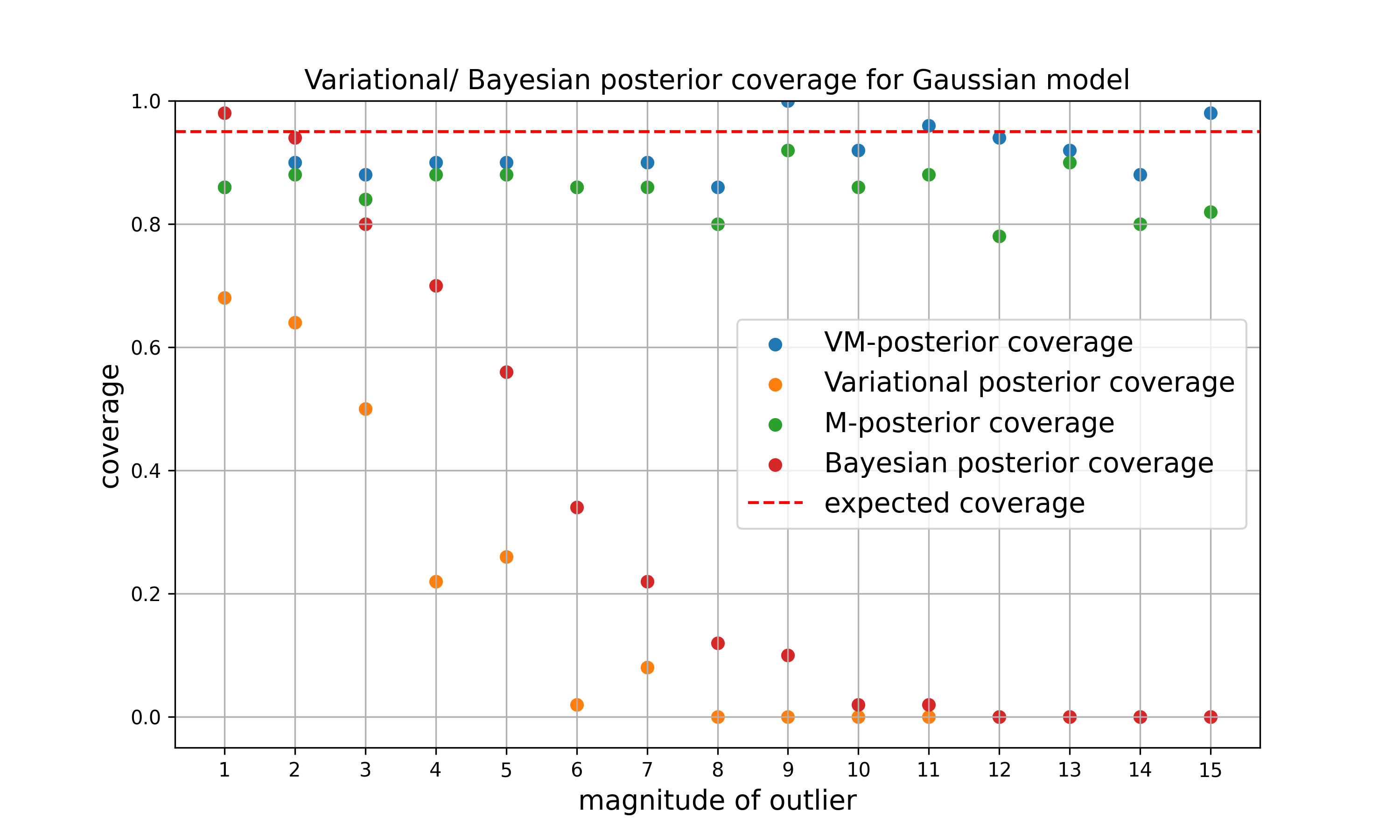}
     \caption{$\alpha$=0.05}
     \label{fig:95cov}

 \end{subfigure}
 \hfill
 \begin{subfigure}[b]{0.3\textwidth}
     \centering
     \includegraphics[width=\textwidth]{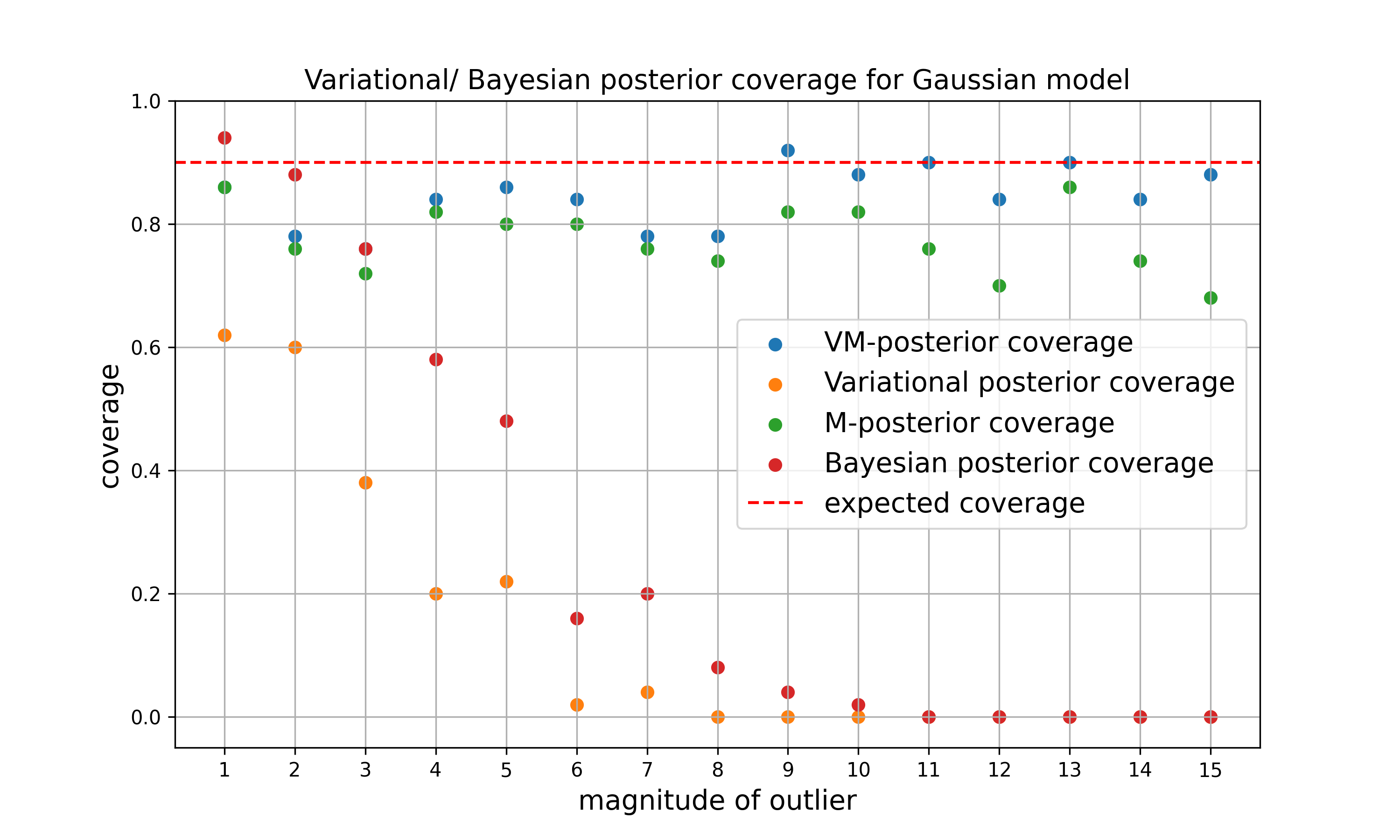}
     \caption{$\alpha$=0.1}
     \label{fig:90cov}

 \end{subfigure}
 \hfill
 \begin{subfigure}[b]{0.3\textwidth}
     \centering
     \includegraphics[width=\textwidth]{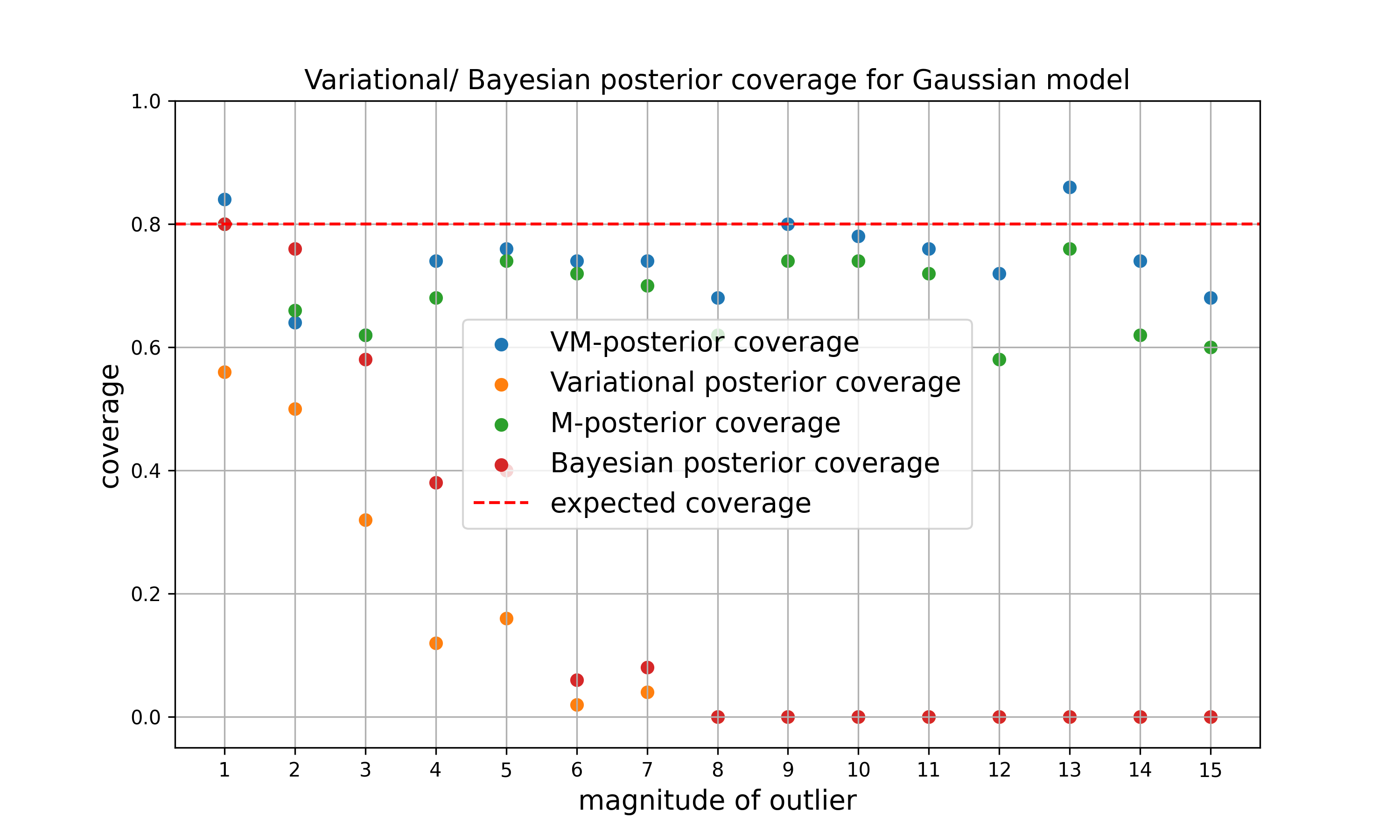}
     \caption{$\alpha$=0.2}
     \label{fig:80cov}
 \end{subfigure}
\caption{Posterior coverage for different levels of significance}
\label{fig:cov}
\end{figure}
In Figure \ref{fig:cov}, we plot the coverage of different credible interval levels—80\%, 90\%, and 95\%. The magnitude of the outlier increases as \( i \) times the maximum data value. The standard Bayes and VB methods exhibit low coverage when \( i = 2 \), and they almost completely fail to cover the true parameter as the outlier magnitude increases further. In contrast, both the M-posterior and the VM-posterior maintain coverage close to the expected levels, regardless of the outlier magnitude, across all three significance levels.
\begin{figure}[htbp!]
    \centering
    \includegraphics[width=0.5\linewidth]{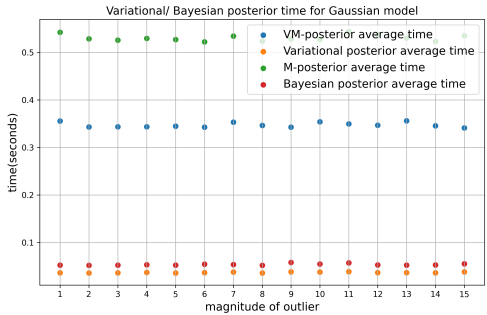}
    \caption{Posterior coverage computational cost}
    \label{fig:time}
\end{figure}
In Figure \ref{fig:time}, we demonstrate the computational advantages of the variational approach. The variational method is significantly faster than the standard Bayesian approach in our Gaussian examples. When applied to multiple data sets, the computational advantages of the variational approach become even more pronounced.

\subsection{Posterior Predictive Density for Gaussian Mixture}\label{sec-gmmsimu}
In this section, we calculate the posterior predictive distribution using a mixture of Student's t-distributions from \cite{Bishop2006Pattern-recogni00}:
\[
p(\widehat{\mathbf{x}} \mid \mathbf{X}) = \frac{1}{\widehat{\alpha}} \sum_{k=1}^K \alpha_k \operatorname{Student's- t}\left(\widehat{\mathbf{x}} \mid \mathbf{m}_k, \mathbf{L}_k, \nu_k+1-D\right)
\]
where the \( k^{\text{th}} \) component has mean \( \mathbf{m}_k \), and the precision matrix is given by
\[
\mathbf{L}_k = \frac{\left(\nu_k + 1 - D\right) \beta_k}{\left(1 + \beta_k\right)} \mathbf{W}_k
\]
In our case, when the data set size \( N \) is large, the predictive distribution approximates a mixture of Gaussians.

Following the approach in the previous section, we simulate 25 data sets, each containing 200 observations. Each data set \( \bold{x}_i = (x_{i,1}, \dots, x_{i,200}) \) consists of 199 independent observations from the Gaussian mixture distribution:
\[
x_{i,j} \sim 0.5 * N(\mu = 2, 1) + 0.5 * N(\mu = 4, 1) \quad \text{for } i = 1, \dots, 25 \text{ and } j = 1, \dots, 199
\]
The last entry in each data set, \( x_{i,200} \), was an outlier with its value increasing linearly for \( i = 1, \dots, 25 \) as \( x_{i,200} = \max (|\bold{x}_{i,1:199}|) \). The outlier index was unknown to the estimation algorithm.

We compute the variational posterior predictive distribution using both the standard variational inference and the VM-posterior procedure. The standard variational posterior predictive distribution is calculated following \cite{kapourani2019variational}. The VM-posterior is based on Algorithm \ref{alg2} to obtain the geometric median. We evaluate the robustness of the two methods as we increase the magnitude of the outlier.

    \begin{figure}[htbp!]
     \centering

     \begin{subfigure}[b]{0.45\textwidth}
         \centering
         \includegraphics[width=\textwidth]{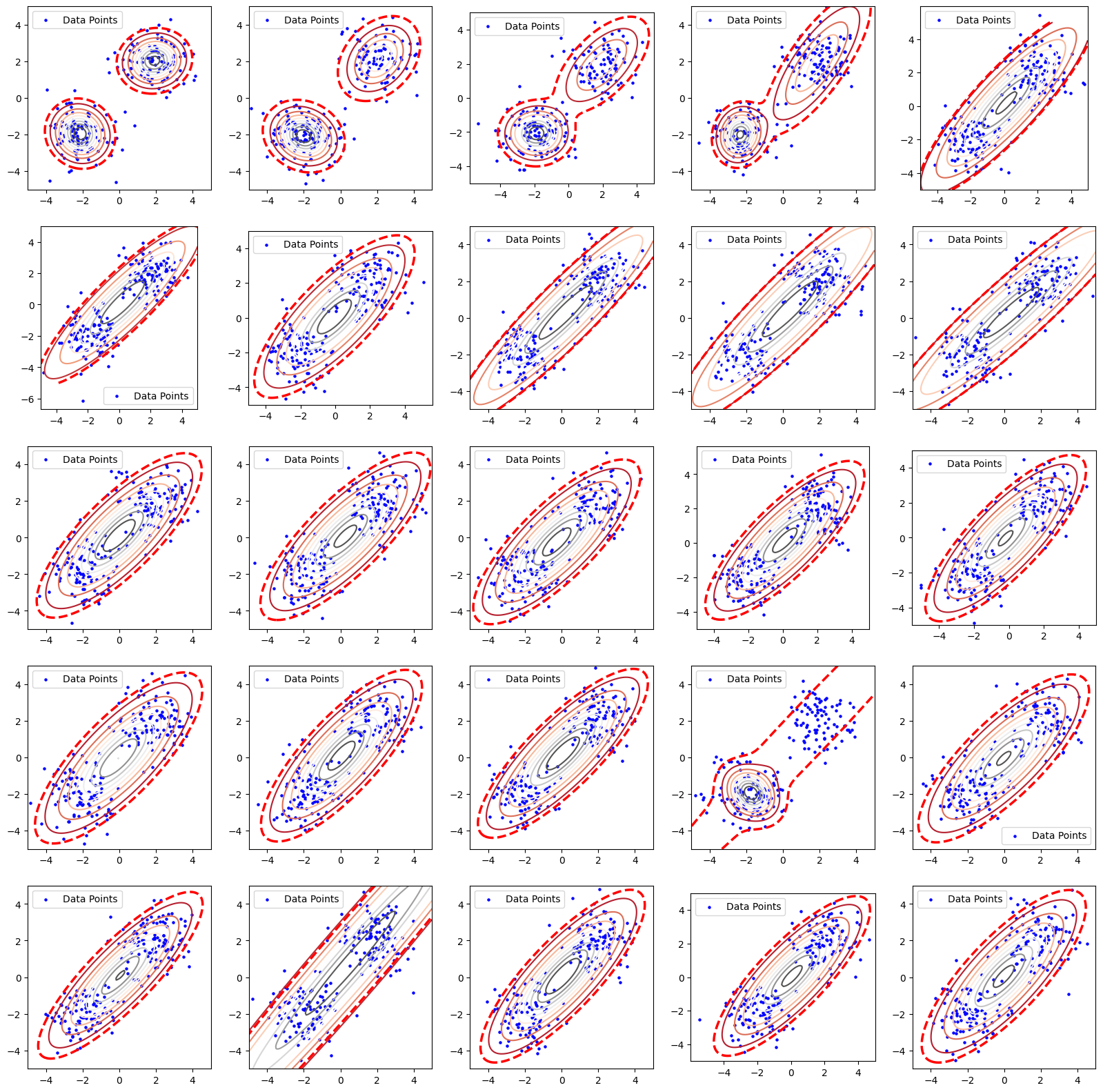}
         \caption{Standard Variational Inference}
         \label{fig:}

     \end{subfigure}
     \hfill
     \begin{subfigure}[b]{0.45\textwidth}
         \centering
         \includegraphics[width=\textwidth]{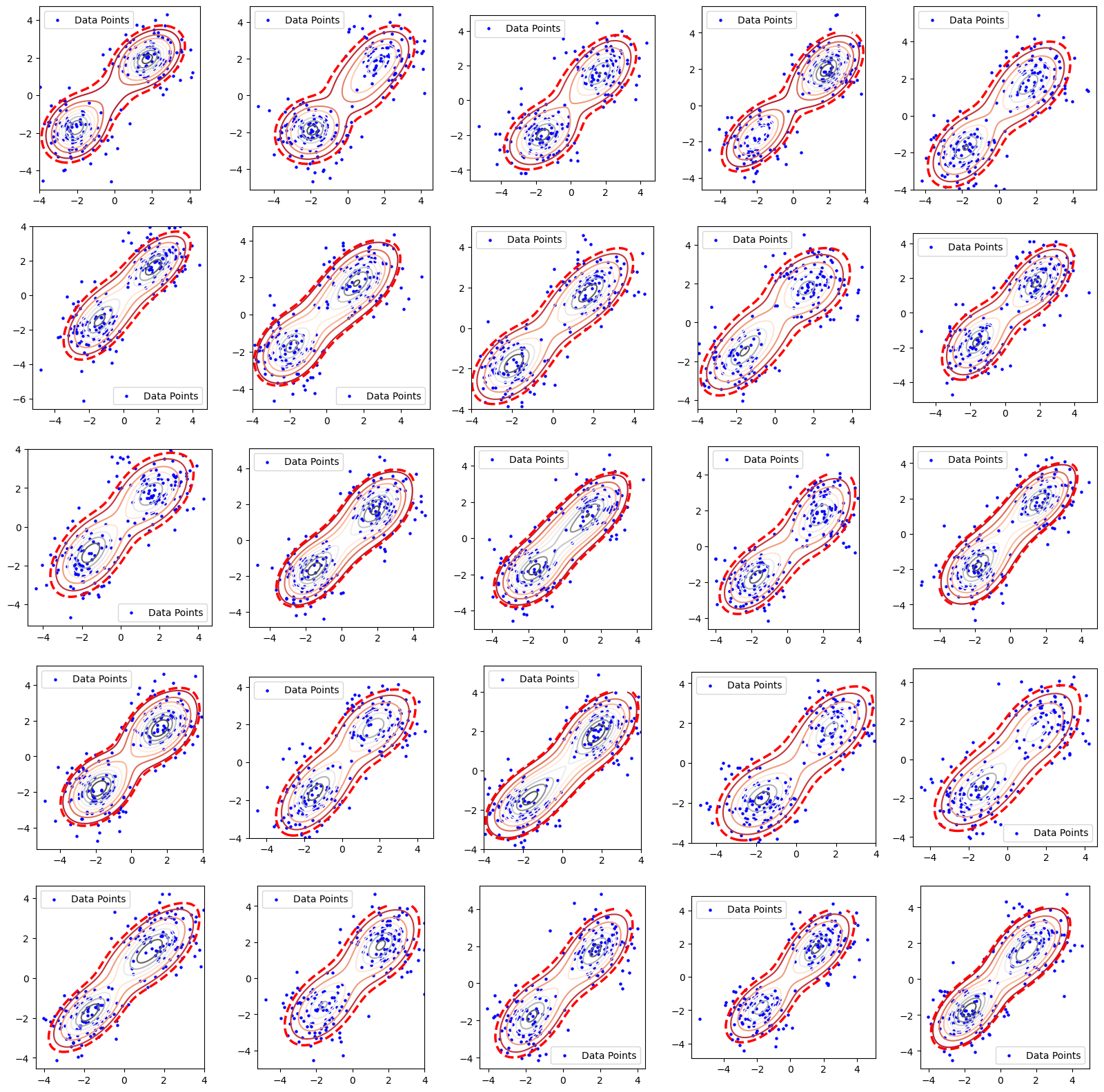}
         \caption{Median Variational Inference}
         \label{}
     \end{subfigure}
    \caption{Variational inference for Guassian mixture with the increasing magnitude of the outlier}
    \label{fig:}
    \end{figure}
We plot the 95\% posterior predictive credible region as the magnitude of the outlier increases. The outlier magnitude grows from left to right and from top to bottom. For better visualization, we omit plotting the outliers themselves. We observe that as the outlier magnitude increases, the standard method loses its ability to capture the mixture properties of the distribution. In contrast, the median method remains consistent regardless of the outlier magnitude.
\begin{figure}[htbp!]
 \centering
 \begin{subfigure}[b]{0.3\textwidth}
     \centering
     \includegraphics[width=\textwidth]{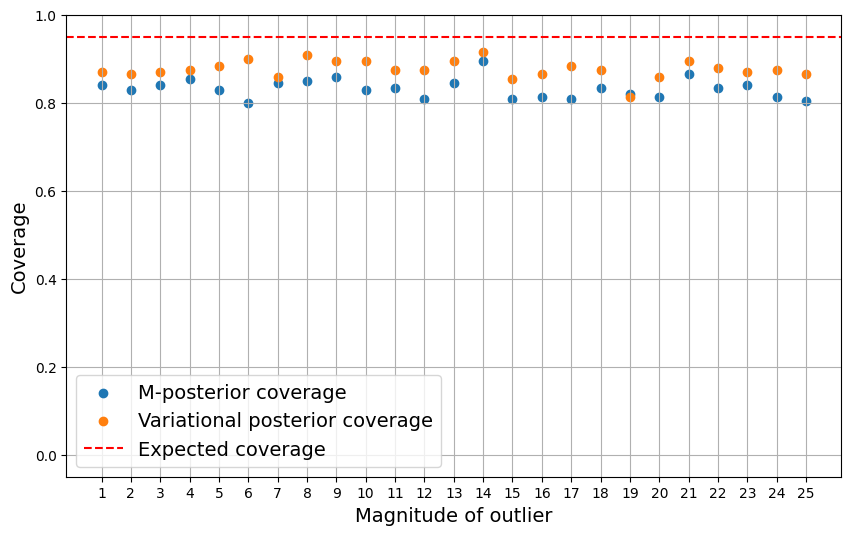}
     \caption{$\alpha$=0.05}
     \label{fig:gmm_95cov}

 \end{subfigure}
 \hfill
 \begin{subfigure}[b]{0.3\textwidth}
     \centering
     \includegraphics[width=\textwidth]{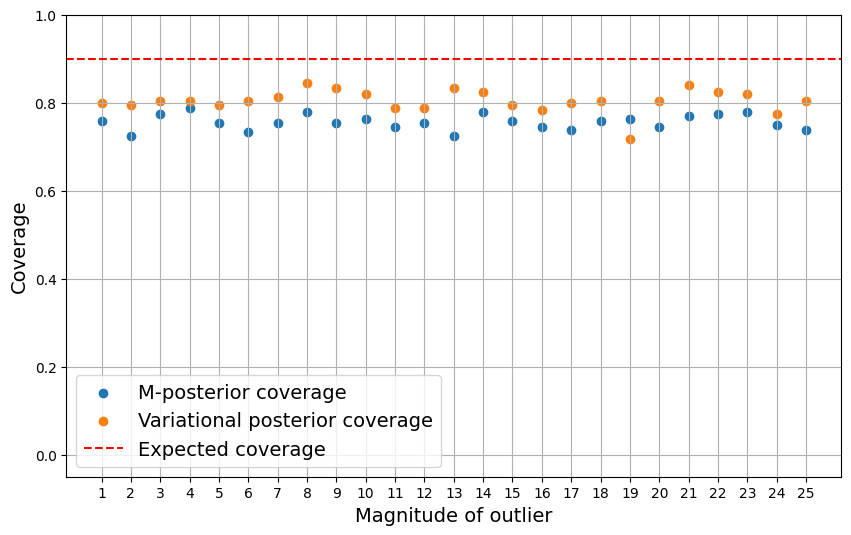}
     \caption{$\alpha$=0.1}
     \label{fig:gmm_90cov}

 \end{subfigure}
 \hfill
 \begin{subfigure}[b]{0.3\textwidth}
     \centering
     \includegraphics[width=\textwidth]{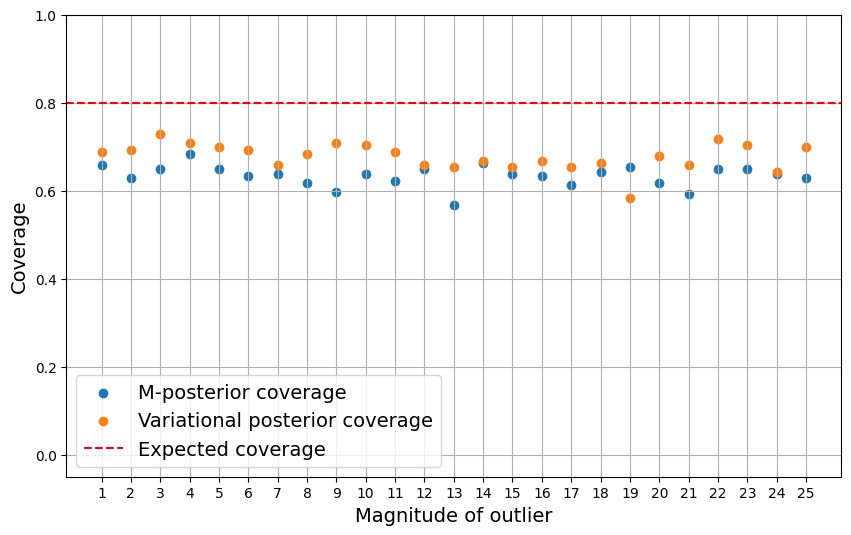}
     \caption{$\alpha$=0.2}
     \label{fig:gmm_80cov}
 \end{subfigure}
\caption{Posterior predictive coverage for different levels of significance}
\label{fig:gmm_cov}
\end{figure}
We also compared the posterior predictive coverage of the two methods in Figure \ref{fig:gmm_cov}. We counted the number of data points that lie within the predictive interval for each method. Although the median method achieves similar coverage to the standard method, the predictive region of the VM-Posterior has a smaller area and retains the mixture shape more accurately.
\subsection{Language Modeling: Latent Dirichlet Allocation (LDA)}\label{sec-lda}
Latent Dirichlet Allocation (LDA) is a generative model that uncovers hidden topics in a collection of documents based on the words within them.

Assume there are \( M \) documents, each containing \( N_m \) words, drawn from a vocabulary of size \( V \). LDA assumes there are \( K \) latent topics, and each document is a mixture of these topics. For each document \( m \), we represent the topic mixture with a vector \( \theta_m \), which has \( K \) elements (one for each topic). For each topic \( k \), we represent the distribution over words with a vector \( \phi_k \), which has \( V \) elements (one for each word in the vocabulary).

The generative process is as follows:
\begin{align*}
    \theta_m &\sim p_{\theta}, \quad \text{for each document } m = 1, 2, \dots, M, \\
    \phi_k &\sim p_{\phi}, \quad \text{for each topic } k = 1, 2, \dots, K, \\
    z_{m,j} &\sim \text{Mult}(\theta_m), \quad \text{for each word } j = 1, 2, \dots, N_m \text{ in document } m, \\
    w_{m,j} &\sim \text{Mult}(\phi_{z_{m,j}}), \quad \text{for each word } j = 1, 2, \dots, N_m \text{ in document } m.
\end{align*}

Here
\( \theta_m \) is the topic distribution for document \( m \),
\( \phi_k \) is the word distribution for topic \( k \),
\( z_{m,j} \) is the topic assigned to the \( j \)-th word in document \( m \),
\( w_{m,j} \) is the actual word at position \( j \) in document \( m \), based on its assigned topic \( z_{m,j} \).

In simpler terms, the first two equations assign "priors" to the document-topic distributions \( \theta_m \) and the topic-word distributions \( \phi_k \). The next two equations describe the process of selecting a topic for each word in a document and then choosing the word based on that topic.

In our simulation, we generate data from the LDA model with \( M = 19 \) documents, where each document contains \( N_m \sim \text{Pois}(10) \) words. The words are drawn from a vocabulary of size \( V = 4 \), and there are \( K = 2 \) topics. Additionally, we include one "outlier" document with \( N \) words, using the same vocabulary. Our primary interest is in the global parameter \( \phi_k \).

We generate data using \( p_\phi \), a Dirichlet distribution with parameter \( \beta = 1 \), and \( p_\theta \), a Dirichlet distribution with parameter \( \alpha = 2 \). Thus, we have the true value of \( \phi_k \), denoted as \( \phi_0 \).

To measure the performance of our simulation, we compute the KL divergence between \( \Pi(\phi_k \mid \text{Documents}) \) (the posterior distribution) and the true value \( \phi_0 \). We use HMC for the Bayesian posterior, the MFVB for the variational posterior. For the M-posterior and VM-posterior, we utilize Algorithm \ref{alg3}.
\begin{figure}[htbp!]
     \centering     \begin{subfigure}[b]{0.45\textwidth}
         \centering
         \includegraphics[width=\textwidth]{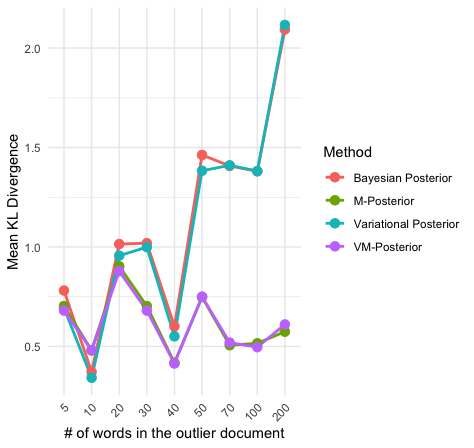}
         \caption{Mean KL Divergence Comparison}
         \label{fig:lda_kl}

     \end{subfigure}
     \hfill
     \begin{subfigure}[b]{0.45\textwidth}
         \centering
         \includegraphics[width=\textwidth]{plot/lda_time.png}
         \caption{Computational Time Comparison}
         \label{fig:lda_time}
     \end{subfigure}
    \caption{Variational inference for LDA model}
    \label{fig:lda}
    \end{figure}
\\
In Figure \ref{fig:lda}, we fit the LDA model to our simulated data using the standard Bayesian method, M-Posterior, Variational Bayes, and VM-Posterior. We increase the number of words in the outlier document and calculate the mean KL divergence of \( \phi \). The results show that our VM-Posterior approach performs as well as the Bayesian median approach. Furthermore, our VM-Posterior based approach is significantly faster than the Bayesian approach while achieving similar performance.

\subsection{Real Data Analysis}\label{sec-realdata}
In this section, we apply our methods to the "penguins" data set from Kaggle. This data set contains over 300 observations for three penguin species, with 6 covariates. We use the first 299 observations from the dataset and model it as a Gaussian mixture with 2 components for the covariates \texttt{culmen\_length} and \texttt{culmen\_depth}. We add an outlier with a value 5 times the largest in the dataset.

We fit both the standard variational posterior and the VM-Posterior to the penguins dataset. In Figure \ref{fig:penguins}, we plot the dataset without the added outlier, with each species of penguin represented by a different color. When we fit a Gaussian mixture with 2 clusters using the standard variational approach (Figure \ref{fig:penguins_vb}), we observe that the 95\% credible interval lacks meaningful information about the mixture structure, treating the two species as a single cluster. In contrast, the VM-Posterior (Figure \ref{fig:penguins_vm}) models the mixture data much better.
 \begin{figure}[htbp!]
     \centering

     \begin{subfigure}[b]{0.45\textwidth}
         \centering
         \includegraphics[width=\textwidth]{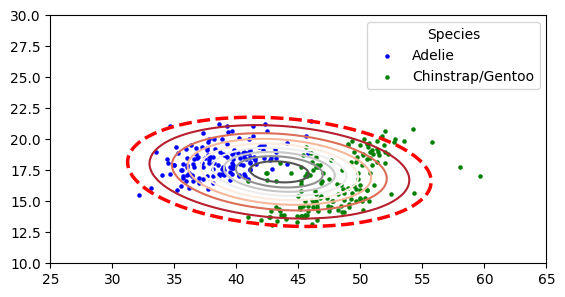}
         \caption{Variational posterior}
         \label{fig:penguins_vb}

     \end{subfigure}
     \hfill
     \begin{subfigure}[b]{0.45\textwidth}
         \centering
         \includegraphics[width=\textwidth]{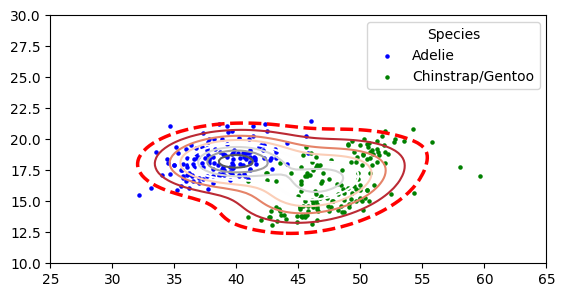}
         \caption{VM-Posterior}
         \label{fig:penguins_vm}
     \end{subfigure}
    \caption{Variational inference for penguins dataset with artificial outlier}
    \label{fig:penguins}
    \end{figure}

\section{Conclusion and Future Direction}\label{sec-conclusion} 
This paper introduced the VM-Posterior, a novel approach for robust variational inference that combines computational efficiency with resistance to outliers and data contamination. The key innovation lies in using geometric or metric median aggregation in the Wasserstein distance framework during the variational posterior aggregation step. This approach not only enhances robustness but also improves concentration properties, as weak concentrations of subset-based variational posteriors are transformed into strong concentrations under the Wasserstein geometric or metric median. By leveraging this aggregation method, the VM-Posterior achieves significant improvements over traditional variational methods while maintaining comparable robustness to established techniques like the M-Posterior.

Through rigorous theoretical development and diverse experiments, we demonstrated the VM-Posterior’s robustness, efficiency, and adaptability. Theoretical results established the VM-Posterior’s strong concentration properties, asymptotic normality, and finite-sample guarantees, providing a strong foundation for its practical utility. The algorithmic framework, designed for mean-field and Gaussian variational families, was extended to handle general distributions using efficient numerical methods. Empirical results across synthetic and real-world datasets confirmed the VM-Posterior’s superior performance in retaining credible coverage, preserving mixture structures, and resisting contamination effects, all while achieving computational efficiency that is crucial for large-scale data settings.

Building on these results, the framework showcased its versatility across multivariate Gaussian models, Gaussian mixtures, and the Latent Dirichlet Allocation (LDA) model. Real-world applicability was further validated using the penguins dataset, where the VM-Posterior demonstrated its ability to model complex data distributions while mitigating the effects of outliers. By balancing the strengths of variational inference and robust Bayesian estimation, the VM-Posterior positions itself as a powerful and practical tool for modern Bayesian inference challenges.

Future research may focus on extending the theoretical foundation of the VM-Posterior to justify its empirical success in Gaussian mixture models. While the robustness results developed here apply to general variational families, extending Corollary 7 in \cite{wang2019frequentist} to handle mixtures of Gaussians would establish strong asymptotic normality results for this specific family. Such advancements would solidify the numerical findings presented for Gaussian mixtures and provide a more comprehensive theoretical framework.

Another promising direction involves broadening the applicability of the VM-Posterior to more complex models and exploring its use in diverse machine learning and statistical settings. Applications in Bayesian deep learning, hierarchical models, or dynamic systems could showcase its robustness and computational efficiency in handling the challenges posed by high-dimensional, non-linear, and evolving data structures.






\newpage
\appendix
\section{Proof of Theorem \ref{thm-rob-gm} and Theorem \ref{thm-rob-mm}} 
Theorem \ref{thm-rob-gm} and Theorem \ref{thm-rob-mm} are an adaptation of Theorem 3.1 in \cite{minsker2015geometric}. Precisely Theorem \ref{thm-rob-gm} and Theorem \ref{thm-rob-mm} follows directly from the following Theorem in \cite{minsker2017robust}.

\begin{theorem} 
\label{ProfLinPaper}
{\bf{a}}.
Assume that $(\mathbb{H},\|\cdot\|)$ is a Hilbert space and $\theta_0 \in \mathbb{H}$. Let $\hat{\theta}_1, \ldots, \hat{\theta}_m \in \mathbb{H}$ be a collection of independent random variables. Let the constants $\alpha, q, \gamma$ be such that $0<q<\alpha<1 / 2$, and $0 \leq \gamma<\frac{\alpha-q}{1-q}$. Suppose $\varepsilon>0$ is such that for all $j, 1 \leq j \leq\lfloor(1-\gamma) m\rfloor+1$,
$$
\operatorname{Pr}\left(\left\|\hat{\theta}_j-\theta_0\right\|>\varepsilon\right) \leq q .
$$
Let $\hat{\theta}_*=\operatorname{med}_g\left(\hat{\theta}_1, \ldots, \hat{\theta}_m\right)$ be the geometric median of $\left\{\hat{\theta}_1, \ldots, \hat{\theta}_m\right\}$. Then
$$
\operatorname{Pr}\left(\left\|\hat{\theta}_*-\theta_0\right\|>C_\alpha \varepsilon\right) \leq\left[e^{(1-\gamma) \psi\left(\frac{\alpha-\gamma}{1-\gamma}, q\right)}\right]^{-m}
$$
where $C_\alpha=(1-\alpha) \sqrt{\frac{1}{1-2 \alpha}}$.
\\
{\bf{b}}.  Assume that $(\mathbb{Y}, d)$ is a metric space and $\theta_0 \in \mathbb{Y}$. Let $\hat{\theta}_1, \ldots, \hat{\theta}_m \in \mathbb{Y}$ be a collection of independent random variables. Let the constants $q, \gamma$ be such that $0<q<\frac{1}{2}$ and $0 \leq \gamma<\frac{1 / 2-q}{1-q}$. Suppose $\varepsilon>0$ are such that for all $j, 1 \leq j \leq\lfloor(1-\gamma) m\rfloor+1$,
$$
\operatorname{Pr}\left(d\left(\hat{\theta}_j, \theta_0\right)>\varepsilon\right) \leq q .
$$
Let $\hat{\theta}_*=\operatorname{med}_0\left(\hat{\theta}_1, \ldots, \hat{\theta}_m\right)$. Then
$$
\operatorname{Pr}\left(d\left(\hat{\theta}_*, \theta_0\right)>3 \varepsilon\right) \leq e^{-m(1-\gamma) \psi\left(\frac{1 / 2-\gamma}{1-\gamma}, q\right)}
$$
\end{theorem}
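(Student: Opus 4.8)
To prove Theorem~\ref{ProfLinPaper}, the plan is to derive both parts from one template: convert the event that the median is far from $\theta_0$ into the assertion that a definite fraction of the non-outlying estimators are far from $\theta_0$, and then control that count with a Chernoff bound. Throughout, $N:=\lfloor(1-\gamma)m\rfloor+1$ denotes the number of estimators for which the weak-concentration hypothesis is assumed; the at most $m-N\le\gamma m$ remaining ones are treated as completely arbitrary, and no property of them is used.

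\textbf{Step 1: deterministic geometric lemmas.} For part (b) this is pure triangle inequality. If $J:=\{i:d(\hat\theta_i,\theta_0)\le\varepsilon\}$ has $|J|>m/2$, then any two members of $J$ are within $2\varepsilon$ of each other, so for any $j\in J$ the pair $(j,J)$ satisfies the defining condition of $\varepsilon_*$ and hence $\varepsilon_*\le\varepsilon$; since the optimal set $I(j_*)$ also has more than $m/2$ elements, it meets $J$, and any $i_0\in I(j_*)\cap J$ gives $d(\hat\theta_{j_*},\theta_0)\le d(\hat\theta_{j_*},\hat\theta_{i_0})+d(\hat\theta_{i_0},\theta_0)\le 2\varepsilon_*+\varepsilon\le 3\varepsilon$. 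For part (a) I would invoke the sharp geometric-median lemma from \cite{minsker2015geometric}: if more than $(1-\alpha)m$ of the $\hat\theta_j$ lie within $\varepsilon$ of $\theta_0$, then $\|\hat\theta_*-\theta_0\|\le C_\alpha\varepsilon$ with $C_\alpha=(1-\alpha)\sqrt{1/(1-2\alpha)}$. Its proof pairs the subgradient optimality condition of the geometric median, $\bigl\|\sum_{j:\hat\theta_j\ne\hat\theta_*}\tfrac{\hat\theta_*-\hat\theta_j}{\|\hat\theta_*-\hat\theta_j\|}\bigr\|\le\#\{j:\hat\theta_j=\hat\theta_*\}$, with the unit vector in the direction $\hat\theta_*-\theta_0$: each ``accurate'' index contributes at least $\sqrt{1-\varepsilon^2/\|\hat\theta_*-\theta_0\|^2}$ to the resulting scalar sum (a one-variable optimization over the admissible positions of $\hat\theta_j$), each other index contributes at least $-1$, and rearranging yields the constant. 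In contrapositive form: on $\{d(\hat\theta_*,\theta_0)>3\varepsilon\}$ (resp. $\{\|\hat\theta_*-\theta_0\|>C_\alpha\varepsilon\}$), at most $m/2$ (resp. at most $(1-\alpha)m$) of all $m$ estimators are accurate, hence also at most that many of the $N$ regular estimators are accurate.

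\textbf{Step 2: the binomial tail.} Let $W$ be the number of inaccurate estimators among the $N$ regular indices. By Step 1, on the bad event $W\ge N-m/2$ in part (b), and $W\ge N-(1-\alpha)m$ in part (a). By independence and $\Pr(d(\hat\theta_j,\theta_0)>\varepsilon)\le q$ for $j\le N$, $W$ is stochastically dominated by $\mathrm{Bin}(N,q)$, so with $\beta:=(N-m/2)/N$ in part (b) (resp. $\beta:=(N-(1-\alpha)m)/N$ in part (a)) the Cram\'er--Chernoff inequality gives
$$
\Pr\bigl(d(\hat\theta_*,\theta_0)>3\varepsilon\bigr)\le\Pr\bigl(\mathrm{Bin}(N,q)\ge\beta N\bigr)\le e^{-N\psi(\beta,q)},
$$
with $\psi$ the binary relative entropy of Equation~(\ref{Psi-eq}). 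The hypothesis $\gamma<\tfrac{1/2-q}{1-q}$ (resp. $\gamma<\tfrac{\alpha-q}{1-q}$) is precisely what forces $\beta>q$, placing this in the valid upper-tail regime. Finally, since $N>(1-\gamma)m$ and $m'\mapsto m'\,\psi(1-c/m',q)$ is nondecreasing for $c>0$ in the regime $\beta>q$, one may replace $N$ by $(1-\gamma)m$ and $\beta$ by $\tfrac{1/2-\gamma}{1-\gamma}$ (resp. $\tfrac{\alpha-\gamma}{1-\gamma}$), which gives exactly the stated bound $\bigl[e^{(1-\gamma)\psi(\cdot,\,q)}\bigr]^{-m}$; Theorems~\ref{thm-rob-gm} and \ref{thm-rob-mm} are then the special cases $\gamma=\kappa$, $q=\tfrac17$, $\alpha=\tfrac37$ (so $C_\alpha=4/\sqrt7<1.52$), and $\gamma=\kappa$, $q=\tfrac14$.

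\textbf{Expected main obstacle.} The only genuinely delicate ingredient is the sharp geometric-median lemma in Step 1(a): recovering the exact constant $C_\alpha=(1-\alpha)/\sqrt{1-2\alpha}$ requires carrying out the per-index one-variable optimization precisely and handling the degenerate case $\hat\theta_*=\hat\theta_j$ through the subgradient set rather than the gradient. The metric-median lemma, the stochastic domination of $W$, and the Chernoff bound together with its monotonicity cleanup are all routine.
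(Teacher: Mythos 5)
Your proposal is correct and follows essentially the same route as the paper's proof: the deterministic reduction via Minsker's geometric-median lemma for part (a) and the triangle-inequality argument for the metric median in part (b), followed by stochastic domination of the count of inaccurate estimators among the $N=\lfloor(1-\gamma)m\rfloor+1$ regular ones by a $\mathrm{Bin}(N,q)$ variable and a Chernoff bound expressed through $\psi$. Your bookkeeping of the threshold ($W\ge N-(1-\alpha)m$, resp.\ $N-m/2$, then the monotonicity of $m'\mapsto m'\psi(1-c/m',q)$) is if anything slightly more careful than the normalization step displayed in the paper, but the argument is the same.
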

To get Theorem \ref{thm-rob-gm} in our paper, we take $q=\frac{1}{7}$ and $\alpha=\frac{3}{7}$ in part {\bf{(a)}} of the previously theorem. Moreover, Theorem \ref{thm-rob-mm} is followed by part {\bf{(b)}}, when considering $q=\frac{1}{4}$.

For completeness, we present the proof of the above Theorem. To this end, we make use of the following lemma (see lemma 2.1 in \cite{minsker2015geometric}).
\begin{lemma}
\label{lemma21}
 Let $\mathbb{H}$ be a Hilbert space, $x_1, \ldots, x_m \in \mathbb{H}$ and let $x_*$ be their geometric median. Fix $\alpha \in\left(0, \frac{1}{2}\right)$ and assume that $z \in \mathbb{H}$ is such that $\left\|x_*-z\right\|>C_\alpha r$, where
$$
C_\alpha=(1-\alpha) \sqrt{\frac{1}{1-2 \alpha}}
$$
and $r>0$. Then there exists a subset $J \subseteq\{1, \ldots, m\}$ of cardinality $|J|>\alpha m$ such that for all $j \in J,\left\|x_j-z\right\|>r$.
\end{lemma}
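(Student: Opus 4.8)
The plan is to argue by contradiction, using the first-order optimality characterization of the geometric median together with a sharp angular estimate. Suppose the conclusion fails, so that the set $J := \{j : \|x_j - z\| > r\}$ of ``far'' points satisfies $|J| \le \alpha m$; equivalently, the complementary set $C := \{j : \|x_j - z\| \le r\}$ of ``close'' points has $|C| \ge (1-\alpha)m$. Since $\|x_* - z\| > C_\alpha r > 0$ and $C_\alpha \ge 1$, we have $x_* \ne z$, so the unit vector $w := (x_* - z)/R$ with $R := \|x_* - z\|$ is well-defined. The goal is then to show $R \le C_\alpha r$, contradicting the hypothesis and forcing $|J| > \alpha m$.

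First I would record the optimality condition. Because $F(y) = \sum_{j=1}^m \|y - x_j\|$ is convex, finite and coercive on $\mathbb{H}$, the geometric median $x_*$ satisfies $0 \in \partial F(x_*)$. By the convex subdifferential sum rule there exist vectors $v_1, \dots, v_m$ with $\sum_{j=1}^m v_j = 0$, where $v_j = (x_* - x_j)/\|x_* - x_j\|$ is a unit vector whenever $x_* \ne x_j$ and $\|v_j\| \le 1$ otherwise. Projecting this identity onto $w$ yields $\sum_{j=1}^m \langle v_j, w\rangle = 0$.

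The main step, and the principal obstacle, is a sharp lower bound on $\langle v_j, w\rangle$ for the close points $j \in C$. For such $j$ one has $\|x_* - x_j\| \ge R - r > 0$, so $v_j$ is a genuine unit vector and $\langle v_j, w\rangle = \langle x_* - x_j, w\rangle / \|x_* - x_j\|$. Decomposing $x_j - z = a\,w + b$ with $b \perp w$ and $a^2 + \|b\|^2 \le r^2$, a short computation shows that on the boundary $a^2 + \|b\|^2 = r^2$ this quantity equals $(R-a)/\sqrt{R^2 - 2Ra + r^2}$, whose minimum over $|a| \le r$ is attained at $a = r^2/R$ and equals $\sqrt{1 - r^2/R^2}$; geometrically this is the cosine of the half-angle of the cone from $x_*$ tangent to the ball $B(z,r)$. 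Hence $\langle v_j, w\rangle \ge \sqrt{1 - r^2/R^2}$ for every $j \in C$. The delicate point is that the naive triangle-inequality bound $\langle v_j, w\rangle \ge (R-r)/(R+r)$ is too weak: it only delivers $R \le r/(1-2\alpha)$, which does \emph{not} contradict $R > C_\alpha r$ since $C_\alpha < 1/(1-2\alpha)$ for $\alpha \in (0,\tfrac12)$. The square-root constant is precisely what produces $C_\alpha$.

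Finally I would combine the estimates. Bounding $\langle v_j, w\rangle \ge -1$ for $j \in J$ and using the projected optimality identity,
$$
0 = \sum_{j=1}^m \langle v_j, w\rangle \ge |C|\sqrt{1 - r^2/R^2} - |J|,
$$
so $|J| \ge |C|\sqrt{1 - r^2/R^2}$. Inserting $|J| \le \alpha m$ and $|C| \ge (1-\alpha)m$ gives $\alpha \ge (1-\alpha)\sqrt{1 - r^2/R^2}$; squaring and rearranging yields $R^2/r^2 \le (1-\alpha)^2/(1-2\alpha)$, i.e. $R \le C_\alpha r$. This contradicts $\|x_* - z\| > C_\alpha r$, which completes the argument and establishes the existence of an index set $J$ with $|J| > \alpha m$ and $\|x_j - z\| > r$ for all $j \in J$.
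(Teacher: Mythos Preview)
The paper does not provide its own proof of this lemma; it is quoted verbatim from \cite{minsker2015geometric} (Lemma 2.1) and used as a black box in the proof of Theorem \ref{ProfLinPaper}. Your argument is correct and is essentially Minsker's original proof: the first-order optimality condition $0\in\partial F(x_*)$ projected onto $w=(x_*-z)/R$, combined with the sharp tangent-cone bound $\langle v_j,w\rangle\ge\sqrt{1-r^2/R^2}$ for $j\in C$, is exactly what yields the constant $C_\alpha=(1-\alpha)/\sqrt{1-2\alpha}$; your observation that the cruder triangle-inequality estimate $(R-r)/(R+r)$ would be insufficient is also on point.
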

We now provide the proof of Theorem \ref{ProfLinPaper}.
\begin{proof}
Assume that event $\mathcal{E}:=\left\{\left\|\hat{\theta}_*-\theta_0\right\|>C_\alpha \varepsilon\right\}$ occurs. Lemma \ref{lemma21} implies that there exists a subset $J \subseteq\{1, \ldots, m\}$ of cardinality $|J| \geq \alpha k$ such that $\left\|\hat{\theta}_j-\theta_0\right\|>\varepsilon$ for all $j \in J$, hence
\begin{align}
\operatorname{Pr}(\mathcal{E}) \leq & \operatorname{Pr}\left(\sum_{j=1}^m I\left\{\left\|\hat{\theta}_j-\theta_0\right\|>\varepsilon\right\}>\alpha m\right) \leq \\
& \operatorname{Pr}\left(\sum_{j=1}^{\lfloor(1-\gamma) m\rfloor+1} I\left\{\left\|\hat{\theta}_j-\theta_0\right\|>\varepsilon\right\}>(\alpha-\gamma) m \frac{\lfloor(1-\gamma) m\rfloor+1}{\lfloor(1-\gamma) m\rfloor+1}\right) \leq \\
& \operatorname{Pr}\left(\sum_{j=1}^{\lfloor(1-\gamma) m\rfloor+1} I\left\{\left\|\hat{\theta}_j-\theta_0\right\|>\varepsilon\right\}>\frac{\alpha-\gamma}{1-\gamma}(\lfloor(1-\gamma) m\rfloor+1)\right) .
\end{align}
If $W$ has Binomial distribution $W \sim B(\lfloor(1-\gamma) m\rfloor+1, q)$, then
\begin{align}
\operatorname{Pr}\left(\sum_{j=1}^{\lfloor(1-\gamma) m\rfloor+1} I\left\{\left\|\hat{\theta}_j-\theta_0\right\|>\varepsilon\right\}\right. & \left.>\frac{\alpha-\gamma}{1-\gamma}(\lfloor(1-\gamma) m\rfloor+1)\right) \leq \\
& \operatorname{Pr}\left(W>\frac{\alpha-\gamma}{1-\gamma}(\lfloor(1-\gamma) m\rfloor+1)\right)
\end{align}
(see Lemma 23 in \cite{lerasle2011robust} for a rigorous proof of this fact). Chernoff bound (e.g., Proposition A.6.1 in \cite{Vaart1996Weak-convergenc00}), together with an obvious bound $\lfloor(1-\gamma) m\rfloor+1>(1-\gamma) m$, implies that
$$
\operatorname{Pr}\left(W>\frac{\alpha-\gamma}{1-\gamma}(\lfloor(1-\gamma) m\rfloor+1)\right) \leq \exp \left(-m(1-\gamma) \psi\left(\frac{\alpha-\gamma}{1-\gamma}, q\right)\right) \text {. }
$$
To establish part $\mathbf{b}$, we proceed as follows: let $\mathcal{E}_1$ be the event
$\mathcal{E}_1=\left\{\right.$ more than a half of events $d\left(\hat{\theta}_j, \theta_0\right) \leq \varepsilon, j=1 \ldots m$ occur $\}$.
Assume that $\mathcal{E}_1$ occurs. Then we clearly have $\varepsilon_* \leq \varepsilon$, where $\varepsilon_*$ is defined as
$$\begin{gathered}\varepsilon_*:=\inf \{\varepsilon>0: \exists j=j(\varepsilon) \in\{1, \ldots, m\} \text { and } I(j) \subset\{1, \ldots, m\} \text { such that } \\ \left.|I(j)|>\frac{m}{2} \text { and } \forall i \in I(j), d\left(\hat{\theta}_i, \hat{\theta}_j\right) \leq 2 \varepsilon\right\}.
\end{gathered}$$
Indeed, for any $\theta_{j_1}, \theta_{j_2}$ such that $d\left(\hat{\theta}_{j_i}, \theta_0\right) \leq \varepsilon, i=1,2$, triangle inequality gives $d\left(\theta_{j_1}, \theta_{j_2}\right) \leq 2 \varepsilon$. By the definition of $\hat{\theta}_*$, inequality $d\left(\hat{\theta}_*, \hat{\theta}_j\right) \leq 2 \varepsilon_* \leq 2 \varepsilon$ holds for at least a half of $\left\{\hat{\theta}_1, \ldots, \hat{\theta}_m\right\}$, hence, it holds for some $\hat{\theta}_{\tilde{j}}$ with $d\left(\hat{\theta}_{\tilde{j}}, \theta_0\right) \leq \varepsilon$. In turn, this implies (by triangle inequality) $d\left(\hat{\theta}_*, \theta_0\right) \leq 3 \varepsilon$. We conclude that
$$
\operatorname{Pr}\left(d\left(\hat{\theta}_*, \theta_0\right)>3 \varepsilon\right) \leq \operatorname{Pr}\left(\mathcal{E}_1\right)
$$
The rest of the proof repeats the argument of part {\bf{a}} since
$$
\operatorname{Pr}\left(\mathcal{E}_1^c\right)=\operatorname{Pr}\left(\sum_{j=1}^m I\left\{d\left(\hat{\theta}_j, \theta_0\right)>\varepsilon\right\} \geq \frac{m}{2}\right),
$$
where $\mathcal{E}_1^c$ is the complement of $\mathcal{E}_1$.
\end{proof}

\section{Proof of Theorem \ref{mainT-1}}
\begin{proof}
To start, we consider the events $$\mathbb{B}_l=\mathbb{B}_l\left(ml\varepsilon_l^2, Q_{n,m}^{(j)}, \Pi \right):=\Big\{\int \Big(\prod_{j\in G_j}\frac{p_\theta}{p_0}(X_j)\Big)^md\Pi(\theta)\ge\exp(-ml\varepsilon_l^2-\mathrm{KL}(Q_{n,m}^{(j)},\Pi))\Big\},$$
and
$$\mathbb{A}_l=\Big\{\int_{\rho(\theta,\theta_0)>R\varepsilon_l}\Big(\prod_{j\in G_j}\frac{p_\theta}{p_0}(X_j)\Big)^m d\Pi(\theta)\le e^{-\mathfrak{c}_1^lR^2ml\varepsilon_l^2}\Big\}.$$ 
Next, observe that
\begin{align}
\label{aux1-1}
&\widehat{Q}_{n,m}^{(j)}(\rho(\theta,\theta_0)\geq R\varepsilon_l)\nonumber
\\
=&\widehat{Q}_{n,m}^{(j)}(\rho(\theta,\theta_0)\geq R\varepsilon_l)\mathbf{1}_{\mathbb{B}_l}+\widehat{Q}_{n,m}^{(j)}(\rho(\theta,\theta_0)\geq R\varepsilon_l)\mathbf{1}_{\mathbb{B}_l^c}\nonumber
\\
\le&\widehat{Q}_{n,m}^{(j)}(\rho(\theta,\theta_0)\geq R\varepsilon_l)\mathbf{1}_{\mathbb{B}_l}+\mathbf{1}_{\mathbb{B}_l^c},
\end{align}
where the fact that $\widehat{Q}_{n,m}^{(j)}(\rho(\theta,\theta_0)\geq R\varepsilon_l)\le 1$ has been used.
Similarly, we have that 
\begin{align}
\label{aux2-1}
&\widehat{Q}_{n,m}^{(j)}(\rho(\theta,\theta_0)\geq R\varepsilon_l)\mathbf{1}_{B_l}\nonumber
\\
=& \widehat{Q}_{n,m}^{(j)}(\rho(\theta,\theta_0)\geq R\varepsilon_l)\mathbf{1}_{\mathbb{B}_l\cap \mathbb{A}_l}+\widehat{Q}_{n,m}^{(j)}(\rho(\theta,\theta_0)\geq R\varepsilon_l)\mathbf{1}_{\mathbb{B}_l\cap \mathbb{A}_l^c}\nonumber
\\
\le&\widehat{Q}_{n,m}^{(j)}(\rho(\theta,\theta_0)\geq R\varepsilon_l)\mathbf{1}_{\mathbb{B}_l\cap \mathbb{A}_l}+\mathbf{1}_{\mathbb{A}_l^c}.
\end{align}
Therefore, from Equation (\ref{aux1-1}) and Equation (\ref{aux2-1}),
\begin{align}
\label{aux3-1}
\widehat{Q}_{n,m}^{(l)}(\rho(\theta,\theta_0)\geq R\varepsilon_l)\le& \widehat{Q}_{n,m}^{(l)}(\rho(\theta,\theta_0)\geq R\varepsilon_l)\mathbf{1}_{\mathbb{B}_l\cap \mathbb{A}_l}+\mathbf{1}_{\mathbb{A}_l^c}+\mathbf{1}_{\mathbb{B}_l^c}
\end{align}
Now, we observe that  using  Lemma \ref{lemma1} with $v_l=ml\varepsilon_l^2$, we get that
\begin{align}
\label{equation3-1}
&\widehat{Q}_{n,m}^{(j)}(\rho(\theta,\theta_0)\geq R\varepsilon_l)\mathbf{1}_{\mathbb{B}_l\cap \mathbb{A}_l}\nonumber
\\
\le& \frac{1}{v_l}\Big[\mathrm{KL}(\widehat{Q}_{n,m}^{(j)},\Pi_{n,m}^{\vert G_j\vert})\mathbf{1}_{\mathbb{B}_l\cap \mathbb{A}_l}\Big]+\exp(v_l)\Big[\Pi_{n,m}(\rho(\theta,\theta_0)\geq R\varepsilon_l|G_j)\mathbf{1}_{\mathbb{B}_l\cap \mathbb{A}_l}\Big].
\end{align}
By Equation (\ref{aux3-1}) and Equation (\ref{equation3-1}) it follows that,
\begin{align*}
   \mathbb{E}_0\Big(\widehat{Q}_{n,m}^{(j)}(\rho(\theta,\theta_0)\geq R\eps_l)\Big)
    &\le \mathbb{E}_0\left(\Pi_{n,m}(\rho(\theta,\theta_0)\geq R\varepsilon_l|G_j)\mathbf{1}_{\mathbb{B}_l\cap \mathbb{A}_l}\right) 
    \\
&+\mathbb{E}_0(\mathbf{1}_{\mathbb{A}_l^c})+\mathbb{E}_0(\mathbf{1}_{\mathbb{B}_l^c})+\mathbb{E}_0(\frac{1}{v_l}\Big[\mathrm{KL}(\widehat{Q}_{n,m}^{(j)},\Pi_{n,m}^{\vert G_j\vert})\Big])
\end{align*}
Then by Markov's inequality,
\begin{align}
\label{equation1-1}  
  & \Pr\Big(\widehat{Q}_{n,m}^{(j)}(\rho(\theta,\theta_0)\geq R\eps_l)\Big)\nonumber
    \\
    &\le \exp(v_l)\mathbb{E}_0(\Pi_{n,m}(\rho(\theta,\theta_0)\geq R\varepsilon_l|G_j)\mathbf{1}_{\mathbb{B}_l\cap \mathbb{A}_l}) +P({\mathbb{A}_l^c})+P({\mathbb{B}_l^c})+\frac{\mathbb{E}_0\Big[\mathrm{KL}(\widehat{Q}_{n,m}^{(j)},\Pi_{n,m}^{\vert G_j\vert})\Big]}{v_l}\nonumber
    \\
    &=I_1+I_2+I_3+I_4.
\end{align}
Now, we bound each of the terms $I_1,$ $I_2$, $I_3$ and $I_4.$ For the term $I_1$, we notice that,
on the event $\mathbb{A}_l\cap \mathbb{B}_l$ it is satisfied 
$$\int_{\rho(\theta,\theta_0)>R\varepsilon_l}\Big(\prod_{j\in G_j}\frac{p_\theta}{p_0}(X_j)\Big)^m d\Pi(\theta)\le \exp(-\mathfrak{c}_1^lR^2ml\varepsilon_l^2)$$
and,
\begin{align*}
    \int\Big(\prod_{j\in G_j}\frac{p_\theta}{p_0}(X_j)\Big)^m d\Pi(\theta)
    \ge& \exp(-ml\varepsilon_l^2-\mathrm{KL}(Q_{n,m}^{(j)^*},\Pi)).
\end{align*}
In consequences,
\begin{align}
\label{equation2-1}
    \Pi_{n,m}(\rho(\theta,\theta_0)\geq R\varepsilon_l|G_j)\mathbf{1}_{\mathbb{B}_l\cap \mathbb{A}_l}\le \exp( ml\varepsilon_l^2+\mathrm{KL}(Q_{n,m}^{(j)^*},\Pi)-\mathfrak{c}_1^lR^2ml\varepsilon_l^2)
\end{align}
Therefore, by Equation (\ref{equation2-1}), and by Assumption \ref{Prior-cond}, we have that
\begin{align}
    \label{equation4-1}
    I_1
\le&\exp(3ml\varepsilon_l^2+\mathfrak{c}_5^l l\varepsilon_{l}^2-\mathfrak{c}_1^lR^2ml\varepsilon_l^2).
\end{align}
In the following line of arguments we proceed to bound $I_2$ and $I_3.$ For $I_2$, Since if Theorem \ref{th:wong} holds with $\zeta:=\varepsilon_l$, then it also holds with $\zeta:=L\varepsilon_l$ for any $L\ge1,$ we have that by Assumption \ref{test-cond}
\begin{align*}
    P_0^{(l)}(\mathbb{A}_l)=&
    P_0^{(l)}\Big(\int_{\rho(\theta,\theta_0)>R\varepsilon}\Big(\prod_{j\in G_j}\frac{p_\theta}{p_0}(X_j)\Big)^m d\Pi(\theta)\le \exp(-\mathfrak{c}_1^lR^2ml\varepsilon_l^2)\Big)
    \\
    \ge& 
    1-4\exp(-(\mathfrak{c}_2^l)^2R^2l\varepsilon_l^2).
\end{align*}
This is,
\begin{align}
    \label{equation5-1}
    I_2\le4\exp(-(\mathfrak{c}_2^l)^2R^2l\varepsilon_l^2).
\end{align}
Finally, we bound the term $I_3.$ For this purpose, we apply Lemma \ref{lemma1} with \break $\mathrm{F}=\log \Big(\Big(\prod_{j\in G_j}\frac{p_\theta}{p_0}(X_j)\Big)^m\Big),$ $ Q_0=Q_{n,m}^{(j)^*}$ and $\Pi_0=\Pi$ to obtain
$$
\log \int \Big(\prod_{j\in G_j}\frac{p_\theta}{p_0}(X_j)\Big)^m \mathrm{d} \Pi(\boldsymbol{\theta}) \geq \int \log \left(\Big(\prod_{j\in G_j}\frac{p_\theta}{p_0}(X_j)\Big)^m\right) \mathrm{d} Q_{n,m}^{(j)^*}(\mathrm{~d} \boldsymbol{\theta})-\mathrm{KL}(Q_{n,m}^{(j)^*}, \Pi) .
$$
Using the definition of event $\mathbb{B}$, we have that
\begin{align*}
    \mathrm{P}_{0}^{(l)}\left(\mathbb{B}_l^C\right) & =\mathrm{P}_{0}^{(l)}\left(\log \int \Big(\prod_{j\in G_j}\frac{p_\theta}{p_0}(X_j)\Big)^m \mathrm{d} \Pi(\boldsymbol{\theta})<-ml\varepsilon_l^2-\mathrm{KL}(Q_{n,m}^{(j)^*}, \Pi)\right), 
\end{align*}
from where,
\begin{align*}
    \mathrm{P}_{0}^{(l)}\left(\mathbb{B}_l^C\right)\le&\mathrm{P}_{0}^{(l)}\left(-\int \log \left(\Big(\prod_{j\in G_j}\frac{p_0}{p_\theta}(X_j)\Big)^m\right) \mathrm{d} Q_{n,m}^{(j)^*}(\mathrm{~d} \boldsymbol{\theta}) \leq-ml\varepsilon_l^2\right). 
\end{align*}
Hence,
$$
\begin{aligned}
\mathrm{P}_{0}^{(l)}\left(\mathbb{B}_l^C\right) &  \leq \mathrm{P}_{0}^{(l)}\left(\int 0 \vee \log \Big(\prod_{j\in G_j}\frac{p_0}{p_\theta}(X_j)\Big) \mathrm{d} Q_{n,m}^{(j)^*}(\boldsymbol{\theta}) \geq l\varepsilon_l^2\right) \\
& \leq \frac{1}{l\varepsilon_l^2} \mathrm{P}_{0}^{(l)}\left[\int 0 \vee \log \Big(\prod_{j\in G_j}\frac{p_0}{p_\theta}(X_j)\Big) \mathrm{d} Q_{n,m}^{(j)^*}(\boldsymbol{\theta})\right] \\
& \leq \frac{1}{l\varepsilon_l^2} Q_{n,m}^{(j)^*}\left[\mathrm{KL}\left(\mathrm{P}_{0}^{(l)}, \mathrm{P}_{\theta}^{(l)}\right)+\sqrt{\frac{1}{2}\mathrm{KL}\left(\mathrm{P}_{0}^{(l)}, \mathrm{P}_{\theta}^{(l)}\right)}\right] \\
& \leq \frac{1}{l\varepsilon_l^2}\left(2 Q_{n,m}^{(j)^*}\left[\mathrm{KL}\left(\mathrm{P}_{0}^{(l)}, \mathrm{P}_{\theta}^{(l)}\right)\right]+1\right) .
\end{aligned}
$$
Here we use Markov's inequality for the fourth line and use Fubini's theorem and Lemma B.13 of \cite{ghosal2017fundamentals} for the fifth line. For the last line, we use a simple inequality $z+\sqrt{z / 2} \leq z+(z \vee 1) \leq 2 z+1$ for $z \geq 0$. 
Thus, by Assumption \ref{Prior-cond}
\begin{align}
    \label{equation6-1}
    I_3=\mathrm{P}_{0}^{(l)}\left(\mathbb{B}_l^C\right)\le \frac{1}{l\varepsilon_l^2}+2\mathfrak{c}_5^{l}\frac{1}{m}
\end{align}
Finally, we analyze the term $I_4.$ Using Proposition \ref{prop1}, we get that
\begin{align}
\label{equation8-1}I_4=&\frac{\mathbb{E}_0\Big[\mathrm{KL}(\widehat{Q}_{n,m}^{(j)},\Pi_{n,m}^{\vert G_j\vert})\Big]}{v_l}\le \frac{\mathfrak{c}_5^{l}l\varepsilon_l^2}{ml\varepsilon_l^2}=\frac{\mathfrak{c}_5^{l}}{m}.
\end{align}
To conclude, we observe that by Equation (\ref{equation1-1}), (\ref{equation4-1}), (\ref{equation5-1}), (\ref{equation6-1}), and (\ref{equation8-1}), we obtain that
\begin{align*}
    \mathbb{E}_0\Big(\widehat{Q}_{n,m}^{(j)}(\rho(\theta,\theta_0)\geq R\eps_l)\Big)\nonumber
    &\le\exp(3ml\varepsilon_l^2+\mathfrak{c}_5^l l\varepsilon_{l}^2-\mathfrak{c}_1^lR^2ml\varepsilon_l^2)
    \\
    &+4\exp(-(\mathfrak{c}_2^l)^2R^2l\varepsilon_l^2)+\frac{1}{l\varepsilon_l^2}+2\mathfrak{c}_5^{l}\frac{1}{m}+\frac{\mathfrak{c}_5^{l}}{m}.
\end{align*}
Furthermore, using the fact that $l\varepsilon_l^2\ge 1$ and considering $R$ such that 
$R>\sqrt{\frac{6+2\mathfrak{c}_5^l}{\mathfrak{c}_1^l}},$
we get that
\begin{align*}
    &\le\exp(-\frac{\mathfrak{c}_1^l}{2}R^2l\varepsilon_l^2)+4\exp(-(\mathfrak{c}_2^l)^2R^2l\varepsilon_l^2)+\frac{1}{l\varepsilon_l^2}+3\mathfrak{c}_5^{l}\frac{1}{m},
\end{align*}
and the claim is followed.
\end{proof}
\newpage

\section{Proof of Theorem \ref{mainT}}
The proof of Theorem \ref{mainT} follows a similar line of reasoning as the proof of Theorem \ref{mainT-1}, with minor modifications. For the sake of completeness, we provide the full details of the proof below, including steps that were previously analyzed in the proof of Theorem \ref{mainT-1}.
\begin{proof}
By the definition of Wasserstein distance, we observe that
\begin{align*}
d_{W_1,\rho}(\widehat{Q}_{n,m}^{(j)},\delta_0)=&\int_\Theta \rho(\theta,\theta_0) d \widehat{Q}_{n,m}^{(j)}(\theta) \\
\leq
&
R\eps_l+
\int\limits_{\rho(\theta,\theta_0)\geq R\eps_l}d\widehat{Q}_{n,m}^{(j)}(\theta) \\
=& R\eps_l+\widehat{Q}_{n,m}^{(j)}(\rho(\theta,\theta_0)\geq R\eps_l).
\end{align*}
To obtain the result it remains to bound
$\widehat{Q}_{n,m}^{(j)}(\rho(\theta,\theta_0)\geq R\eps_l)$.
To this end, we consider the events $$\mathbb{B}_l=\mathbb{B}_l\left(ml\varepsilon_l^2, Q_{n,m}^{(j)}, \Pi \right):=\Big\{\int \Big(\prod_{j\in G_j}\frac{p_\theta}{p_0}(X_j)\Big)^md\Pi(\theta)\ge\exp(-ml\varepsilon_l^2-\mathrm{KL}(Q_{n,m}^{(j)},\Pi))\Big\},$$
and
$$\mathbb{A}_l=\Big\{\int_{\rho(\theta,\theta_0)>R\varepsilon_l}\Big(\prod_{j\in G_j}\frac{p_\theta}{p_0}(X_j)\Big)^m d\Pi(\theta)\le e^{-\mathfrak{c}_1^lR^2ml\varepsilon_l^2}\Big\}.$$ 
Next, observe that
\begin{align}
\label{aux1}
&\widehat{Q}_{n,m}^{(j)}(\rho(\theta,\theta_0)\geq R\varepsilon_l)\nonumber
\\
=&\widehat{Q}_{n,m}^{(j)}(\rho(\theta,\theta_0)\geq R\varepsilon_l)\mathbf{1}_{\mathbb{B}_l}+\widehat{Q}_{n,m}^{(j)}(\rho(\theta,\theta_0)\geq R\varepsilon_l)\mathbf{1}_{\mathbb{B}_l^c}\nonumber
\\
\le&\widehat{Q}_{n,m}^{(j)}(\rho(\theta,\theta_0)\geq R\varepsilon_l)\mathbf{1}_{\mathbb{B}_l}+\mathbf{1}_{\mathbb{B}_l^c},
\end{align}
where the fact that $\widehat{Q}_{n,m}^{(j)}(\rho(\theta,\theta_0)\geq R\varepsilon_l)\le 1$ has been used.
Similarly, we have that 
\begin{align}
\label{aux2}
&\widehat{Q}_{n,m}^{(j)}(\rho(\theta,\theta_0)\geq R\varepsilon_l)\mathbf{1}_{B_l}\nonumber
\\
=& \widehat{Q}_{n,m}^{(j)}(\rho(\theta,\theta_0)\geq R\varepsilon_l)\mathbf{1}_{\mathbb{B}_l\cap \mathbb{A}_l}+\widehat{Q}_{n,m}^{(j)}(\rho(\theta,\theta_0)\geq R\varepsilon_l)\mathbf{1}_{\mathbb{B}_l\cap \mathbb{A}_l^c}\nonumber
\\
\le&\widehat{Q}_{n,m}^{(j)}(\rho(\theta,\theta_0)\geq R\varepsilon_l)\mathbf{1}_{\mathbb{B}_l\cap \mathbb{A}_l}+\mathbf{1}_{\mathbb{A}_l^c}.
\end{align}
Therefore, from Equation (\ref{aux1}) and Equation (\ref{aux2}),
\begin{align}
\label{aux3}
\widehat{Q}_{n,m}^{(l)}(\rho(\theta,\theta_0)\geq R\varepsilon_l)\le& \widehat{Q}_{n,m}^{(l)}(\rho(\theta,\theta_0)\geq R\varepsilon_l)\mathbf{1}_{\mathbb{B}_l\cap \mathbb{A}_l}+\mathbf{1}_{\mathbb{A}_l^c}+\mathbf{1}_{\mathbb{B}_l^c}
\end{align}
Now, we observe that  using  Lemma \ref{lemma1} with $v_l=ml\varepsilon_l^2$, we get that
\begin{align}
\label{equation3}
&\widehat{Q}_{n,m}^{(j)}(\rho(\theta,\theta_0)\geq R\varepsilon_l)\mathbf{1}_{\mathbb{B}_l\cap \mathbb{A}_l}\nonumber
\\
\le& \frac{1}{v_l}\Big[\mathrm{KL}(\widehat{Q}_{n,m}^{(j)},\Pi_{n,m}^{\vert G_j\vert})\mathbf{1}_{\mathbb{B}_l\cap \mathbb{A}_l}\Big]+\exp(v_l)\Big[\Pi_{n,m}(\rho(\theta,\theta_0)\geq R\varepsilon_l|G_j)\mathbf{1}_{\mathbb{B}_l\cap \mathbb{A}_l}\Big].
\end{align}
By Equation (\ref{aux3}) and Equation (\ref{equation3}) it follows that,
\begin{align*}
   & \Pr\Big(\widehat{Q}_{n,m}^{(j)}(\rho(\theta,\theta_0)\geq R\eps_l)>\exp(-ml\varepsilon_l^2)+R\varepsilon_l\Big)
    \\
    &\le \Pr(\Pi_{n,m}(\rho(\theta,\theta_0)\geq R\varepsilon_l|G_j)\mathbf{1}_{\mathbb{B}_l\cap \mathbb{A}_l}>\exp(-ml\varepsilon_l^2-v_l)) 
    \\
&+\Pr(\mathbf{1}_{\mathbb{A}_l^c}>0)+\Pr(\mathbf{1}_{\mathbb{B}_l^c}>0)+\Pr(\frac{1}{v_l}\Big[\mathrm{KL}(\widehat{Q}_{n,m}^{(j)},\Pi_{n,m}^{\vert G_j\vert})\Big]>R\varepsilon_l)
\end{align*}
Then by Markov's inequality,
\begin{align}
\label{equation1}  
  & \Pr\Big(\widehat{Q}_{n,m}^{(j)}(\rho(\theta,\theta_0)\geq R\eps_l)>\exp(-ml\varepsilon_l^2)+R\eps_l\Big)\nonumber
    \\
    &\le \frac{\mathbb{E}_0(\Pi_{n,m}(\rho(\theta,\theta_0)\geq R\varepsilon_l|G_j)\mathbf{1}_{\mathbb{B}_l\cap \mathbb{A}_l})}{\exp(-ml\varepsilon_l^2-v_l)} +P(\mathbf{1}_{\mathbb{A}_l^c}>0)+P(\mathbf{1}_{\mathbb{B}_l^c}>0)+\frac{\mathbb{E}_0\Big[\mathrm{KL}(\widehat{Q}_{n,m}^{(j)},\Pi_{n,m}^{\vert G_j\vert})\Big]}{v_lR\eps_l}\nonumber
    \\
    &=I_1+I_2+I_3+I_4.
\end{align}
Now, we bound each of the terms $I_1,$ $I_2$, $I_3$ and $I_4.$ For the term $I_1$, we notice that,
on the event $\mathbb{A}_l\cap \mathbb{B}_l$ it is satisfied 
$$\int_{\rho(\theta,\theta_0)>R\varepsilon_l}\Big(\prod_{j\in G_j}\frac{p_\theta}{p_0}(X_j)\Big)^m d\Pi(\theta)\le \exp(-\mathfrak{c}_1^lR^2ml\varepsilon_l^2)$$
and,
\begin{align*}
    \int\Big(\prod_{j\in G_j}\frac{p_\theta}{p_0}(X_j)\Big)^m d\Pi(\theta)
    \ge& \exp(-ml\varepsilon_l^2-\mathrm{KL}(Q_{n,m}^{(j)^*},\Pi)).
\end{align*}
In consequences,
\begin{align}
\label{equation2}
    \Pi_{n,m}(\rho(\theta,\theta_0)\geq R\varepsilon_l|G_j)\mathbf{1}_{\mathbb{B}_l\cap \mathbb{A}_l}\le \exp( ml\varepsilon_l^2+\mathrm{KL}(Q_{n,m}^{(j)^*},\Pi)-\mathfrak{c}_1^lR^2ml\varepsilon_l^2)
\end{align}
Therefore, by Equation (\ref{equation2}), and by Assumption \ref{Prior-cond}, we have that
\begin{align}
    \label{equation4}
    I_1
\le&\exp(3ml\varepsilon_l^2+\mathfrak{c}_5^l l\varepsilon_{l}^2-\mathfrak{c}_1^lR^2ml\varepsilon_l^2).
\end{align}
In the following line of arguments we proceed to bound $I_2$ and $I_3.$ For $I_2$, Since if Theorem \ref{th:wong} holds with $\zeta:=\varepsilon_l$, then it also holds with $\zeta:=L\varepsilon_l$ for any $L\ge1,$ we have that by Assumption \ref{test-cond}
\begin{align*}
    P_0^{(l)}(\mathbb{A}_l)=&
    P_0^{(l)}\Big(\int_{\rho(\theta,\theta_0)>R\varepsilon}\Big(\prod_{j\in G_j}\frac{p_\theta}{p_0}(X_j)\Big)^m d\Pi(\theta)\le \exp(-\mathfrak{c}_1^lR^2ml\varepsilon_l^2)\Big)
    \\
    \ge& 
    1-4\exp(-(\mathfrak{c}_2^l)^2R^2l\varepsilon_l^2).
\end{align*}
This is,
\begin{align}
    \label{equation5}
    I_2\le4\exp(-(\mathfrak{c}_2^l)^2R^2l\varepsilon_l^2).
\end{align}
Finally, we bound the term $I_3.$ For this purpose, we apply Lemma \ref{lemma1} with \break $\mathrm{F}=\log \Big(\Big(\prod_{j\in G_j}\frac{p_\theta}{p_0}(X_j)\Big)^m\Big),$ $ Q_0=Q_{n,m}^{(j)^*}$ and $\Pi_0=\Pi$ to obtain
$$
\log \int \Big(\prod_{j\in G_j}\frac{p_\theta}{p_0}(X_j)\Big)^m \mathrm{d} \Pi(\boldsymbol{\theta}) \geq \int \log \left(\Big(\prod_{j\in G_j}\frac{p_\theta}{p_0}(X_j)\Big)^m\right) \mathrm{d} Q_{n,m}^{(j)^*}(\mathrm{~d} \boldsymbol{\theta})-\mathrm{KL}(Q_{n,m}^{(j)^*}, \Pi) .
$$
Hence, we have
$$
\begin{aligned}
\mathrm{P}_{0}^{(l)}\left(\mathbb{B}_l^C\right) & =\mathrm{P}_{0}^{(l)}\left(\log \int \Big(\prod_{j\in G_j}\frac{p_\theta}{p_0}(X_j)\Big)^m \mathrm{d} \Pi(\boldsymbol{\theta})<-ml\varepsilon_l^2-\mathrm{KL}(Q_{n,m}^{(j)^*}, \Pi)\right) \\
& \leq \mathrm{P}_{0}^{(l)}\left(-\int \log \left(\Big(\prod_{j\in G_j}\frac{p_0}{p_\theta}(X_j)\Big)^m\right) \mathrm{d} Q_{n,m}^{(j)^*}(\mathrm{~d} \boldsymbol{\theta}) \leq-ml\varepsilon_l^2\right) \\
& \leq \mathrm{P}_{0}^{(l)}\left(\int 0 \vee \log \Big(\prod_{j\in G_j}\frac{p_0}{p_\theta}(X_j)\Big) \mathrm{d} Q_{n,m}^{(j)^*}(\boldsymbol{\theta}) \geq l\varepsilon_l^2\right) \\
& \leq \frac{1}{l\varepsilon_l^2} \mathrm{P}_{0}^{(l)}\left[\int 0 \vee \log \Big(\prod_{j\in G_j}\frac{p_0}{p_\theta}(X_j)\Big) \mathrm{d} Q_{n,m}^{(j)^*}(\boldsymbol{\theta})\right] \\
& \leq \frac{1}{l\varepsilon_l^2} Q_{n,m}^{(j)^*}\left[\mathrm{KL}\left(\mathrm{P}_{0}^{(l)}, \mathrm{P}_{\theta}^{(l)}\right)+\sqrt{\frac{1}{2}\mathrm{KL}\left(\mathrm{P}_{0}^{(l)}, \mathrm{P}_{\theta}^{(l)}\right)}\right] \\
& \leq \frac{1}{l\varepsilon_l^2}\left(2 Q_{n,m}^{(j)^*}\left[\mathrm{KL}\left(\mathrm{P}_{0}^{(l)}, \mathrm{P}_{\theta}^{(l)}\right)\right]+1\right) .
\end{aligned}
$$
Here we use Markov's inequality for the fourth line and use Fubini's theorem and Lemma B.13 of \cite{ghosal2017fundamentals} for the fifth line. For the last line, we use a simple inequality $z+\sqrt{z / 2} \leq z+(z \vee 1) \leq 2 z+1$ for $z \geq 0$. 
Thus, by Assumption \ref{Prior-cond}
\begin{align}
    \label{equation6}
    I_3=\mathrm{P}_{0}^{(l)}\left(\mathbb{B}_l^C\right)\le \frac{1}{l\varepsilon_l^2}+2\mathfrak{c}_5^{l}\frac{1}{m}
\end{align}
Finally, we analyze the term $I_4.$ Using Proposition \ref{prop1}, we get that
\begin{align}
\label{equation8}I_4=&\frac{\mathbb{E}_0\Big[\mathrm{KL}(\widehat{Q}_{n,m}^{(j)},\Pi_{n,m}^{\vert G_j\vert})\Big]}{v_lR\varepsilon_l}\le \frac{\mathfrak{c}_5^{l}l\varepsilon_l^2}{ml\varepsilon_l^2R\varepsilon_l}=\frac{\mathfrak{c}_5^{l}}{Rm\eps_l}.
\end{align}
To conclude, we observe that by Equation (\ref{equation1}), (\ref{equation4}), (\ref{equation5}), (\ref{equation6}), and (\ref{equation8}), we obtain that
\begin{align*}
    & \Pr\Big(\widehat{Q}_{n,m}^{(j)}(\rho(\theta,\theta_0)\geq R\eps_l)>\exp(-ml\varepsilon_l^2)+R\varepsilon_l\Big)\nonumber
    \\
    &\le\exp(3ml\varepsilon_l^2+\mathfrak{c}_5^l l\varepsilon_{l}^2-\mathfrak{c}_1^lR^2ml\varepsilon_l^2)
    \\
    &+4\exp(-(\mathfrak{c}_2^l)^2R^2l\varepsilon_l^2)+\frac{1}{l\varepsilon_l^2}+2\mathfrak{c}_5^{l}\frac{1}{m}+\frac{\mathfrak{c}_5^{l}}{Rm\eps_l}.
\end{align*}
Furthermore, using the fact that $l\varepsilon_l^2\ge 1$ and considering $R$ such that 
$R>\sqrt{\frac{6+2\mathfrak{c}_5^l}{\mathfrak{c}_1^l}},$
we get that
\begin{align}
    \label{equation9}
    & \Pr\Big(\widehat{Q}_{n,m}^{(j)}(\rho(\theta,\theta_0)\geq A_l\eps_l)>\exp(-ml\varepsilon_l^2)+l\varepsilon_l^2\Big)\nonumber
    \\
    &\le\exp(-\frac{\mathfrak{c}_1^l}{2}R^2l\varepsilon_l^2)+4\exp(-(\mathfrak{c}_2^l)^2R^2l\varepsilon_l^2)+\frac{1}{l\varepsilon_l^2}+2\mathfrak{c}_5^{l}\frac{1}{m}+\frac{\mathfrak{c}_5^{l}}{Rm\eps_l},
\end{align}
and the claim is followed.
\end{proof}
\newpage



\section{Proof of Theorem \ref{BVM-theorem}}
\begin{proof}
We now proceed to prove part \( a) \). To achieve this, we first analyze the Wasserstein metric median \( Q^*_{\text{Met,GG}} \).
By Theorem \ref{theorem1BVM} and Assumption \ref{assumption1BVM}, convergence in total variation distance implies convergence of expectations in \( P_{\theta_0} \)-probability. Specifically, we have,
\[
\left\|\int_{\Theta} \theta \left( d Q_{n,m}^{(j),\text{GG}}(\theta) - d N\left(\theta_0 + \frac{\Delta_{l, \theta_0}}{\sqrt{l}}, \frac{1}{l \cdot m} I^{-1}\left(\theta_0\right)\right)(\theta) \right) \right\|_2 \rightarrow 0 \quad \text{as } l \rightarrow \infty.
\]
Next, note that the total variation distance between two Gaussian distributions \( N(\mu_1, \Sigma) \) and \( N(\mu_2, \Sigma) \), sharing the same covariance matrix, is bounded by a multiple of \( \|\mu_1 - \mu_2\|_2 \). Therefore, we can replace \( \theta_0 + \frac{\Delta_{l, \theta_0}}{\sqrt{l}} \) in the above result with the mean,
\[
\bar{\theta}_{j, m}^{\text{GG}}(G_j) := \int_{\Theta} \theta \, d Q_{n,m}^{(j),\text{GG}}(\theta).
\]
This substitution allows us to reformulate the conclusion of Theorem \ref{theorem1BVM} as,
\[
\left\|Q_{n,m}^{(j),\text{GG}}(\cdot) - N\left(\bar{\theta}_{j, m}^{\text{GG}}(G_j), \frac{1}{l \cdot m} I^{-1}(\theta_0)\right)\right\|_{\mathrm{TV}} \rightarrow 0 \quad \text{as } l \rightarrow \infty,
\]
in \( P_{\theta_0} \)-probability.
Now, consider \( m = \lfloor n / l \rfloor \) as fixed, and let \( n, l \rightarrow \infty \). As before, let \( G_1, \ldots, G_m \) denote disjoint groups of i.i.d. observations from \( P_{\theta_0} \), each of cardinality \( l \). Recall that by the definition of \( Q^*_{\text{Met,GG}} \) in Equation (\ref{MetricMedian}), \( Q^*_{\text{Met,GG}} = Q_{n,m}^{(j_*),\text{GG}} \) for some \( j_* \leq m \), and its mean is given by \( \theta^*_{\text{GG}} := \bar{\theta}_{j_*, m}^{\text{GG}}(G_{j_*}) \).
By definition, we have:
\begin{align}
   & \left\|Q^*_{\text{Met,GG}} - N\left(\theta^*_{\text{GG}}, \frac{1}{l \cdot m} I^{-1}(\theta_0)\right)\right\|_{\mathrm{TV}} \nonumber
    \\
    &\leq \max_{j=1, \ldots, m} \left\|Q_{n,m}^{(j),\text{GG}}(\cdot) - N\left(\bar{\theta}_{j, m}^{\text{GG}}(G_j), \frac{1}{l \cdot m} I^{-1}(\theta_0)\right)\right\|_{\mathrm{TV}}.
\end{align}
Since the right-hand side converges to zero as \( l \to \infty \), we conclude:
\[
\left\|Q^*_{\text{Met,GG}} - N\left(\theta^*_{\text{GG}}, \frac{1}{l \cdot m} I^{-1}(\theta_0)\right)\right\|_{\mathrm{TV}} \rightarrow 0 \quad \text{as } n \to \infty.
\] 
This completes the proof for \( Q^*_{\text{Met,GG}} \). The proof for \( Q^*_{\text{Met,MF}} \) follows analogously by substituting \( Q_{n,m}^{(j),\text{GG}} \) with \( Q_{n,m}^{(j),\text{MF}} \), and replacing \( I^{-1}(\theta_0) \) with \( I^{\prime-1}(\theta_0) \), where \( I^{\prime-1}(\theta_0) \) is a diagonal matrix matching the diagonal elements of \( I^{-1}(\theta_0) \). All other steps remain identical, ensuring the same conclusion,
\[
\left\|Q^*_{\text{Met,MF}} - N\left(\theta^*_{\text{MF}}, \frac{1}{l \cdot m} I^{\prime-1}(\theta_0)\right)\right\|_{\mathrm{TV}} \rightarrow 0 \quad \text{as } n \to \infty.
\]
This completes the proof of part \( a) \).

For part \( b)\), we proceed to analyze the mean of the Wasserstein metric median \( Q^*_{\text{Met,GG}}\) under the assumptions of the theorem. By assumption, we have that \( \varepsilon_l \) satisfies \( l \varepsilon_l^2 \geq 1 \) and \( \sqrt{n} \leq m \). These conditions imply that, for any \( 1 \leq j \leq \lfloor(1-\kappa)m\rfloor+1 \), the following inequality holds,
\[
\exp\left(-\frac{\mathfrak{c}_1^l}{2} R^2 l \varepsilon_l^2\right) + 4 \exp\left(-(\mathfrak{c}_2^l)^2 R^2 l \varepsilon_l^2\right) + \frac{1}{l \varepsilon_l^2} + 2 \mathfrak{c}_5^l \frac{1}{m} + \frac{\mathfrak{c}_5^l}{R m \varepsilon_l} \leq \frac{1}{4}.
\]
Applying Theorem \ref{mainT}, this leads to the bound,
\[
\Pr\left(d_{W_{1,\rho}}(\widehat{Q}_{n,m}^{(j)\text{GG}}, \delta_0) \geq 2R\varepsilon_l + \exp(-ml\varepsilon_l^2)\right) \leq \frac{1}{4}.
\]
Using Theorem \ref{thm-rob-mm}, we then conclude:
\[
\Pr\left(d_{W_{1,\rho}}(Q^*_{\text{Met,GG}}, \delta_0) > 3\left(2R\varepsilon_l + \exp(-ml\varepsilon_l^2)\right)\right) \leq \left[ e^{(1 - \kappa) \psi\left( \frac{1/2 - \kappa}{1 - \kappa}, \frac{1}{4} \right)} \right]^{-m}.
\]
This implies that the Wasserstein distance between \( Q^*_{\text{Met,GG}} \) and \( \delta_0 \) satisfies,
\[
d_{W_{1,\rho}}(Q^*_{\text{Met,GG}}, \delta_0) \leq 3\left(2R\varepsilon_l + \exp(-ml\varepsilon_l^2)\right),
\]
with probability at least \( 1 - \left[ e^{(1 - \kappa) \psi\left( \frac{1/2 - \kappa}{1 - \kappa}, \frac{1}{4} \right)} \right]^{-m}.
\)
Finally, noting the relationship between the Wasserstein distance and the Euclidean norm, we have
\[
\|\theta_{\text{GG}}^* - \theta_0\|_2 \leq d_{W_{1,\rho}}(Q^*_{\text{Met,GG}}, \delta_0).
\]
This establishes the finite-sample confidence bound for \( \|\theta_{\text{GG}}^* - \theta_0\|_2 \), concluding the proof for \( \theta_{\text{GG}}^* \).
The proof for \( \theta_{\text{MF}}^* \) is entirely analogous.
This completes the proof of part \( b) \) and we conclude the result.
\end{proof}

\newpage
\section{Auxiliary Lemmas}
\begin{lemma}
    \label{lemma1}
     Let $\Theta$ be a measurable space. Then for any two distributions $Q_0, \Pi_0 \in \mathcal{P}(\Theta)$ and any measurable function $\mathrm{F}: \Theta \mapsto \mathbb{R}$,
$$
Q_0[\mathrm{~F}] \leq \mathrm{KL}\left(Q_0, \Pi_0\right)+\log \left(\Pi_0\left[\mathrm{e}^{\mathrm{F}}\right]\right) .
$$
In particular, for any measurable subset $\Theta^{\prime} \subset \Theta$ and positive constant $v>0$,
$$
Q_0\left(\Theta^{\prime}\right) \leq \frac{1}{v}\left\{\mathrm{KL}\left(Q_0, \Pi_0\right)+\mathrm{e}^v \Pi_0\left(\Theta^{\prime}\right)\right\}
$$
\end{lemma}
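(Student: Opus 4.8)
\textbf{Proof proposal for Lemma \ref{lemma1}.}
The plan is to prove the first (Donsker--Varadhan / Gibbs variational) inequality by introducing the exponentially tilted measure and exploiting nonnegativity of the KL divergence, then to deduce the second inequality by a clever choice of the test function $\mathrm{F}$.

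First I would dispose of the trivial cases: if $\Pi_0[\mathrm e^{\mathrm F}] = +\infty$ the first inequality is vacuous, and likewise if $Q_0 \not\ll \Pi_0$ then $\mathrm{KL}(Q_0,\Pi_0)=+\infty$ and there is nothing to prove; so assume $Q_0 \ll \Pi_0$ and $Z:=\Pi_0[\mathrm e^{\mathrm F}] \in (0,\infty)$. Define the probability measure $\Pi_{\mathrm F}$ by $\dfrac{d\Pi_{\mathrm F}}{d\Pi_0} = \dfrac{\mathrm e^{\mathrm F}}{Z}$. Since $Q_0\ll\Pi_0$ and $\Pi_{\mathrm F}\sim\Pi_0$, we have $Q_0\ll\Pi_{\mathrm F}$, and on a $Q_0$-full set the chain rule for Radon--Nikodym derivatives gives $\log\frac{dQ_0}{d\Pi_{\mathrm F}} = \log\frac{dQ_0}{d\Pi_0} - \mathrm F + \log Z$. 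Integrating against $Q_0$,
\[
\mathrm{KL}(Q_0,\Pi_{\mathrm F}) = \mathrm{KL}(Q_0,\Pi_0) - Q_0[\mathrm F] + \log Z .
\]
(Here one checks the integrals are well-defined: $Q_0[\mathrm F]$ need not be finite a priori, but the identity holds in $[-\infty,+\infty]$ after noting $Q_0[\mathrm F^-]<\infty$ follows from $\Pi_0[\mathrm e^{\mathrm F}]<\infty$ together with $\mathrm{KL}(Q_0,\Pi_0)<\infty$; if $\mathrm{KL}(Q_0,\Pi_0)=\infty$ we are in the trivial case.) Because $\mathrm{KL}(Q_0,\Pi_{\mathrm F})\ge 0$ (Gibbs' inequality / Jensen applied to $-\log$), rearranging yields $Q_0[\mathrm F] \le \mathrm{KL}(Q_0,\Pi_0) + \log\Pi_0[\mathrm e^{\mathrm F}]$, which is the first claim.

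For the second inequality, fix a measurable $\Theta'\subset\Theta$ and $v>0$, and apply the first inequality with $\mathrm F := v\,\mathbf 1_{\Theta'}$. Then $Q_0[\mathrm F] = v\,Q_0(\Theta')$. For the normalizing term, use the pointwise bound $\mathrm e^{v\mathbf 1_{\Theta'}(\theta)} \le 1 + \mathrm e^{v}\mathbf 1_{\Theta'}(\theta)$ (equality $\mathrm e^v\le 1+\mathrm e^v$ on $\Theta'$, and $1\le 1$ off $\Theta'$), so that $\Pi_0[\mathrm e^{\mathrm F}] \le 1 + \mathrm e^{v}\Pi_0(\Theta')$, and hence by $\log(1+x)\le x$ for $x\ge 0$,
\[
v\,Q_0(\Theta') \;\le\; \mathrm{KL}(Q_0,\Pi_0) + \log\!\bigl(1+\mathrm e^{v}\Pi_0(\Theta')\bigr) \;\le\; \mathrm{KL}(Q_0,\Pi_0) + \mathrm e^{v}\Pi_0(\Theta').
\]
Dividing through by $v>0$ gives $Q_0(\Theta') \le \tfrac1v\{\mathrm{KL}(Q_0,\Pi_0) + \mathrm e^{v}\Pi_0(\Theta')\}$, as desired.

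The only genuine subtlety — and the step I would write most carefully — is the measure-theoretic bookkeeping in the display above: justifying the chain rule for $\frac{dQ_0}{d\Pi_{\mathrm F}}$ on a $Q_0$-full set and ensuring each term in $\mathrm{KL}(Q_0,\Pi_{\mathrm F}) = \mathrm{KL}(Q_0,\Pi_0) - Q_0[\mathrm F] + \log Z$ is unambiguously defined (no $\infty-\infty$). Everything else is the standard tilting argument plus the elementary inequalities $\log(1+x)\le x$ and $\mathrm e^{v\mathbf 1_{\Theta'}}\le 1+\mathrm e^v\mathbf 1_{\Theta'}$.
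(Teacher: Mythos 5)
Your proof is correct and, for the first inequality, takes a genuinely different route from the paper: the paper simply invokes the Donsker--Varadhan duality formula $\log(\Pi_0[\mathrm e^{\mathrm F}])=\sup_{Q'\ll\Pi_0}\{Q'[\mathrm F]-\mathrm{KL}(Q',\Pi_0)\}$ as a known result (citing Lemma 2.2 of the reference used there), whereas you re-derive that inequality from scratch by exponential tilting, writing $d\Pi_{\mathrm F}/d\Pi_0=\mathrm e^{\mathrm F}/Z$, applying the chain rule for Radon--Nikodym derivatives, and using $\mathrm{KL}(Q_0,\Pi_{\mathrm F})\ge 0$. Your version is self-contained and makes the mechanism transparent; the paper's is shorter and defers the analytic content to the cited duality. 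For the second inequality both arguments are identical: $\mathrm F=v\,\mathbf 1_{\Theta'}$ together with $\mathrm e^{v}\Pi_0(\Theta')\ge\log(1+\mathrm e^{v}\Pi_0(\Theta'))\ge\log\Pi_0[\mathrm e^{\mathrm F}]$, which is exactly your chain $\mathrm e^{v\mathbf 1_{\Theta'}}\le 1+\mathrm e^{v}\mathbf 1_{\Theta'}$ plus $\log(1+x)\le x$.

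One small correction to your bookkeeping remark: what you need to rule out is $Q_0[\mathrm F^+]=\infty$ (which would make the left side $+\infty$ against a finite right side), not $Q_0[\mathrm F^-]=\infty$. In fact $Q_0[\mathrm F^-]$ can be infinite under your hypotheses (take $Q_0=\Pi_0$ and $\mathrm F$ very negative but with $\Pi_0[\mathrm e^{\mathrm F}]<\infty$); in that case $Q_0[\mathrm F]=-\infty$ and the inequality is trivial, so no harm is done. The finiteness of $Q_0[\mathrm F^+]$ does follow from $\mathrm{KL}(Q_0,\Pi_0)<\infty$ and $\Pi_0[\mathrm e^{\mathrm F}]<\infty$, e.g.\ via the pointwise bound $uv\le u\log u-u+\mathrm e^{v}$ with $u=dQ_0/d\Pi_0$ and $v=\mathrm F^+$, which gives $Q_0[\mathrm F^+]\le \mathrm{KL}(Q_0,\Pi_0)+\Pi_0[\mathrm e^{\mathrm F}]+1<\infty$. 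With that adjustment the tilting identity and the rearrangement are fully justified.
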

\begin{proof}
   Consider the case $Q_0$ is not absolutely continuous with respect to $\Pi_0$ then $\operatorname{KL}\left(Q_0, \Pi_0\right)=\infty$, so the result trivially holds. Now assume otherwise. Otherwise, recall the following well-known duality formula (see Lemma 2.2 of \cite{10.1214/19-AOS1855}),
$$
\log \left(\Pi_0\left[\mathrm{e}^{\mathrm{F}}\right]\right)=\sup _{Q^{\prime} \ll \Pi_0}\left[Q^{\prime}[\mathrm{F}]-\mathrm{KL}\left(Q^{\prime}, \Pi_0\right)\right],
$$
from where the first assertion is directly followed. For the proof of the last part of the lemma, we let $F(\boldsymbol{\theta}):=v \mathbf{1}\left(\boldsymbol{\theta} \in \Theta^{\prime}\right)$. Then we have
$$
\mathrm{e}^v \Pi_0\left(\Theta^{\prime}\right) \geq \log \left(1+\mathrm{e}^v \Pi_0\left(\Theta^{\prime}\right)\right) \geq \log \left(\int \mathrm{e}^{\mathrm{F}(\boldsymbol{\theta})} \mathrm{d} \Pi_0(\boldsymbol{\theta})\right),
$$
which completes the proof.
\end{proof}
\newpage
\section{Proof of Proposition \ref{prop1}}
\begin{proof}
 For any distribution $Q \in \mathcal{Q}$, we have the following series of inequalities

\begin{align*}
\mathrm{P}_{0}^{(l)} & {\left[\mathrm{KL}\left(Q, \Pi_{n,m}\left(\cdot \mid G_n\right)\right)\right] } \\
& =\int \mathrm{P}_{0}^{(l)}\left[\log \left(\frac{\int\limits_{\Theta}\left(\prod_{i\in G_j} p_\theta(X_i)\right)^m d\Pi(\theta) \mathrm{d} Q(\boldsymbol{\theta})}{
\left(\prod_{i\in G_j} p_\theta(X_i)\right)^m
\mathrm{d} \Pi(\boldsymbol{\theta})}\right)\right] \mathrm{d} Q(\boldsymbol{\theta}) \\
& =\mathrm{KL}\left(Q, \Pi\right)+\int \mathrm{P}_{0}^{(l)}\left[\log \left(\frac{\left(\prod_{i\in G_j} p_0(X_i)\right)^m}{\left(\prod_{i\in G_j} p_\theta(X_i)\right)^m}\right)\right] \mathrm{d} Q(\boldsymbol{\theta})
\\
&
+\mathrm{P}_{0}^{(l)}\left[\log \left(\frac{\int\limits_{\Theta}\left(\prod_{i\in G_j} p_\theta(X_i)\right)^m d\Pi(\theta)}{\left(\prod_{i\in G_j} p_0(X_i)\right)^m}\right)\right]
\\
& =\mathrm{KL}\left(Q, \Pi\right)+m\int \mathrm{P}_{0}^{(l)}\left[\log \left(\frac{\prod_{i\in G_j} p_0(X_i)}{\prod_{i\in G_j} p_\theta(X_i)}\right)\right] \mathrm{d} Q(\boldsymbol{\theta})
\\
&+\mathrm{P}_{0}^{(l)}\left[\log \left(\frac{\int\limits_{\Theta}\left(\prod_{i\in G_j} p_\theta(X_i)\right)^m d\Pi(\theta)}{\left(\prod_{i\in G_j} p_0(X_i)\right)^m}\right)\right]
\\
& =\mathrm{KL}\left(Q, \Pi\right)+mQ\left[\mathrm{KL}\left(\mathrm{P}_{0}^{(l)}, \mathrm{P}_{\theta}^{(l)}\right)\right]+\mathrm{P}_{0}^{(l)}\left[\log \left(\frac{\int\limits_{\Theta}\left(\prod_{i\in G_j} p_\theta(X_i)\right)^m d\Pi(\theta)}{\left(\prod_{i\in G_j} p_0(X_i)\right)^m}\right)\right].
\end{align*}
Then, we observe that by Jensen Inequality 
$$\mathrm{P}_{0}^{(l)}\left[\log \left(\frac{\int\limits_{\Theta}\left(\prod_{i\in G_j} p_\theta(X_i)\right)^m d\Pi(\theta)}{\left(\prod_{i\in G_j} p_0(X_i)\right)^m}\right)\right]\le\log\left[\mathrm{P}_{0}^{(l)} \left(\frac{\int\limits_{\Theta}\left(\prod_{i\in G_j} p_\theta(X_i)\right)^m d\Pi(\theta)}{\left(\prod_{i\in G_j} p_0(X_i)\right)^m}\right)\right]\le 0,$$
and in consequence,
$$\mathrm{P}_{0}^{(l)} {\left[\mathrm{KL}\left(Q, \Pi_{n,m}\left(\cdot \mid G_n\right)\right)\right] } \le \mathrm{KL}\left(Q, \Pi\right)+mQ\left[\mathrm{KL}\left(\mathrm{P}_{0}^{(l)}, \mathrm{P}_{\theta}^{(l)}\right)\right].$$
Thus, by the definition of $\widehat{Q}_n$,
$$
\begin{aligned}
\mathrm{P}_{0}^{(l)}\left[\mathrm{KL}\left(\widehat{Q}_{n,m}^{(j)}, \Pi_{n,m}\left(\cdot \mid G_j\right)\right)\right] & =\mathrm{P}_{0}^{(l)}\left[\inf _{Q \in \mathcal{Q}} \mathrm{KL}\left(Q, \Pi_{n,m}\left(\cdot \mid G_j\right)\right)\right] \\
& \leq \inf _{Q \in \mathcal{Q}} \mathrm{P}_{0}^{(l)}\left[\mathrm{KL}\left(Q, \Pi_{n,m}\left(\cdot \mid G_j\right)\right)\right] \\
& \leq \inf _{Q \in \mathcal{Q}}\left\{\mathrm{KL}\left(Q, \Pi\right)+mQ\left[\mathrm{KL}\left(\mathrm{P}_{0}^{(l)}, \mathrm{P}_{\theta}^{(l)}\right)\right]\right\},
\end{aligned}
$$
which proves the first desired result. The second assertion. follows from
$$
\inf _{Q \in \mathcal{Q}}\left\{\mathrm{KL}\left(Q, \Pi\right)+mQ\left[\mathrm{KL}\left(\mathrm{P}_{0}^{(l)}, \mathrm{P}_{\theta}^{(l)}\right)\right]\right\}\le \mathrm{KL}\left(Q_{n, m}^{(j)^*}, \Pi\right)+mQ_{n, m}^{(j)^*}\left[\mathrm{KL}\left(\mathrm{P}_{0}^{(l)}, \mathrm{P}_{\theta}^{(l)}\right)\right],
$$
and Assumption \ref{Prior-cond}.
\end{proof}

\vskip 0.2in
\bibliography{bibliography1,bibliography2,papers}

\end{document}